\documentclass[twoside]{article}
\usepackage[dvipsnames]{xcolor}
\usepackage{graphicx}
\usepackage{amsthm}
\usepackage{amsmath}
\usepackage{mathtools}
\usepackage{mathrsfs}
\usepackage{amssymb}
\usepackage{lipsum}
\usepackage{etoolbox}
\usepackage[linesnumbered,lined,boxed,ruled]{algorithm2e}
\usepackage[section]{placeins}
\usepackage[breaklinks,colorlinks,citecolor=blue,linkcolor=blue,urlcolor=blue]{hyperref}
\usepackage[all]{hypcap}
\usepackage{cancel}
\usepackage{csquotes}
\usepackage[toc,page]{appendix}
\usepackage{multirow}
\usepackage{datetime}
\usepackage{times}
\usepackage[format=plain, labelfont={bf,it}, textfont=it]{caption}
\usepackage{tcolorbox}
\usepackage{natbib, bibentry}
\usepackage{orcidlink}
\usepackage{fancyhdr}
\usepackage{enumitem}

\usepackage{scalerel}
\usepackage{stackengine,wasysym}

\newdateformat{mdydate}{\monthname[\THEMONTH] \THEDAY, \THEYEAR}
\newdateformat{monthyeardate}{\monthname[\THEMONTH] \THEYEAR}
\newdateformat{vmonthyeardate}{\THEYEAR.\THEMONTH.\THEDAY}

\newcommand{\be}{\begin{equation}}
\newcommand{\ee}{\end{equation}}
\newcommand{\bea}{\begin{eqnarray}}
\newcommand{\eea}{\end{eqnarray}}

\usepackage{geometry}
\graphicspath{{figures}}

\newcommand{\volume}{3}
\newcommand{\firstpage}{1}
\newcommand{\lastpage}{32}
\newcommand{\yyyy}{2026}
\newcommand{\mm}{September}
\newcommand{\dd}{2}
\newcommand{\authors}{J.~M.~Diederik Kruijssen}
\newcommand{\fulltitle}{Flawed in Nature, Perfect through Evolution}
\newcommand{\shorttitle}{Flawed in Nature, Perfect through Evolution}

\newcommand{\doi}{10.70235/allora.0x\volume\ifnum\numexpr\firstpage<10 000\else\ifnum\numexpr\firstpage<100 00\else\ifnum\numexpr\firstpage<1000 0\fi\fi\fi\firstpage}
\fancypagestyle{firstpage}{%
    \fancyhead[L]{\footnotesize Allora Decentralized Intelligence \textbf{\volume}, \firstpage--\lastpage; \yyyy\ \mm\ \dd}
    \fancyhead[R]{\footnotesize doi:\href{https://doi.org/\doi}{\textcolor{blue}{\doi}}}
    \fancyfoot{}
    
}

\AtBeginDocument{\thispagestyle{firstpage}}

\newtheorem{theorem}{Theorem}

\begin{document}

\title{\fulltitle}
\author{\authors}
\date{\monthyeardate{\today}}


\vskip30mm
\begin{center}
\begin{minipage}{170mm}
\begin{center}
\vskip5mm
{\fontsize{15pt}{15pt}\textbf{Flawed in Nature, Perfect through Evolution}}
\vskip5mm
J.~M.~Diederik Kruijssen$^{\orcidlink{0000-0002-8804-0212}}$
\vskip1mm
\textit{Allora Foundation}
\end{center}
\end{minipage}
\end{center}
\vspace{3mm}

\begin{abstract}
\noindent
The performance of artificial intelligence (AI) and machine learning (ML) models degrades when the problem they were trained on drifts. This is a near-universal feature of real-world problems, which often change unpredictably. Biological evolution has achieved intelligence by overcoming this obstacle through natural selection acting on heritable variation. AI/ML techniques have long incorporated forms of natural selection, but it has been challenging to maintain model diversity as optimization naturally drives convergence. Here we show that a swarm of AI/ML models subjected to deliberate mutations of their model coefficients away from optimality can reliably and sustainably improve performance in changing environments by acting as a statistical hedge against non-stationarity. We call this mechanism `Flawed in Nature, Perfect through Evolution', reflecting that the collective performance gain goes at the expense of individual performance. We prove via four theorems that the resulting regret reduction is guaranteed under general conditions, establishing the Flawed-in-Nature mechanism as a generalizable design principle for AI/ML systems. We validate these results on synthetic linear regression tasks, demonstrating that the mutated swarm delivers the best model in $\sim80\%$ of environment changes and that inference synthesis successfully translates this individual advantage into a collective one. The mechanism proves to be most effective when the mutation drift rate matches the drift rate of the environment. We outline a simple, adaptive controller that enables practical applications by tuning the mutation drift rate to match the unknown drift rate of the environment. The close analogy of the Flawed-in-Nature mechanism to biological evolution suggests it may have been a critical missing ingredient for the organic discovery of AI forms that more closely mimic biological intelligence.
\end{abstract}
\vspace{3mm}


\section{Introduction} \label{sec:intro}
The fields of artificial intelligence (AI) and machine learning (ML) attempt to generate forms of intelligence that did not arise through natural processes. Evaluating achievements in these fields often requires formalizing the concept of intelligence in a way that is not necessarily tied to natural intelligence. Numerous definitions of intelligence have been proposed, perhaps most famously by \citet{legg07}, who define intelligence as the ability to achieve goals in a wide range of environments. While Legg-Hutter intelligence is not directly computable due to its unbounded summation over all possible environments, it provides a useful conceptual starting point for broader reflection on the nature of (machine) intelligence. One of the key requirements for achieving goals in real-world settings is the ability to generalize to unfamiliar circumstances. This requires concrete abilities, such as originality, creativity, foresight, and many others. Each of these qualities contributes to an agent's intelligence by letting it adapt to its (possibly unknown) environment.

The recent success of transformer-based models in natural language processing and computer vision \citep[e.g.][]{vaswani2017attention} has reignited debates about the nature of intelligence, heralding the achievements of GPT, Claude, Gemini, and other frontier models as indications of true intelligence or even consciousness. The echoes of \citet{searle80}'s Chinese Room persist, as impressive outputs need not imply genuine adaptation to novel circumstances. Large language models (LLMs) are able to generate text that is often indistinguishable from human-generated content, but remain fundamentally constrained by their training data. Originality, creativity, and foresight under changing environments thus pose a fundamental challenge to monolithic AI models. In this paper, we prove that this challenge is formally irreducible for any single model, but can be overcome by a population of deliberately diversified models.

Arguably, the emergence of intelligence has been solved in nature. Insofar as our species can be deemed intelligent, our standard is the only practical point of reference we have for defining intelligence. It is therefore natural to ask how nature generated intelligence and to try and replicate these processes in artificial systems.

Evolution \citep{darwin59} stands at the basis of intelligence. It relies on the two fundamental mechanisms of natural selection and heritable variation. Natural selection provides the feedback loop needed to evaluate performance (originally `fitness') and eliminate individuals that do not perform well. In AI, forms of natural selection are routinely achieved through ensemble methods or decentralized learning \citep[e.g.][]{breiman96,dietterich00,rao21,kruijssen24}. However, heritable variation is not naturally present in modern AI systems.

The challenge with heritable variation is that many AI or ML models are single, monolithic entities that are intrinsically incapable of generating variation. Even when considering a population (or swarm) of models, their collective goal of optimizing a loss function given a set of training data implies a convergent pressure that limits variation over time, rather than promoting it. This convergence has certain parallels with the evolution of life, where natural selection considered in isolation also increases the homogeneity of a population over time. Nature solves this problem by enhancing variation through two mechanisms: genetic recombination and mutation. The former is achieved through sexual reproduction, which combines the genetic material of two individuals. It is less suitable for artificial systems, because models typically do not generate offspring through parameter combination (although interesting exceptions exist, see e.g.\ \citealt{stanley02,wortsman22}). However, the process of mutation is a natural way to introduce variation into a population of models.

For single, monolithic models, mutation is not constructive. After all, it perturbs the model away from optimality achieved through training. This means that any form of mutation is inherently detrimental to the model's performance. However, when considering a population of models, the goal changes from the optimality of a single entity to achieving optimality across the swarm. Any single model may be flawed, but the swarm's collective fitness is boosted by mutation-induced diversity, potentially to the point that a swarm of mutated models outperforms a swarm of original, optimized models. This is the central idea presented in this paper, which we name `Flawed in Nature, Perfect through Evolution' (or `Flawed-in-Nature' for short). The parallels with the evolution of life are evident and deliberate.

Analogous ideas have been applied in ensemble methods, which often generate model diversity through random perturbations of hyperparameters \citep[e.g.][]{jaderberg17}, random initialization \citep{lakshminarayanan17}, random feature subsampling \citep{breiman01}, independent fine-tuning runs \citep[e.g.][]{wortsman22}, or inference-time noise \citep[e.g.][]{srivastava14,gal16}. Similarly, evolutionary computation techniques have typically employed mutations to search for optima in a static environment, where population diversity is a transient way of reaching convergence rather than a persistent objective \citep[e.g.][]{stanley02,salimans17}. Continual learning methods \citep[e.g.][]{kirkpatrick17,zenke17} instead address learning in changing environments. They do so by updating a single model, while mitigating the loss of previously learned knowledge (catastrophic forgetting) through weight consolidation. The Flawed-in-Nature mechanism takes a fundamentally different approach. Rather than adapting one model to track a changing environment, it maintains a population of deliberately diversified models and relies on post-hoc selection of the best-performing member.

No previous work has attempted to systematically generate heritable (i.e.\ persistent) variation through deliberate, cumulative parameter mutations, while providing theoretical guarantees of performance improvement under non-stationarity. It is the goal of this paper to demonstrate that improvement. The benefits of mutations are expected to apply specifically in environments that change unpredictably (often referred to as `concept drift' in the literature, see e.g.\ \citealt{gama14}). This is a fundamental difference between the Flawed-in-Nature mechanism and previous work on ensemble methods, evolutionary computation, and continual learning. As in nature, we expect deviations from optimality to be advantageous mostly as a statistical hedge against non-stationarity.

The structure of this paper is as follows. In \S\ref{sec:proof}, we define the Flawed-in-Nature mechanism and demonstrate its viability by proving a set of four theorems that systematically quantify the expected performance of a single model and a swarm of models in static and changing environments. In \S\ref{sec:inference}, we describe how the Flawed-in-Nature mechanism can be realized in practice through a simple inference-synthesis layer that aggregates the outputs of a swarm of mutated models. In \S\ref{sec:experiments}, we validate the theoretical predictions through numerical experiments on a suite of synthetic linear regression problems. In \S\ref{sec:discussion}, we discuss the limitations of the experiments, the outline of a self-learning controller that enables the practical application of the Flawed-in-Nature mechanism to realistic environments with unknown drift, and the impact of mutating AI/ML models on the future of swarm intelligence. We conclude in \S\ref{sec:conclusion} with a summary of our main findings and their broader implications.

\section{Optimization of Single Models and Swarms Differs Fundamentally} \label{sec:proof}
In this section, we show that model diversity is unnecessary when applied to a static environment. The added value of the Flawed-in-Nature mechanism pertains to the practically relevant case in which the environment changes unpredictably, turning stochastic parameter mutations into a lasting strategic advantage. We prove its viability by proving a set of four theorems that revolve around the following central hypotheses:
\begin{enumerate}[itemsep=-3pt]
  \item A single model in a static environment achieves empirical risk minimization (ERM) and converges to the risk minimizer.
  \item A single model in a changing environment incurs linear regret and thereby reaches obsolescence.
  \item A single model in a changing environment is still information-theoretically optimal given its access to past observations only, and its excess loss is therefore irreducible.
  \item A swarm of mutating models in a changing environment achieves strictly smaller regret than any single model, thereby breaking the linear regret bound that constrains individual models.
\end{enumerate}
These hypotheses imply that the Flawed-in-Nature mechanism introduces a form of model population diversity that becomes advantageous once the environment experiences drift. The key elements of the mechanism are illustrated in \autoref{fig:flawed_in_nature}.
\begin{figure}[!t]
  \centering
  \includegraphics[width=\textwidth]{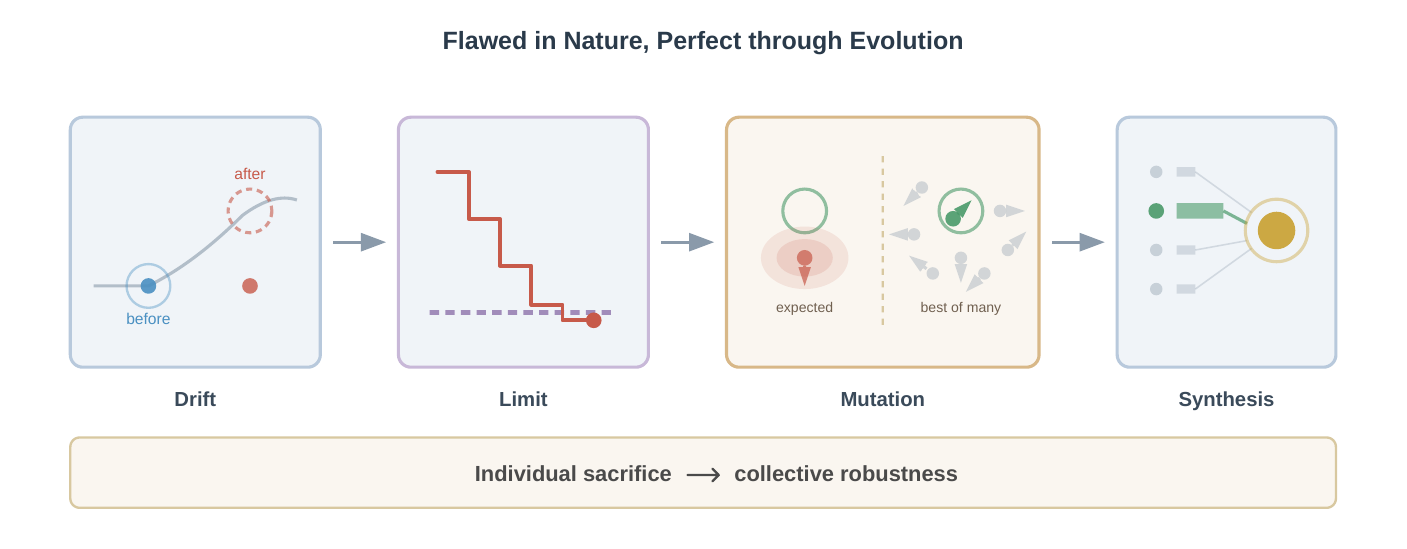}
  \caption{Schematic illustration of the Flawed-in-Nature mechanism. When conditions change unpredictably, a model trained on past data becomes obsolete as the environment has drifted, and the model has been unable to adapt. A single model therefore experiences an error with each unexpected shift that retraining cannot prevent, resulting in a fundamental limit to its accuracy. Introducing random variation (`mutations') into model parameters makes the expected accuracy of any individual model worse, yet offers a critical advantage to the swarm: its ensuing model diversity implies that some mutated model will be well-positioned for the new conditions that emerge. Weighting model outputs by recent accuracy ensures the best-positioned model variant dominates the aggregate prediction. This results in an elevated degree of accuracy that no single model can sustain, and is achieved through a form of random model differentiation that harms individuals but benefits the collective.}
  \label{fig:flawed_in_nature}
\end{figure}

\subsection{Empirical-risk Minimisation of a Single Model in a Static Environment}
\label{sec:single_static}
The central hypothesis of the Flawed-in-Nature mechanism is that persistent mutations of model parameters generate a form of model population diversity that becomes advantageous once the environment experiences drift. We quantify that advantage by first establishing a precise benchmark describing the best attainable performance in the absence of such drift. This is accomplished by showing that, under standard regularity conditions, ERM converges to the risk minimizer within a fixed hypothesis class whenever the joint distribution of features and targets is time-invariant.

\subsubsection{Notation and Basic Definitions}
Let $(X,Y)$ be random variables taking values in $\mathbb R^{d}\times\mathbb R$ and distributed according to a fixed probability measure~$\mathcal D$. Denote by $(x_i,y_i)_{i\ge1}$ an independent and identically distributed (i.i.d.) sample from~$\mathcal D$. Unless stated otherwise, the underlying probability space is suppressed from the notation, i.e.\ probabilities and expectations refer to~$\mathcal D$. Formally, we have:
\be \label{eq:data_model}
  (X,Y) \sim \mathcal D , \quad Y = f(X)+\epsilon , \quad \mathbb E[\epsilon\mid X] = 0 ,
\ee
where $f$ is the true regression function and $\epsilon$ represents the irreducible error. Under squared    
loss, the irreducible Bayes risk equals $\mathrm{Var}(\epsilon)$.

A learning algorithm represents a function drawn from a hypothesis class, which represents a family of candidate predictors $\mathcal H=\{h_\theta:\mathbb R^{d}\to\mathbb R\mid\theta\in\Theta\}$, with $\Theta$ the parameter space spanning all possible model configurations. No specific parametric form is imposed. Throughout, we make the standard assumption that $\mathcal H$ is measurable and separable.

We then define a measurable loss function $\ell:\mathbb R\times\mathbb R\to\mathbb R_{\ge0}$ with $\sup_{\hat{y},y}\ell(\hat{y},y)\le B<\infty$ with $B\ge 0$.\footnote{Technically, a finite second moment suffices \citep[e.g.][]{pollard84}, but we assume bounded losses for simplicity.} This function quantifies how wrong a prediction $\hat{y}$ is when the truth is $y$. The risk (i.e.\ the expected loss on new data) of any predictor $h\in\mathcal H$ is then defined as:
\be \label{eq:true_risk}
  R(h) = \mathbb E_{(X,Y)\sim\mathcal D}\!\left[\ell\left(h(X),Y\right)\right] .
\ee
Because $\mathcal D$ is unknown, we approximate \autoref{eq:true_risk} by the empirical loss on the first $n$ observations:
\be \label{eq:empirical_risk}
  \widehat R_n(h) = \frac1n\sum_{i=1}^{n}\ell\left(h(x_i),y_i\right) .
\ee
Given these observations, the hypothesis or model that minimizes the empirical risk is defined as:
\be \label{eq:erm_minimizer}
  h_n = \arg\min_{h\in\mathcal H}\widehat R_n(h) .
\ee
This is the best model according to the training data.

The proof that ERM converges to the risk minimizer relies on two statistical learning-theoretic concepts that quantify the complexity of a model class.
\begin{enumerate}
  \item
  The Vapnik-Chervonenkis (VC) dimension measures the complexity of a model class by counting how many different ways the models can classify a set of points. Specifically, it is the largest number of points that can be classified in all possible ways by models in the class (e.g.\ linear separators in $\mathbb R^{2}$ separate at most three points). For real-valued functions (like regression models), we consider whether the function exceeds a threshold, effectively converting it to a binary classification problem.

  Formally, the VC-dimension of a class $\mathcal H$ of $\{0,1\}$-valued functions is the maximal integer~$m$ such that some set of $m$ points in $\mathbb R^{d}$ is assigned all $2^{m}$ binary labelings by members of $\mathcal H$. For real-valued classes one considers the induced family $\{\mathbf 1_{h(x)\ge t}:h\in\mathcal H,\,t\in\mathbb R\}$ to first convert the real-valued functions to binary labelings.
  
  \item
  A class $\mathcal H$ is Glivenko-Cantelli (GC) if the empirical risk converges uniformly to the true risk as the sample size grows. This means that for any model in the class, the difference between its performance on the training data and its expected performance on new data becomes arbitrarily small as we collect more data.
  
  Formally, this requires that the maximum difference between empirical and true risk across all models in the class converges to zero:
  \be
    \sup_{h\in\mathcal H}\left|\widehat R_n(h)-R(h)\right|
    \xrightarrow[n\to\infty]{\mathrm{a.s.}} 0 ,
  \ee
  where `a.s.' denotes almost sure convergence.
  
  This property ensures that training performance reliably predicts future performance. A finite VC-dimension is a sufficient condition for the GC property if the losses are bounded \citep[e.g.][]{vapnik98}. However, the GC property and the convergence results that follow only hold when the data-generating distribution $\mathcal D$ is static (time-invariant). When the distribution evolves over time, these guarantees no longer apply, which is precisely why the Flawed-in-Nature approach becomes advantageous in changing environments (see \S\ref{sec:swarm_change}).
\end{enumerate}

\subsubsection{Consistency of ERM}
We now prove that ERM converges to the risk minimizer within a fixed hypothesis class whenever the joint distribution of features and targets is time-invariant.
\begin{theorem}[ERM consistency under stationarity] \label{thm:erm_consistency}
Let the loss $\ell$ be bounded and the hypothesis class $\mathcal H$ possess a finite VC-dimension (i.e.\ it is GC), as appropriate for a static environment with a time-invariant data-generating distribution $\mathcal D$. Define $R(\,\cdot\,)$ and $h_n$ as in \autoref{eq:true_risk} and \autoref{eq:erm_minimizer}, and set $h^\star:=\arg\min_{h\in\mathcal H}R(h)$. Then we have:
\be \label{eq:erm_consistency}
  R(h_n) \xrightarrow[n\to\infty]{\mathrm{a.s.}} R(h^\star) .
\ee
In other words, as the sample grows, the empirically-trained model reaches the lowest possible expected loss attainable inside $\mathcal H$.
\end{theorem}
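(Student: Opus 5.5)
The approach is the classical three-term (or ``sandwich'') decomposition of excess risk, combined with the uniform convergence supplied by the GC property. Write $\Delta_n := \sup_{h\in\mathcal H}|\widehat R_n(h)-R(h)|$. By the hypothesis of the theorem, $\mathcal H$ has finite VC-dimension and $\ell$ is bounded by $B$. As recalled in the definition of the GC property (citing \citealt{vapnik98}), this implies $\Delta_n \to 0$ almost surely. We take this as given. If one wanted to be self-contained, one would reduce to the loss class $\{(x,y)\mapsto \ell(h(x),y)\}$, invoke a VC/Sauer--Shelah bound on its growth function together with symmetrization and Hoeffding's inequality (using boundedness by $B$), and obtain a tail bound of the form $\Pr(\Delta_n>\varepsilon)\le C\,n^{V} e^{-n\varepsilon^2/(cB^2)}$. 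This is summable in $n$, so Borel--Cantelli upgrades convergence in probability to almost sure convergence.

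Given $\Delta_n\to0$ a.s., the main deterministic step is the following chain:
\begin{equation*}
0 \le R(h_n)-R(h^\star) = \bigl[R(h_n)-\widehat R_n(h_n)\bigr] + \bigl[\widehat R_n(h_n)-\widehat R_n(h^\star)\bigr] + \bigl[\widehat R_n(h^\star)-R(h^\star)\bigr] \le 2\Delta_n .
\end{equation*}
The lower bound $0$ holds because $h^\star$ minimizes $R$ over $\mathcal H$ and $h_n\in\mathcal H$. The first and third brackets are each at most $\Delta_n$ in absolute value. The middle bracket is $\le 0$ by the definition of $h_n$ as the empirical minimizer in \autoref{eq:erm_minimizer}. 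Letting $n\to\infty$ on the almost-sure event where $\Delta_n\to0$ gives \autoref{eq:erm_consistency}.

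Two technical points need a remark, and I expect them to be the main obstacle. Existence of the minimizers $h_n$ and $h^\star$ is not guaranteed for a general $\mathcal H$. I would handle this by allowing approximate minimizers: take $h^\star_\delta$ with $R(h^\star_\delta)\le\inf_{\mathcal H}R+\delta$ and $h_n$ with $\widehat R_n(h_n)\le\inf_{\mathcal H}\widehat R_n+\delta_n$, where $\delta_n\to0$. The chain above then yields $R(h_n)-\inf_{\mathcal H} R\le 2\Delta_n+\delta_n+\delta$, and one sends $\delta\to0$ at the end. Measurability of $\Delta_n$, a supremum over a possibly uncountable class, is addressed by the standing assumption that $\mathcal H$ is separable: the supremum can be taken over a countable dense subclass, so $\Delta_n$ is a genuine random variable and the almost-sure statements are meaningful.

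Finally, the stationarity assumption enters only through the i.i.d.\ sampling from a fixed $\mathcal D$, which is what makes Hoeffding, symmetrization and the GC property valid. This is worth flagging, since it is exactly the step that fails in the drifting setting treated later.
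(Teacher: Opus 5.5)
Your proposal is correct and follows the paper's proof essentially verbatim. It uses the same three-term decomposition of $R(h_n)-R(h^\star)$, with the outer terms bounded by $\Delta_n$ and the middle term nonpositive, which gives $0\le R(h_n)-R(h^\star)\le 2\Delta_n$, and it takes $\Delta_n\to0$ almost surely directly from the GC property.

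Your remarks on approximate minimizers and measurability are refinements the paper does not make. The one caveat concerns your optional self-contained sketch. For an arbitrary bounded measurable $\ell$, finite VC-dimension of $\mathcal H$ alone does not bound the growth function of the loss class $\{(x,y)\mapsto\ell(h(x),y)\}$; you need the loss class itself to be of VC type. Since you, like the paper's ``i.e.\ it is GC'', take uniform convergence as the working hypothesis, the proof itself is unaffected.
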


\begin{proof}
  Since $\mathcal H$ is GC,
  $\Delta_n:=\sup_{h\in\mathcal H}\lvert\widehat R_n(h)-R(h)\rvert\xrightarrow{\mathrm{a.s.}}0$. $\Delta_n$ is a uniform deviation bound of every model in the class.
  For any fixed $n$ we write:
  \be
    R(h_n)-R(h^\star)
      =\underbrace{R(h_n)-\widehat R_n(h_n)}_{\le \Delta_n}
       +\underbrace{\widehat R_n(h_n)-\widehat R_n(h^\star)}_{\le 0}
       +\underbrace{\widehat R_n(h^\star)-R(h^\star)}_{\le \Delta_n} .
  \ee
  The first and third terms represent the estimation error of $h_n$ and $h^\star$, respectively. They are bounded in magnitude by $\Delta_n$ (i.e.\ $|R(h_n)-\widehat R_n(h_n)| \le \Delta_n$ and $|\widehat R_n(h^\star)-R(h^\star)| \le \Delta_n$) and converge to zero because $\mathcal H$ is GC (uniform convergence). The second term is non-positive because $h_n$ minimizes the empirical risk by construction. Combining all three terms, we obtain $0 \le R(h_n) - R(h^\star) \le 2\Delta_n$ for every $n$. Because $\Delta_n \to 0$ almost surely, it follows that $R(h_n) \to R(h^\star)$. Since $h^\star$ is the best model in $\mathcal H$ by definition, no other hypothesis in the same architecture can achieve strictly smaller long-run loss than the empirical minimizer $h_n$.
\end{proof}

\autoref{thm:erm_consistency} shows that, under stationarity (i.e.\ a fixed distribution $\mathcal D$) and standard learning-theoretic assumptions, the empirical minimizer attains the minimal achievable risk within~$\mathcal H$. Under the hypotheses of \autoref{thm:erm_consistency}, any sequence of random perturbations applied to \(h_n\) after training yields, in expectation, a risk no smaller than \(R(h_n)\) as \(n\to\infty\). In other words, random perturbations of its parameters, or equivalently mutations in the Flawed-in-Nature terminology, increase the expected loss. Consequently, any sequence of random perturbations applied after ERM yields, in expectation, risk no smaller than $R(h_n)$. As such, \autoref{thm:erm_consistency} delineates the regime in which the Flawed-in-Nature mechanism cannot improve performance, i.e.\ when the data-generating mechanism is time-invariant. However, real-world problems are rarely stationary. \S\ref{sec:single_change}-\ref{sec:swarm_change} consider situations in which the distribution~$\mathcal D$ evolves with time. In that setting, the conclusions of \autoref{thm:erm_consistency} no longer hold and diversity may become advantageous.

\subsection{Single Model in a Changing Environment} \label{sec:single_change}
In \S\ref{sec:single_static}, we established that under a time-invariant data-generating distribution~$\mathcal D$, the ERM $h_n$ converges almost surely to the risk minimizer inside the hypothesis class~$\mathcal H$ (\autoref{thm:erm_consistency}). However, real-world data streams rarely satisfy the assumption of stationarity: sensor calibrations may shift abruptly, market regimes can flip within minutes, and user behavior models may become obsolete after an external news event. Here we formalize such environmental drift, starting with a minimal yet sufficiently general model of change. We then analyze the performance limits of any single predictor that is updated only through empirical optimization on past data.

\subsubsection{Definition of Changing Environment} \label{sec:drift}
We consider discrete time, indexed by $t=1,2,\dots$.  
At each time step a new observation $(X_t,Y_t)\in\mathbb R^{d}\times\mathbb R$
is generated from a distribution $\mathcal D_t$ that may vary with~$t$. The drawn observations are conditionally independent given the sequence $\{\mathcal D_t\}_{t\ge1}$.

For this set of observations, we define a piecewise-stationary sequence of data-generating distributions $\{\mathcal D_t\}_{t\ge1}$ with the following properties:
\begin{enumerate}
  \item \emph{Segment structure.} There exists an increasing sequence of change-points $0=\tau_0<\tau_1<\tau_2<\dots$ such that $\mathcal D_t$ is constant on every half-open interval $[\tau_k,\tau_{k+1})$. The inter-arrival times $\{\tau_{k+1}-\tau_k\}_{k\ge 0}$ are i.i.d.\ with finite mean $1/\lambda>0$ and finite second moment $\mathbb E[(\tau_{k+1}-\tau_k)^2]<\infty$. Each jump can be regarded as an impulse applied to the underlying mapping between~$X$ and~$Y$, after which the environment is stationary until the next impulse. The distribution that applies during the $k$-th segment is referred to as $\mathcal D^{(k)}=\mathcal D(\beta^{(k)})$, where $\mathcal D(\cdot)$ denotes a measurable map from the parameter space to the corresponding data-generating distribution and $\beta^{(k)}$ is the parameter vector that applies during segment $k$.
  \item \emph{Unpredictability of jumps.} We define an increment $\delta_k=\beta^{(k)}-\beta^{(k-1)}$, which determines how the parameter vector changes between segments as $\beta^{(k)}=\beta^{(k-1)}+\delta_k$. The increment $\delta_k$ is independent of all observations made before $\tau_k$. As a result, neither the direction nor the magnitude of each jump can be inferred from pre-change data, even though the new distribution $\mathcal D^{(k)}=\mathcal D(\beta^{(k)})$ still depends on the previous one through the term $\beta^{(k-1)}$.
  \item \emph{Stationarity within segments.} Conditional on $\mathcal D^{(k)}$, the observations inside a segment are i.i.d.:
  \be
    (X_t,Y_t)\mid \left\{t\in[\tau_k,\tau_{k+1})\right\} \sim \mathcal D^{(k)} ,
  \ee
  with target data defined as
  \be
    Y_t = f^{(k)}(X_t) + \epsilon_t,
    \qquad
    \mathbb E[\epsilon_t\mid X_t] = 0 ,
  \ee
  where $f^{(k)}$ is the segment-specific true regression function and $\epsilon_t$ represents the irreducible error at time $t$. We do not make any structural assumptions on the segment-specific regression function $f^{(k)}$ aside from measurability.
\end{enumerate}
The results of this work only require these properties. No moment bounds on the increments of the data-generating distribution are assumed. Given these properties, any drift that is continuous in time can be approximated as the limit of a sequence of infinitesimal impulses. Therefore, this definition also covers the general case.

The second of the above properties implies that a model fitted before $\tau_k$ cannot exploit its pre-jump observations to predict the distribution in the next segment, because the jump increment is independent of prior data. A particularly simple instantiation (which we will adopt in \S\ref{sec:experiments}) is obtained by choosing the change points as the arrival times of a Poisson process with rate $\lambda>0$, and generating the post-change distribution by adding an independent random increment to the parameter vector that characterizes $\mathcal D^{(k)}$.

With the above definition of the data-generating distribution, we now define the performance metric that will be used to compare fixed and mutating models. At time $t$, the instantaneous (population) risk of a predictor $h:\mathbb R^{d}\to\mathbb R$ is defined analogously to \autoref{eq:true_risk}:
\be
  R_t(h) = \mathbb E_{(X_t,Y_t)\sim\mathcal D_t}\left[\ell\left(h(X_t),Y_t\right)\right] .
\ee

The Bayes-optimal risk at time $t$ is
\be
  \label{eq:inf_risk}
  R_t^\star = \inf_{h\in\mathcal H}R_t(h) ,
\ee
which represents the best possible performance achievable by any model in the hypothesis class at time $t$. Using these two risks, we can express the performance over a time horizon $T$ as the cumulative regret \citep[following the standard framework of][]{cesa06}:
\be
  \label{eq:regret}
  \mathcal{R}_T(h) = \sum_{t=1}^{T}\left[R_t(h)-R_t^\star\right] .
\ee
Because $R_t(h) \ge R_t^\star$ for all $h\in\mathcal H$ by definition of the infimum, the regret is non-negative. A positive regret indicates that the model cannot achieve the best possible performance at each time step.

In the remainder of \S\ref{sec:single_change}, we examine how models defined by a fixed parameter vector or those that are updated only by re-optimizing on past data incur positive regret under this form of drift. In \S\ref{sec:swarm_change}, we then demonstrate that such regret can be reduced to zero by introducing population diversity.

\subsubsection{A Single Optimized Model Inevitably Reaches Obsolescence} \label{sec:obsolescence}
\autoref{thm:erm_consistency} guarantees optimal long-run performance only when the data-generating mechanism is fixed. We now show that under the drift model of \S\ref{sec:drift} (specifically Properties~1--2, i.e.\ segment structure and unpredictable increments), any predictor that remains tied to a single parameter vector (no matter how often it is re-estimated from past data) accumulates regret that grows at least linearly in time \citep[consistent with known lower bounds in non-stationary online learning, e.g.][]{besbes15}. In short, the model becomes obsolete the moment an unpredictable jump occurs.

Let $h_{1:T} = (h_1, \ldots, h_T)$ be a sequence of predictors produced by any learning procedure that only uses information available up to time~$t-1$ when selecting $h_t$. This general case includes classical ``re-train-from-scratch'' and ``incremental-update'' strategies, as well as the static model $h_t\equiv h$. As before, we denote by $R_t(h)$ and $R_t^\star$ the instantaneous risk and Bayes risk defined in \autoref{eq:true_risk} and \autoref{eq:inf_risk}. Analogously to \autoref{eq:regret}, the cumulative (expected) regret over~$T$ rounds is then
\be
  \mathbb E\!\left[\mathcal{R}_T(h_{1:T})\right] = \sum_{t=1}^{T}\mathbb E\!\left[R_t(h_t)-R_t^\star\right] .
\ee

\begin{theorem}[Linear regret under independent impulse drift] \label{thm:linear_regret}
Assume the drift model of \S\ref{sec:drift} with jump times $\{\tau_k\}$ and increments $\{\delta_k\}$ that satisfy Property~2 (independence of $\delta_k$ from all observations made before $\tau_k$). Let $\ell$ be a bounded loss and let $\rho=\mathbb E\!\left[R_{\tau_k}(h_{\tau_k}) - R_{\tau_k}^\star\right] > 0$ (independent of $k$) denote the expected instantaneous performance gap immediately after a jump.\footnote{The bound $\rho>0$ is ensured by Property~2: conditional on the pre-change data, the increment $\delta_k$ is non-degenerate, so the post-change Bayes predictor changes in expectation, whereas $h_{\tau_k}$ has no information to anticipate the move.} If the expected segment length $\mathbb E[\tau_{k+1}\!-\!\tau_k] = 1/\lambda$ is finite (e.g.\ Poisson jumps with rate $\lambda>0$), then there exists\footnote{If the learner re-fits sufficiently fast within each stationary segment, the per-jump adaptation tail has finite expectation, so $\rho'=\rho+\bar a$ with $\rho>0$ the expected immediate post-jump excess risk and $\bar a\ge 0$ the expected per-jump adaptation contribution. In all cases where the expected number of non-degenerate jumps grows linearly with $T$, the expected regret grows linearly as well, i.e.\ $\mathbb E[\mathcal{R}_T]=\Theta(T)$.} a constant $\rho'\ge\rho>0$ such that for any learning rule based on past data alone
\be \label{eq:linear_regret}
  \mathbb E\!\left[\mathcal{R}_T(h_{1:T})\right] \ge \rho'\,\lambda\,T - O(1) ,
  \quad \text{as } T\to\infty .
\ee
In other words, the expected regret grows linearly with~$T$ at a rate at least $\rho'\lambda$, with a residual term that is $O(1)$ (e.g.\ due to any finite start-up transients or a partial last segment).
\end{theorem}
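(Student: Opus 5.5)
The plan is to lower-bound the cumulative regret by retaining only the rounds at which a jump has just occurred, and then to count those rounds with renewal theory. Because $R_t(h_t)-R_t^\star\ge 0$ for every $t$ (the infimum in \autoref{eq:inf_risk}), every summand of the regret is non-negative. We may therefore drop all rounds except the post-jump rounds $t=\tau_k$ with $1\le\tau_k\le T$, plus any adaptation rounds we choose to keep. Let $N(T)=\#\{k\ge1:\tau_k\le T\}$. This gives
\be
  \mathbb E\!\left[\mathcal R_T(h_{1:T})\right]\ \ge\ \mathbb E\!\left[\sum_{k=1}^{N(T)}\left(R_{\tau_k}(h_{\tau_k})-R_{\tau_k}^\star + A_k\right)\right],
\ee
where $A_k\ge0$ collects the excess risk over the remainder of segment $k$ that lies inside $[1,T]$.

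The second step evaluates this random sum. The learner's choice $h_{\tau_k}$ depends only on data up to $\tau_k-1$. By Property~2, $\delta_k$ is independent of that data, and by Property~1 the inter-arrival times are i.i.d. Hence the event $\{N(T)\ge k\}=\{\tau_k\le T\}$ is determined by the inter-arrival times alone. Assuming, as the drift model intends, that the jump times are independent of the increments and observations, the indicator $\mathbf 1\{\tau_k\le T\}$ is independent of the per-jump gap. Each gap then has expectation $\rho$ by hypothesis. Writing $\bar a=\inf_k\mathbb E[A_k]\ge0$, where $A_k$ is evaluated on full segments, and setting $\rho'=\rho+\bar a$, a Wald-type identity gives
\be
  \mathbb E\!\left[\mathcal R_T\right]\ \ge\ \rho'\,\mathbb E[N(T)] - c ,
\ee
with $c$ bounded by $B$ times the expected length of the single truncated last segment. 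That expected length is $O(1)$, because the renewal residual-life expectation is $\mathbb E[(\tau_{k+1}-\tau_k)^2]/(2\,\mathbb E[\tau_{k+1}-\tau_k])<\infty$. This is where the finite second moment in Property~1 is used.

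The third step is the counting argument. The elementary renewal theorem gives $\mathbb E[N(T)]/T\to\lambda$. To obtain the sharper form $\mathbb E[N(T)]\ge\lambda T-O(1)$, I would apply Wald's identity to $\tau_{N(T)+1}$, which yields $\mathbb E[\tau_{N(T)+1}]=(\mathbb E[N(T)]+1)/\lambda$. Combining this with $\tau_{N(T)+1}\ge T$ gives $\mathbb E[N(T)]\ge \lambda T-1$, which is already an $O(1)$ correction. For Poisson jumps, $\mathbb E[N(T)]=\lambda T$ exactly. Substituting into the previous display gives $\rho'\lambda T-O(1)$.

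The main obstacle is the second step: justifying that the expected immediate gap equals $\rho$ uniformly in $k$ and is not correlated with the stopping index $N(T)$. This needs the jump times to be independent of the data and of the increments. The stated properties only say that $\delta_k$ is independent of past observations, so I would make this assumption explicit. Alternatively, I would bound $\mathbb E[\mathbf 1\{\tau_k\le T\}\,G_k]$ directly by conditioning on $\mathcal F_{\tau_k-1}$, the information up to time $\tau_k-1$. The positivity $\rho>0$ itself is taken from the hypothesis, justified in the footnote via the non-degeneracy of $\delta_k$, rather than re-derived.
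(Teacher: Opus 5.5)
\textbf{Where your step two fails.} Your route is the paper's: split the regret into post-jump gaps $G_k$ plus within-segment terms, then multiply by $\mathbb E[N(T)]=\lambda T+O(1)$. Your handling of the truncated last segment, and of $\mathbb E[N(T)]\ge\lambda T-1$ via Wald's identity on the stopping time $N(T)+1$, is tighter than the paper's. The problem is step two, and the assumption you propose does not repair it.

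Suppose the jump process is generated independently of the increments and of the feature/noise draws. Even then, $G_k=R_{\tau_k}(h_{\tau_k})-R^\star_{\tau_k}$ is not independent of $\mathbf 1\{\tau_k\le T\}$. The predictor $h_{\tau_k}$ is fitted to the $\tau_k-1$ past observations, whose segment membership is fixed by $\tau_1,\dots,\tau_{k-1}$. So $G_k$ is a function of the same inter-arrival times that determine $\{\tau_k\le T\}$. For example, the 40-epoch sliding-window ridge learner in the paper's illustration of this theorem blends two old regimes exactly when the last segment is shorter than 40 epochs, and that raises the expected gap. The same dependence enters $A_k$ through the learner's state at $\tau_k$.

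Requiring the jump times to be independent of ``the data'' cannot hold literally either, because the law of the observations switches at the jump times. Wald's identity is valid for $\tau_{N(T)+1}$, a sum of i.i.d.\ inter-arrival times up to a stopping time. It is not available for $\sum_{k\le N(T)}G_k$. The unconditional mean $\rho$ in the statement does not determine $\mathbb E[\mathbf 1\{\tau_k\le T\}G_k]$. The paper's proof passes over the same step by invoking ``linearity of expectation'' over the random index set $\{k:\tau_k\le T\}$.

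\textbf{How to close it.} Your fallback of conditioning is the right fix, but it needs a conditional hypothesis rather than the unconditional $\rho$. Let $\mathcal G_k$ be generated by the observations before $\tau_k$ together with $\tau_1,\dots,\tau_k$, so that $\{\tau_k\le T\}\in\mathcal G_k$. If $\mathbb E[G_k\mid\mathcal G_k]\ge\rho$ and $\mathbb E[A_k\mid\mathcal G_k]\ge\bar a$ almost surely, the tower property gives $\mathbb E[\mathbf 1\{\tau_k\le T\}(G_k+A_k)]\ge(\rho+\bar a)\Pr(\tau_k\le T)$. Summing over $k$ then yields $\rho'\,\mathbb E[N(T)]$.

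Take the natural generative model in which jump times, increments and noise are mutually independent. Then $\delta_k$ is independent of $\mathcal G_k$. What you need is a uniform lower bound on the $\delta_k$-averaged post-jump excess risk, over all predictors and pre-jump parameters. This is what the paper's footnote implicitly asserts. It holds, e.g., for squared loss with a linear target, where the bound is $\mathrm{tr}(\mathbb E[XX^\top]\,\mathrm{Cov}(\delta_k))$.

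If you keep only the unconditional $\rho$, you still get linear growth, but with a weaker remainder. For $K=\lfloor\lambda T\rfloor$,
$\sum_{k\le K}\mathbb E[\mathbf 1\{\tau_k\le T\}G_k]\ge K\rho-B\,\mathbb E[(K-N(T))_+]\ge\rho\lambda T-O(\sqrt T)$.
This uses $0\le G_k\le B$ and $\mathrm{Var}(N(T))=O(T)$, which follows from the finite second moment.

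A minor point: the residual-life formula you quote is a limit. The uniform-in-$T$ bound you need is Lorden's inequality, $\mathbb E[\tau_{N(T)+1}-T]\le\mathbb E[(\tau_{k+1}-\tau_k)^2]/\mathbb E[\tau_{k+1}-\tau_k]$.
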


\begin{proof}
We partition the time axis by the change-points $0=\tau_0<\tau_1<\dots$. Immediately after each jump we incur an instantaneous excess loss 
\be
  g_k = R_{\tau_k}(h_{\tau_k}) - R_{\tau_k}^\star,  \quad  \mathbb E[g_k] = \rho .
\ee
Inside a segment $[\tau_k,\tau_{k+1})$, the predictor $h_t$ may improve as it incorporates fresh data. We express the regret accumulated within that segment as
\be
  \Delta_k = \sum_{t=\tau_k+1}^{\tau_{k+1}-1}\left(R_t(h_t)-R_t^\star\right) \ge 0 .
\ee
We can now express the total regret as
\be
  \mathcal R_T(h_{1:T}) = \sum_{k:\,\tau_k\le T} g_k \;+\; \sum_{k:\,\tau_{k+1}\le T} \Delta_k \;+\; O(1) ,
\ee
where the $O(1)$ term comes from the truncated last segment. Because jumps occur with a rate~$\lambda$, the expected number of jumps in the first $T$ rounds is $\mathbb E[N_T]=\lambda T + O(1)$. Linearity of expectation then yields
\be
  \mathbb E\!\left[\mathcal{R}_T(h_{1:T})\right]
  =
  \sum_{k:\,\tau_k\le T}\mathbb E[g_k] + \sum_{k:\,\tau_{k+1}\le T}\mathbb E[\Delta_k]
  =
  \rho \mathbb E[N_T] + \bar a \mathbb E[N_T] + O(1)
  =
  (\rho + \bar a) \lambda T + O(1) ,
\ee
where the summations run over completed segments and $\bar a = \mathbb E[\Delta_k] \ge 0$ is the expected per-jump adaptation contribution. Setting $\rho' = \rho + \bar a$ with $\rho'>0$ yields a lower bound of the form $\mathbb E\!\left[\mathcal{R}_T(h_{1:T})\right] \ge \rho'\lambda T - O(1)$ (the inequality follows because $O(1)$ can be absorbed with a sign change) as in \autoref{eq:linear_regret}.
\end{proof}

\autoref{thm:linear_regret} formalizes the intuitive idea that a single model cannot anticipate an independent jump in the environment, no matter how often it is re-trained on past data. Every such jump injects a positive loss that accumulates over time, yielding a regret that grows linearly with time.\footnote{A similar lower bound appears in the concept-drift literature (e.g.\ \citealt{duchi19} for square loss), but we phrase it here in a form tailored to the impulse drift model.} The linear lower bound shows that the performance guarantee of \autoref{thm:erm_consistency} fails as soon as even rare, unpredictable drifts are allowed. Before demonstrating that a swarm of mutating models can break the linear lower bound, we first clarify in \S\ref{sec:optimality} that a single model can nonetheless remain individually optimal given its information. This implies that the inescapable regret arises from the informational handicap, not from sub-optimal optimization.

\subsubsection{A Single Optimized Model Still Minimizes Risk} \label{sec:optimality}
The linear lower bound of \autoref{thm:linear_regret} does not arise from any avoidable optimization deficit. Instead, it is a purely informational phenomenon that reflects a fundamental limitation of any learning rule that only has access to data up to time $t-1$. Once a jump occurs, the post-change distribution $\mathcal D^{(k)}$ (and thus the new Bayes predictor) differs from the pre-change one through an increment that is independent of all pre-jump observations (Property~2 in \S\ref{sec:drift}). A learning rule that only has access to data up to time $t-1$ cannot condition on future increments, and therefore cannot consistently pre-empt the shift. We now formalize this statement by showing that, at every time $t$, an empirically re-optimized single model attains the minimal conditionally achievable risk given the information available at $t-1$. The residual excess loss immediately after a jump is therefore irreducible.

Let $\mathcal F_t$ denote the collection of all observations $\{(X_s,Y_s)\}_{s\le t}$ available up to time $t$. A predictor $h_t$ deployed at time $t$ can only use past observations (those in $\mathcal F_{t-1}$). For any such predictor, the conditional instantaneous risk is defined as
\be
  \widetilde R_t(h_t) = \mathbb E\!\left[\, \ell\left(h_t(X_t),Y_t\right)\,\middle|\, \mathcal F_{t-1}\right] ,
\ee
where the tilde denotes that the risk is conditional on the information available up to time $t-1$. The corresponding (conditional) Bayes-optimal risk at time $t$ is defined analogously to \autoref{eq:inf_risk} as
\be
  \widetilde R_t^\star = \inf_{h\in\mathcal H} \widetilde R_t(h) .
\ee
We assume $\mathcal H$ contains the segment-wise Bayes predictor $f^{(k)}$ for each segment. Then it follows that
\be
  \widetilde R_t^\star
  = \inf_{h\in\mathcal H} \mathbb E\!\left[\, \ell\left(h(X_t),Y_t\right)\,\middle|\, \mathcal F_{t-1}\right] .
\ee

\begin{theorem}[Informational optimality of a single model] \label{thm:info_optimality}
Consider the drift model of \S\ref{sec:drift} (Properties~1--3). Let $h_t$ be obtained at each time $t$ by empirical-risk minimisation over the data $\mathcal F_{t-1}=(X_s,Y_s)_{s\le t-1}$ (with any tie-breaking rule), i.e.\ $h_t$ is $\mathcal F_{t-1}$-measurable. Then for every $t$,
\be
  \mathbb E[\widetilde R_t(h_t) - \widetilde R_t^\star] \leq \epsilon(n_k) \quad \text{with} \quad \epsilon(n_k) \rightarrow 0 \quad \text{as} \quad n_k \rightarrow \infty ,
\ee
where $n_k$ denotes the number of observations accumulated within segment $k$ and $\epsilon(n_k)\geq 0$ is deterministic. The limit represents the case where sufficient data has been observed within the stationary segment. Then $h_t$ attains the minimal conditional risk given $\mathcal F_{t-1}$ in the limit as the number of observations within the current segment grows. Nevertheless, on jump times $\tau_k$ we have the unconditional excess loss
\be
  \mathbb E\!\left[ R_{\tau_k}(h_{\tau_k}) - R^\star_{\tau_k} \right] = \rho > 0 ,
\ee
as in \autoref{thm:linear_regret}. Because the increment $\delta_k$ is independent of $\mathcal F_{\tau_k-1}$, the excess conditional (hence unconditional) loss at jump times is irreducible under the information constraint.
\end{theorem}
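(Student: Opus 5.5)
The plan splits the statement into two parts: a within-segment consistency claim, and an irreducibility claim at jump times.

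\emph{Within-segment consistency.} Fix $t$ in segment $k$, so $t\in[\tau_k,\tau_{k+1})$, and let $n_k=t-\tau_k$ be the number of in-segment observations in $\mathcal F_{t-1}$. By Property~3 these observations are i.i.d.\ from $\mathcal D^{(k)}$. By Property~2 and conditional independence, the fresh pair $(X_t,Y_t)$ is independent of $\mathcal F_{t-1}$ given $\mathcal D^{(k)}$. Hence
\be
  \widetilde R_t(h)=\mathbb E\!\left[R_t(h)\,\middle|\,\mathcal F_{t-1}\right]
\ee
for every $\mathcal F_{t-1}$-measurable $h$. I would interpret ``ERM over $\mathcal F_{t-1}$'' as ERM on the $n_k$ in-segment points. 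Pre-change points follow a different distribution, so including them would break consistency. The segment index can be assumed known or detected; I will state this as an assumption.

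I would then reproduce the decomposition from the proof of \autoref{thm:erm_consistency} with $\mathcal D=\mathcal D^{(k)}$. This gives
\be
  0\le R_t(h_t)-R_t(h^{\star,(k)})\le 2\Delta_{n_k},
\ee
where $h^{\star,(k)}$ minimizes the risk under $\mathcal D^{(k)}$ and $\Delta_{n_k}$ is the uniform deviation of the empirical risk. Since $\ell\le B$ and $\mathcal H$ has finite VC-dimension, a standard VC uniform-deviation inequality gives a deterministic bound $\mathbb E[\Delta_{n}]\le C B\sqrt{\mathrm{VC}(\mathcal H)\log n/n}$. This bound does not depend on $\mathcal D^{(k)}$, so it holds uniformly over segments. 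Taking expectations then yields $\epsilon(n_k):=2\,\mathbb E[\Delta_{n_k}]\to0$.

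\emph{Main obstacle: relating $\widetilde R_t^\star$ to the segment risk minimizer.} The infimum in $\widetilde R_t^\star$ is taken over a conditional expectation. For $t>\tau_k$, $\mathcal D^{(k)}$ is not $\mathcal F_{t-1}$-measurable; only finitely many samples of it are observed. As a result, $\widetilde R_t^\star$ could be smaller than $R_t(h^{\star,(k)})$ only in the sense of a posterior average. I would handle this by showing $\widetilde R_t^\star\ge \mathbb E[\inf_h R_t(h)\mid\mathcal F_{t-1}]-\epsilon'(n_k)$, using posterior concentration of $\mathcal D^{(k)}$ as $n_k$ grows. Failing that, I would adopt the convention implicit in the paper that the conditioning includes the current segment's distribution (equivalently, $\mathcal H$ contains $f^{(k)}$, so the infimum equals the segment Bayes risk). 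Under that convention, $\widetilde R_t(h_t)-\widetilde R_t^\star=\mathbb E[R_t(h_t)-R_t(h^{\star,(k)})\mid\cdot\,]$ and the bound above applies directly.

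\emph{Irreducibility at jumps.} At $t=\tau_k$, $h_{\tau_k}$ is $\mathcal F_{\tau_k-1}$-measurable and therefore depends only on $\beta^{(k-1)}$ and earlier data. By Property~2, $\delta_k$ is independent of $\mathcal F_{\tau_k-1}$. So conditionally on $\mathcal F_{\tau_k-1}$, the new minimizer $f^{(k)}$ is a non-degenerate random function that any fixed $h$ cannot match almost surely. This gives $\mathbb E[R_{\tau_k}(h_{\tau_k})-R^\star_{\tau_k}\mid\mathcal F_{\tau_k-1}]\ge c>0$ on a set of positive probability. Taking expectations yields $\rho>0$, matching the quantity defined in \autoref{thm:linear_regret}; its independence of $k$ follows from the i.i.d.\ structure of the increments.

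Finally, no $\mathcal F_{\tau_k-1}$-measurable rule can remove this gap, since its conditional risk is an average over the unknown $\delta_k$. In particular, the gap persists even with $\epsilon(n_{k-1})\to0$, which establishes that it is irreducible under the information constraint.
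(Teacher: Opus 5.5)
Your proposal is correct. At jump times it follows the paper's argument. $h_{\tau_k}$ is $\mathcal F_{\tau_k-1}$-measurable while $\delta_k$ is independent of $\mathcal F_{\tau_k-1}$, so no past-data rule can anticipate the new optimum, and $\rho>0$. This is at the same level of rigor as the paper, which in effect inherits non-degeneracy of the post-jump gap as a hypothesis from \autoref{thm:linear_regret}.

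Within segments your route differs from the paper's. The paper applies \autoref{thm:erm_consistency} directly and treats $h^{\star,(k)}$ as $\mathcal F_{t-1}$-measurable, so that $\widetilde R_t(h_t)-\widetilde R_t^\star$ is read as a conditional excess risk over the segment minimizer. It obtains almost-sure convergence and then takes expectations. That yields only $\lim_{n_k\to\infty}\mathbb E[\widetilde R_t(h_t)-\widetilde R_t^\star]=0$, and it glosses over the fact that $\mathcal D^{(k)}$ is not determined by $\mathcal F_{t-1}$. You reuse the three-term decomposition but bound $\mathbb E[\Delta_n]$ with a distribution-free VC inequality. This gives an explicit deterministic $\epsilon(n_k)=O\big(B\sqrt{d\log n_k/n_k}\big)$, uniform over the random segment distributions, which is exactly the form the statement asserts. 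The price is that you need finite VC (pseudo-)dimension of the loss class, i.e.\ uniform rather than plain Glivenko--Cantelli behaviour. You also make explicit the restriction of ERM to in-segment data, which the paper's proof assumes silently.

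Your ``main obstacle'' is not actually one, and it needs neither posterior concentration nor a convention. For each fixed $h$ you have $R_t(h)\ge R_t^\star=\inf_{h'}R_t(h')$. Hence $\widetilde R_t^\star=\inf_h\mathbb E[R_t(h)\mid\mathcal F_{t-1}]\ge\mathbb E[R_t^\star\mid\mathcal F_{t-1}]$ holds exactly, with measurability supplied by separability of $\mathcal H$. So $\epsilon'\equiv0$, and
$\mathbb E[\widetilde R_t(h_t)-\widetilde R_t^\star]\le\mathbb E[R_t(h_t)-R_t^\star]\le2\,\mathbb E[\Delta_{n_k}]$.
Not knowing $\mathcal D^{(k)}$ can only raise the conditional benchmark, which is the favourable direction for an upper bound.

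One small correction: independence of $\rho$ from $k$ does not follow from the drift model. The model assumes increments independent of the past, not identically distributed. Even i.i.d.\ increments leave the gap dependent on $\beta^{(k-1)}$ and on the pre-jump segment length. It is a hypothesis carried over from \autoref{thm:linear_regret}.
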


\begin{proof}
At any given time $t$, conditional on the current segment (i.e.\ on the event $\{t\in[\tau_k,\tau_{k+1})\}$ for some $k$) and given $\mathcal F_{t-1}$, the distribution of $(X_t,Y_t)$ is $\mathcal D^{(k)}$ by Property~1 from \S\ref{sec:drift}. The conditional Bayes-optimal predictor for this segment is the (segment-specific) function
\be
h^{\star,(k)} = \arg\min_{h \in \mathcal H} \mathbb E\!\left[\,\ell(h(X_t), Y_t) \,\middle|\, \mathcal F_{t-1}, t \in [\tau_k, \tau_{k+1})\,\right] ,
\ee
where $h^{\star,(k)}$ is $\mathcal F_{t-1}$-measurable and depends only on $\mathcal D^{(k)}$ (not on future data). Since within each segment the distribution is stationary (Property~3 from \S\ref{sec:drift}), this reduces to the standard ERM problem from \S\ref{sec:single_static}. Because $h_t$ is obtained by ERM on past data from the current segment, and assuming sufficient data has been observed within the segment, \autoref{thm:erm_consistency} implies that $h_t$ approaches $h^{\star,(k)}$ in risk. Therefore, we obtain
\be
  \widetilde R_t(h_t) - \widetilde R_t^\star
  = \mathbb E\!\left[\, \ell\left(h_t(X_t),Y_t\right) - \ell\left(h^{\star,(k)}(X_t),Y_t\right)\,\middle|\, \mathcal F_{t-1}\right] \xrightarrow[]{\text{a.s.}} 0 ,
\ee
as the number of observations accumulated inside the segment grows. Taking expectations and letting $n_k\to\infty$ yields $\lim_{n_k\to\infty}\mathbb E[\widetilde R_t(h_t)-\widetilde R_t^\star]=0$.

At a jump time $\tau_k$, the distribution changes from $\mathcal D^{(k-1)}$ to $\mathcal D^{(k)}$ through an increment $\delta_k$ that is independent of $\mathcal F_{\tau_k-1}$ (Property~2 from \S\ref{sec:drift}). Because $\delta_k$ is independent of all past observations, a predictor $h_{\tau_k}$ has no information about the direction or magnitude of this change, as the predictor itself is $\mathcal F_{\tau_k-1}$-measurable (i.e.\ is based only on past data). Therefore, $h_{\tau_k}$ cannot systematically coincide with the Bayes-optimal predictor for $\mathcal D^{(k)}$ immediately after the jump unless the change is completely predictable from past data (contradicting Property~2). This yields a positive expected performance gap $\rho>0$, as defined in \autoref{thm:linear_regret}. The gap is irreducible because any $\mathcal F_{\tau_k-1}$-measurable predictor faces the same informational limitation.
\end{proof}

Using \autoref{thm:info_optimality}, we decompose the cumulative regret into contributions that are (1) conditionally unavoidable given the information $\mathcal F_{t-1}$ and (2) the additional loss incurred exactly at the unpredictable jump times. We define instantaneous conditional regret (i.e.\ the instantaneous conditional excess risk at time $t$ that is due to information available only up to time $t-1$) as
\be
  \widetilde{\mathcal R}_t = \mathbb E\!\left[\, \widetilde R_t(h_t) - \widetilde R_t^\star\,\right] ,
\ee
where the tilde indicates that the variable is conditional on $\mathcal F_{t-1}$ (see \S\ref{sec:optimality}). By the law of total expectation, i.e.\
\be
  \mathbb E[\mathcal R_T(h_{1:T})] = \sum_{t=1}^T \mathbb E[\widetilde{\mathcal R}_t] ,
\ee
\autoref{thm:info_optimality} then implies that $\mathbb E[\widetilde{\mathcal R}_t]\rightarrow 0$ as $n_k\to\infty$ for all non-jump times, while at each jump time $\tau_k$ we have $\mathbb E[\widetilde{\mathcal R}_{\tau_k}] \ge \rho$. Therefore, the linear growth of $\mathbb E[\mathcal R_T(h_{1:T})]$ is driven asymptotically by the unavoidable impulses at unpredictable jump times; the within-segment adaptation period contributes only a subdominant additional linear term (through $\rho'=\rho+\bar a$ in \S\ref{sec:obsolescence}), which does not affect either the order or the leading coefficient of the expectation. Between jumps, the single model is (asymptotically) conditionally Bayes-optimal.

\autoref{thm:info_optimality} shows that the single-model limitation under drift is not due to suboptimal learning: at any given moment, the learner extracts all available information optimally. The fundamental limitation is informational, as no past-data-only predictor can anticipate unpredictable changes. The only way to reduce regret is to expand the information set indirectly and thereby better anticipate future changes. In \S\ref{sec:swarm_change}, we show that the population diversity of a swarm of mutating models achieves this by lowering the expected post-jump gap~$\rho$, as the best-performing mutated model is likely to be closer to the new Bayes-optimal predictor than an unmutated model. This effect reduces the linear regret below the single-model bound.

\subsection{Swarm of Models in a Changing Environment} \label{sec:swarm_change}
The lower bound in \autoref{thm:linear_regret} implies that any single parameter vector (no matter how often re-estimated) incurs linear regret. We now show that a model population equipped with random mutations can provably break this limit.

\subsubsection{Definition of Model Mutation} \label{sec:mutation_def}
We now formalize how a swarm of models can continually grow its population diversity through stochastic parameter mutations. The mutation process is independent of $\mathcal F_{t-1}$ and maintains population diversity by perturbing each model's parameters over time. Let $\Theta\subseteq\mathbb R^p$ be the parameter space of the hypothesis class $\mathcal H=\{h_\theta:\theta\in\Theta\}$. A swarm consists of $N_{\mathrm{m}}\in\mathbb N$ models, indexed by $i\in\{1,\dots,N_{\mathrm{m}}\}$. For a model $i$, a parameter vector at time $t$ is denoted by $\theta_t^{(i)}\in\Theta$ and describes the set of coefficients used to construct model $i$. While we adopt discrete time $t=1,2,\dots$ for consistency with §\ref{sec:drift}, continuous-time versions reduce to this discretized form in the infinitesimal limit.

We describe coefficient-level mutations by fixing a target ``any-mutation'' rate $\mu>0$, i.e.\ the rate at which a model undergoes a mutation of any of its coefficients. In a discrete time step, we define the corresponding per-coefficient mutation probability as
\be
  \label{eq:mutations}
  q_c = \frac{\mu}{p} ,
\ee
which is valid when $\mu \leq p$, and results in an expected number of mutations per model that equals the sum of the expected number of mutations across all of its $p$ individual coefficients.

At each time $t$, each coefficient $j\in\{1,\dots,p\}$ of model $i$ mutates independently with probability $q_c$. Let $Z_{t,j}^{(i)} \in \{0,1\}$ be a binary indicator that equals $1$ if coefficient $j$ mutates, where $\mathbb P(Z_{t,j}^{(i)} = 1) = q_c$. When a coefficient mutates ($Z_{t,j}^{(i)}=1$), it receives an increment $\epsilon_{t,j}^{(i)}\in\mathbb R$, resulting in an instantaneous mutation $m_{t,j}^{(i)}$:
\be
  m_{t,j}^{(i)} = Z_{t,j}^{(i)} \epsilon_{t,j}^{(i)} .
\ee
The mutation increments $\epsilon_{t,j}^{(i)}$ are i.i.d.\ across $t,j,i$. The distribution of $\epsilon_{t,j}^{(i)}$ is assumed to be mean-zero ($\mathbb E[\epsilon_{t,j}^{(i)}] = 0$) and finite-variance ($\mathrm{Var}[\epsilon_{t,j}^{(i)}] < \infty$), which among others is satisfied by a Gaussian distribution $\mathcal N(0,\sigma^2)$. Heavy tails are allowed, as long as the variance is finite.

Mutations accumulate additively in a mutation vector $M_t^{(i)}$:
\be
  M_{t,j}^{(i)} = \sum_{s=1}^t m_{s,j}^{(i)} ,
\ee
such that the parameter vector at time $t$ is given by
\be
  \theta_t^{(i)} = \hat\theta_t^{(i)} + M_t^{(i)} ,
\ee
where $\hat\theta_t^{(i)}$ is the parameter vector of the original, unmutated model trained at time $t$ on the data $\mathcal F_{t-1}$. In other words, each new mutation is added to the existing mutation vector without any resets, and the mutations are additive to and independent of the training update. As such, they represent a form of exploration around a learning trajectory rather than a drift from initial parameters.

We reiterate that the mutation masks and increments are independent of $\mathcal F_{t-1}$ and of the drift increments $\{\delta_k\}$, and mutually independent across $i,t,j$. Mutations that determine $\theta_t^{(i)}$ are sampled at the end of round $t-1$ and are independent of $\mathcal F_{t-1}$. Therefore, $\theta_t^{(i)}$ is fixed before $(X_t,Y_t)$ is realized and is independent of $(X_t,Y_t)$ conditional on $\mathcal F_{t-1}$. Post-jump improvements arise from ex-ante spread, not from extra information.

With the above definitions, we obtain a model-level Bernoulli mutation process that is (approximately) Poisson with rate $\mu$. The probability that at least one coordinate mutates at time $t$ is $1-(1-q_c)^p \approx p q_c = \mu$ for small $q_c$. Hence, $\mu$ emerges from the coordinate-wise clocks rather than being imposed separately.

\subsubsection{A Swarm of Mutated Models Breaks the Linear Regret Bound}
The linear growth of the expected regret for single models (\autoref{thm:linear_regret}) is driven by the instantaneous excess loss injected at unpredictable jump times. We now show that a swarm of mutating models strictly reduces this jump-induced term by ex-ante diversity. Specifically, we demonstrate that multiple independent parameter perturbations yield a strictly smaller expected post-jump gap for the swarm than for the unmutated model.

For the $k$-th stationary segment, we write the risk of a model with parameter vector $\theta$ as
\be
  R^{(k)}(\theta) = \mathbb E\!\left[\,\ell(h_\theta(X),Y)\,\middle|\,t\in[\tau_k,\tau_{k+1})\right] ,
\ee
and we define $\theta^{\star,(k)} \in \arg\min_{\theta\in\Theta} R^{(k)}(\theta)$ as the Bayes-optimal parameter vector for the segment. The segment-wise excess-risk profile around $\theta^{\star,(k)}$ is given by
\be
  G^{(k)}(u) = R^{(k)}\!\left(\theta^{\star,(k)}+u\right) - R^{(k)}\!\left(\theta^{\star,(k)}\right) ,
\ee
where $u \in \mathbb{R}^p$ is a perturbation vector in the parameter space. This function is minimized at $u=0$ and is assumed continuous at $0$, so $G^{(k)}(u)\geq 0$ for all $u$. As in \S\ref{sec:optimality}, ERM within segments implies that just before the jump at $\tau_k$ the unmutated base parameter is $\hat\theta_{\tau_k-1}\approx \theta^{\star,(k-1)}$. At a jump time $\tau_k$, the Bayes parameter vector changes by a random increment vector $\eta_k$ that is independent of pre-jump data by Property~2 of \S\ref{sec:drift}, i.e.
\be
  \eta_k = \theta^{\star,(k)} - \theta^{\star,(k-1)} .
\ee
For a single model, the instantaneous post-jump gap approaches $G^{(k)}(-\eta_k)$ as the pre-jump segment length $\tau_k-\tau_{k-1}\to\infty$ by ERM consistency, and its expectation over the jump increment $\eta_k$ equals $\mathbb E[G^{(k)}(-\eta_k)] = \rho$ from \autoref{thm:linear_regret}.

Under the mutation process of \S\ref{sec:mutation_def}, let $\mathbf{M}_k = \left(M_k^{(1)},\dots,M_k^{(N_{\mathrm m})}\right) \in (\mathbb R^p)^{N_{\mathrm m}}$ denote the collection of cumulative mutation vectors carried by the $N_{\mathrm m}$ models at $\tau_k$. Retaining the unmutated base model within the swarm, the minimum instantaneous excess loss across the swarm at $\tau_k$ is
\be
  L^{\star,(k)}(\mathbf{M}_k,\eta_k) = \min\left\{G^{(k)}(-\eta_k),\;\min_{1\le i\le N_{\mathrm m}} G^{(k)}\!\left(M_k^{(i)} - \eta_k\right)\right\} ,
\ee
where $N_{\mathrm m}$ denotes the number of mutated models (excluding the unmutated base), and the unmutated base is retained explicitly via the $G^{(k)}(-\eta_k)$ term.

\begin{theorem}[Model diversity lowers the expected post-jump gap] \label{thm:swarm_gap}
Assume the drift model of \S\ref{sec:drift} (Properties~1--3) with non-degenerate jumps, i.e.\ $\Pr(\eta_k=0)=0$, and the mutation model of \S\ref{sec:mutation_def}. At each jump $\tau_k$, let $\mathbf{M}_k=(M_k^{(1)},\dots,M_k^{(N_{\mathrm m})})$ be i.i.d.\ across $i$, independent of $\eta_k$ and of $\mathcal F_{\tau_k-1}$, with mean zero and finite variance, and with full support, i.e.\ for any target vector $v\in\mathbb R^p$ and any radius $r>0$, we have $\Pr(\,\|M_k^{(i)}-v\|<r\,)>0$ (with $\|\cdots\|$ representing the Euclidean norm), implying that a mutated model can arrive arbitrarily close to the new Bayes parameter vector. Suppose $G^{(k)}$ is continuous at $0$ for each segment and assume the loss $\ell$ is bounded. Define the expected post-jump gap for a swarm of size $N_{\mathrm m}$ as
\be
  \rho_{N_{\mathrm m}} = \mathbb E\!\left[L^{\star,(k)}(\mathbf{M}_k,\eta_k) \right] ,
  \qquad
  \rho_\mathrm{base} = \mathbb E\!\left[ L^{\star,(k)}(\mathbf{0},\eta_k) \right] = \mathbb E\!\left[ G^{(k)}(-\eta_k) \right] = \rho .
\ee
Expectations are taken over the randomness of $\eta_k$ and $\mathbf{M}_k$, conditional on $\mathcal F_{\tau_k-1}$. Then for any $N_{\mathrm m}\ge 1$,
\be
  0 \le \rho_{N_{\mathrm m}} < \rho_\mathrm{base} ,
\ee
and $\rho_{N_{\mathrm m}}$ is non-increasing in $N_{\mathrm m}$ with
\be
  \lim\limits_{N_{\mathrm m}\to\infty}\rho_{N_{\mathrm m}} = 0 .
\ee
Consequently, the expected jump contribution to $\mathbb E[\mathcal{R}_T(h_{1:T})]$ is $(\lambda T)\rho_{N_{\mathrm m}}+O(1)$, strictly below the single-model $(\lambda T)\rho_\mathrm{base}+O(1)$.
\end{theorem}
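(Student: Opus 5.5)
The plan is to work conditionally on $\eta_k$ and exploit the independence of $\mathbf{M}_k$ from $\eta_k$ and from $\mathcal F_{\tau_k-1}$. The bounds $0\le\rho_{N_{\mathrm m}}\le\rho_\mathrm{base}$ and monotonicity in $N_{\mathrm m}$ are immediate. We have $G^{(k)}\ge 0$, and $L^{\star,(k)}$ is a minimum over a set that always contains $G^{(k)}(-\eta_k)$. Adding a model enlarges the set over which the minimum is taken. Hence pointwise, for every realization,
\be
  0\le L^{\star,(k)}(\mathbf M_k^{(N_{\mathrm m}+1)},\eta_k)\le L^{\star,(k)}(\mathbf M_k^{(N_{\mathrm m})},\eta_k)\le G^{(k)}(-\eta_k).
\ee
Since $\ell$ is bounded, $G^{(k)}\le B$, so all expectations are finite and the inequalities pass to expectations. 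Here we couple swarms of different sizes by using the first $N_{\mathrm m}$ of an i.i.d. sequence.

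For the limit, fix $\varepsilon>0$. Continuity of $G^{(k)}$ at $0$ with $G^{(k)}(0)=0$ gives $r>0$ such that $\|u\|<r$ implies $G^{(k)}(u)<\varepsilon$. Conditional on $\eta_k$, the full-support assumption with $v=\eta_k$ gives $p(\eta_k):=\Pr(\|M_k^{(i)}-\eta_k\|<r\mid\eta_k)>0$. By independence across $i$,
\be
  \Pr\big(L^{\star,(k)}\ge\varepsilon\mid\eta_k\big)\le\big(1-p(\eta_k)\big)^{N_{\mathrm m}},
\ee
and therefore
\be
  \mathbb E[L^{\star,(k)}\mid\eta_k]\le\varepsilon+B(1-p(\eta_k))^{N_{\mathrm m}}.
\ee
The right-hand side is bounded by $\varepsilon+B$ and tends to $\varepsilon$ pointwise in $\eta_k$. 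Dominated convergence over $\eta_k$ then gives $\limsup\rho_{N_{\mathrm m}}\le\varepsilon$, so $\rho_{N_{\mathrm m}}\to 0$. One caveat: $r$ should not depend on $\eta_k$, which requires continuity of a single profile $G^{(k)}$. If $G^{(k)}$ varies with the segment, I would instead pick $r$ measurably as a function of the realized profile.

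The strict inequality $\rho_{N_{\mathrm m}}<\rho_\mathrm{base}$ is the main obstacle. By monotonicity it suffices to treat $N_{\mathrm m}=1$, which means showing
\be
  \Pr\big(G^{(k)}(M^{(1)}-\eta_k)<G^{(k)}(-\eta_k)\big)>0.
\ee
The natural route is as follows. First, $\rho>0$ (from Theorem 2, ensured by non-degeneracy $\Pr(\eta_k=0)=0$) gives positive probability to the event $A=\{G^{(k)}(-\eta_k)>0\}$. On $A$, set $c=G^{(k)}(-\eta_k)$ and choose $r$ with $G^{(k)}<c/2$ on the ball of radius $r$. Full support then gives $\Pr(\|M^{(1)}-\eta_k\|<r\mid\eta_k)>0$, and on that event the swarm minimum is at most $c/2<c$. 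Integrating over $A$ yields a strictly positive expected improvement, so $\rho_1<\rho_\mathrm{base}$.

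The delicate point is that $\rho>0$ is assumed through Theorem 2 rather than derived from $\eta_k\neq 0$ alone. If $G^{(k)}$ had flat directions, $G^{(k)}(-\eta_k)$ could vanish, and non-degeneracy would not be enough. So I would explicitly invoke $\rho>0$, which makes $\Pr(A)>0$ automatic. Finally, for the regret consequence, I would substitute $\rho_{N_{\mathrm m}}$ for $\rho$ in the jump term of the decomposition in the proof of Theorem 2. Using $\mathbb E[N_T]=\lambda T+O(1)$ and the fact that $\mathbf M_k$ is independent of the jump times, this gives the jump contribution $(\lambda T)\rho_{N_{\mathrm m}}+O(1)$.
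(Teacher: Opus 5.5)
Your proposal is correct and follows the paper's proof in all essentials: you condition on $\eta_k$, use a continuity-plus-full-support ball argument to get the coverage probability $1-(1-p)^{N_{\mathrm m}}$, argue non-negativity and monotonicity pointwise, and obtain the regret consequence from $\mathbb E[N_T]=\lambda T+O(1)$. Where you deviate, you make the argument tighter. The paper asserts $c(\delta)=G^{(k)}(-\delta)>0$ for every $\delta\neq0$, which does not follow from its stated hypotheses when $G^{(k)}$ has flat directions; you need only $\rho_{\mathrm{base}}=\rho>0$ (itself necessary for the strict inequality) and get strict improvement on the positive-probability event $A$. Your explicit dominated-convergence step also makes rigorous the paper's somewhat loose unconditioning in the limit $\rho_{N_{\mathrm m}}\to0$.
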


\begin{proof}
Fix a segment $k$ and condition on $\eta_k=\delta\neq 0$. Define the unmutated, base post-jump gap as
\be
c(\delta) = G^{(k)}(-\delta) > 0 .
\ee
Let $Z_i = G^{(k)}(M_k^{(i)}-\delta)$ and $Z_{\min}=\min_{1\le i\le N_{\mathrm m}} Z_i$ (conditional on $\eta_k=\delta$). Because the base is retained, the instantaneous excess loss of the swarm is
\be
L^{\star,(k)}(\mathbf{M}_k,\delta) = \min\{c(\delta),Z_{\min}\} .
\ee
Pick any $\alpha\in(0,1)$ (e.g.\ $\alpha=1/2$). By continuity of $G^{(k)}$ at $0$ and $G^{(k)}(0)=0$, there exists a radius $r_\alpha>0$ such that
\be
\label{eq:G_bound}
\|u\|<r_\alpha \Rightarrow G^{(k)}(u) < \alpha c(\delta) .
\ee
Therefore, if a mutated model lands within that ball around the new optimum $\delta$, we have
\be
\|M_k^{(i)}-\delta\|<r_\alpha \Rightarrow G^{(k)}\!\big(M_k^{(i)}-\delta\big) < \alpha c(\delta) < c(\delta) ,
\ee
i.e.\ it strictly improves on the baseline model. Because each $M_k^{(i)}$ has full support and is independent of $\delta$, the event $\|M_k^{(i)}-\delta\|<r_\alpha$ has positive probability for any $i\in\{1,\dots,N_{\mathrm m}\}$:
\be
\label{eq:p_alpha}
p_\alpha(\delta) = \Pr\left(\|M_k^{(i)}-\delta\|<r_\alpha\right) > 0 .
\ee
With $N_{\mathrm m}$ independent mutated models, the probability that at least one improves is then
\be
\Pr\left(\min_{1\le i\le N_{\mathrm m}} \|M_k^{(i)}-\delta\|<r_\alpha\right)
= 1-\big(1-p_\alpha(\delta)\big)^{N_{\mathrm m}} ,
\ee
which is strictly positive for every $N_{\mathrm m}\ge 1$ and increases to $1$ as $N_{\mathrm m}\to\infty$.

We can now bound the expected post-jump gap for the swarm of mutated models. Using $\min\{c(\delta),Z_{\min}\}=c(\delta)-[c(\delta)-Z_{\min}]_+$, where $x_+=\max\{x,0\}$, we have
\be
\label{eq:L_star_delta}
\mathbb E\left[L^{\star,(k)}(\mathbf{M}_k,\delta)\middle|\eta_k=\delta\right] = c(\delta) - \mathbb E\left[(c(\delta)-Z_{\min})_+ \middle|\eta_k=\delta\right] < c(\delta) ,
\ee
because $\Pr(Z_{\min}<c(\delta)\mid \eta_k=\delta)>0$ from \autoref{eq:p_alpha}. 

Since \autoref{eq:L_star_delta} holds for all $\delta\neq 0$ and the losses are bounded (resulting in finite expectations), we can uncondition by integrating both sides over $\mathbb P_{\eta_k}(\eta_k)$ and obtain
\be
\label{eq:rho_N_m}
\rho_{N_{\mathrm m}} = \mathbb E\left[L^{\star,(k)}(\mathbf{M}_k,\eta_k)\right] = \int \mathbb E\left[L^{\star,(k)}(\mathbf{M}_k,\delta)\middle|\eta_k=\delta\right] d\mathbb P_{\eta_k}(\delta) < \int c(\delta) d\mathbb P_{\eta_k}(\delta) = \mathbb E\left[ G^{(k)}(-\eta_k)\right] = \rho_\mathrm{base} .
\ee
Non-negativity of $\rho_{N_{\mathrm m}}$ follows because $G^{(k)}\geq 0$ and $L^{\star,(k)}$ is a minimum of non-negative terms. This way, we obtain the desired inequality:
\be
0 \le \rho_{N_{\mathrm m}} < \rho_\mathrm{base} .
\ee
Monotonicity in $N_{\mathrm m}$ is immediate since $Z_{\min}$ is a minimum over a larger set as $N_{\mathrm m}$ grows. We can express this formally by defining
\be
Z_{\min}^{(N)}=\min_{1\le i\le N} G^{(k)}\left(M_k^{(i)}-\delta\right) ,
\qquad
L_{N}^{\star,(k)}(\mathbf{M}_k,\delta)=\min\left\{c(\delta),Z_{\min}^{(N)}\right\} .
\ee
Adding one more mutated model gives
\be
Z_{\min}^{(N+1)}=\min\left\{Z_{\min}^{(N)},G^{(k)}(M_k^{(N+1)}-\delta)\right\}\le Z_{\min}^{(N)} .
\ee
As a result, we obtain a pointwise argument for monotonicity in $N_{\mathrm m}$:
\be
L_{N+1}^{\star,(k)}(\mathbf{M}_k,\delta)=\min\{c(\delta),Z_{\min}^{(N+1)}\}
\le \min\{c(\delta),Z_{\min}^{(N)}\}=L_{N}^{\star,(k)}(\mathbf{M}_k,\delta) .
\ee
Taking expectations (first conditional on $\eta_k=\delta$, then unconditioning) yields
\be
\rho_{N_{\mathrm m}+1} = \mathbb E\left[L_{N+1}^{\star,(k)}(\mathbf{M}_k,\eta_k)\right]
\le \mathbb E\left[L_{N}^{\star,(k)}(\mathbf{M}_k,\eta_k)\right] = \rho_{N_{\mathrm m}} ,
\ee
with strict inequality under full support and $\Pr(\eta_k=0)=0$.

Having demonstrated monotonicity in $N_{\mathrm m}$, we can now show that $\rho_{N_{\mathrm m}}\to 0$ as $N_{\mathrm m}\to\infty$. For any $\varepsilon>0$, by continuity of $G^{(k)}$ choose $r_\varepsilon>0$ with $\|u\|\le r_\varepsilon \Rightarrow G^{(k)}(u)\le \varepsilon$ as in \autoref{eq:G_bound}. Full support gives
\be
\Pr\left(Z_{\min}\le \varepsilon \middle|\eta_k=\delta\right) = 1-\big(1-p_\varepsilon(\delta)\big)^{N_{\mathrm m}} \xrightarrow[N_{\mathrm m}\to\infty]{} 1 ,
\ee
where $p_\varepsilon(\delta)=\Pr(\|M_k^{(i)}-\delta\|\le r_\varepsilon)>0$ for any $i\in\{1,\dots,N_{\mathrm m}\}$. The expected post-jump gap is bounded by the weighted sum of the two terms in the definition of $L^{\star,(k)}(\mathbf{M}_k,\delta)$, where the weights are the probabilities of the events that the mutated model does or does not land within the ball of radius $r_\varepsilon$ around the new optimum. As $N_{\mathrm m}\to\infty$, the first weight tends to unity and the second weight tends to zero, i.e.
\be
\mathbb E\left[L^{\star,(k)}(\mathbf{M}_k,\delta)\middle|\eta_k=\delta\right]
\le \varepsilon\cdot \Pr(Z_{\min}\le\varepsilon\mid\eta_k=\delta)
+ c(\delta)\cdot \Pr(Z_{\min}>\varepsilon\mid\eta_k=\delta)
\xrightarrow[N_{\mathrm m}\to\infty]{} \varepsilon .
\ee
Therefore, unconditioning over $\eta_k$ yields
\be
\rho_{N_{\mathrm m}} = \mathbb E\left[L^{\star,(k)}(\mathbf{M}_k,\eta_k)\right] = \int \mathbb E\left[L^{\star,(k)}(\mathbf{M}_k,\delta)\middle|\eta_k = \delta\right] d\mathbb P_{\eta_k}(\delta) \le \varepsilon .
\ee
Since $\varepsilon>0$ is arbitrary, we obtain $\lim\limits_{N_{\mathrm m}\to\infty}\rho_{N_{\mathrm m}}=0$.

Finally, we can now show that the expected jump contribution to $\mathbb E[\mathcal{R}_T(h_{1:T})]$ is strictly below the single-model contribution. Let $N_T$ be the number of jumps up to time $T$. By the same argument as in \autoref{thm:linear_regret}, $\mathbb E[N_T]=\lambda T+O(1)$. Each jump contributes an expected post-jump gap $\rho_{N_{\mathrm m}}$, so by linearity of expectation
\be
\mathbb E\left[\sum_{k:\tau_k\le T} L^{\star,(k)}(\mathbf{M}_k,\eta_k)\right] = \mathbb E[N_T]\rho_{N_{\mathrm m}} = (\lambda T)\rho_{N_{\mathrm m}} + O(1) < (\lambda T)\rho_\mathrm{base} + O(1) .
\ee
This is the jump contribution in $\mathbb E[\mathcal R_T(h_{1:T})]$. The inequality follows directly from $\rho_{N_{\mathrm m}}<\rho_\mathrm{base}$ (\autoref{eq:rho_N_m}). Adding the within-segment adaptation term $\bar a\lambda T+O(1)$ on both sides as in \autoref{thm:linear_regret} yields the overall bound
\be
\mathbb E\left[\mathcal R_T(\text{best-of-swarm})\right] \le (\rho_{N_{\mathrm m}}+\bar a)\lambda T + O(1) < (\rho_\mathrm{base}+\bar a)\lambda T + O(1) .
\ee
Here, `best-of-swarm' denotes the process that, at each time $t$, takes the minimum instantaneous risk across the $N_{\mathrm m}$ models. By definition of this process, the expectation is upper-bounded because taking a pointwise minimum across models cannot increase the within-segment adaptation term relative to any single model, but could strictly decrease it. This completes the proof of \autoref{thm:swarm_gap}.
\end{proof}

\autoref{thm:swarm_gap} formalizes the Flawed-in-Nature mechanism, i.e.\ ex-ante parameter diversity reduces the expected post-jump gap. Combining this with the decomposition in the proof of \autoref{thm:linear_regret} yields the swarm analogue of the single-model bound. Most importantly, the argument relies only on bounded losses (for integrability), continuity of the excess-risk profile at the optimum, non-degenerate jumps, and mutation increments that are independent of the preceding data, mean-zero, independent across models, and with full support. It is agnostic to the details of within-segment training and thus applies generally.

We emphasize that the full-support condition is an asymptotic guarantee. In any finite segment, the cumulative mutation vector is initially concentrated near the pre-jump parameter values. The effective coverage depends on the dimensionality $p$, the mutation rate $\mu$, and the expected segment length $1/\lambda$. In the experimental setting of \S\ref{sec:experiments} (which adopts $p=4$, $\mu=0.1$, $\lambda=0.1$), the expected number of mutations per coefficient per segment is $\mu/(p\lambda) \approx 0.25$, implying that most coefficients of a given model have not mutated within a typical segment. The practical benefit therefore relies on the mutation-induced spread being commensurate with the typical jump magnitude, rather than on covering the full parameter space. This is satisfied when the mutation drift rate matches the environmental drift rate (\S\ref{sec:optimization}).

In \S\ref{sec:inference} we provide a practical inference-synthesis layer that, in expectation, reduces regret relative to the unmutated base model by allocating more weight to the best-performing members in a swarm of mutated models.

\section{Inference Synthesis} \label{sec:inference}
As posited in \S\ref{sec:intro}, the evolution of intelligence in nature has required the combination of heritable variation (expressed as model mutation in \S\ref{sec:mutation_def}) and natural selection \citep{darwin59}. In this paper, we follow an analogous approach. The performance benefit of the Flawed-in-Nature mechanism is realized by synthesizing the intelligence generated by the model swarm into a single inference. The key difference to natural selection is that we do not apply a binary (hard) selection function to control the contribution of different models to the swarm, but instead aggregate the model outputs using a weighted average (soft selection).

Many examples of inference synthesis mechanisms exist, in centralized and decentralized forms \citep[e.g.][]{jacobs91,gneiting13,yao18,carvalho23}. Here we use the mechanism of \citet{kruijssen24}, which describes the Allora Network, a decentralized model coordination network. In its original form, this mechanism synthesizes a swarm's outputs via a context-aware linear pool with dynamic weights derived from forecasted regrets \citep[also see e.g.][]{yao18,pfeffer25}. For the purpose of this paper, we omit regret forecasting and instead set weights by applying a scaled logistic gate to an exponential moving average (EMA) of historical regrets \citep[also see e.g.][]{catania18}. This is a saturating alternative to exponentially weighted average forecasting \citep[EWAF, e.g.][]{cesa06} that is closely related to the Hedge algorithm of \citet{freund97b} and sits at the intersection of linear predictive pooling \citep[e.g.][]{gneiting13,yao18} and online expert aggregation. The post-jump regime switching central to the Flawed-in-Nature mechanism has a structural parallel to the sleeping experts framework \citep{freund97}, where the identity of the best expert changes over time \citep[see also][Chapter~5]{cesa06}. We first summarize the key aspects of the adopted, simplified mechanism, before demonstrating that it successfully captures the advantage offered by mutating model swarms.

\subsection{Summary of the Adopted Inference Synthesis Mechanism} \label{sec:mechanism}
Here we provide a brief summary of the adopted inference synthesis mechanism. For a more detailed description that includes regret forecasting, see \citet{kruijssen24}.

Assume an online data stream that is used to generate periodic inferences for a number of time steps or epochs $N_\mathrm{e}$, with $i\in\{1,\dots,N_\mathrm{e}\}$. We consider a system of $N_{\mathrm m}$ models, where each model $j\in\{1,\dots,N_{\mathrm m}\}$ produces a scalar inference $I_{ij}$ using its own dataset $\mathcal{D}_{ij}$ and model $h_{ij}$, i.e.
\be
I_{ij} = h_{ij}(\mathcal{D}_{ij}) .
\ee
The inference synthesis mechanism is a linear pool with dynamic weights that define the synthesized inference as
\be
\label{eq:I_i}
I_i = \frac{\sum_j w_{ij} I_{ij}}{\sum_j w_{ij}} .
\ee
The weights $w_{ij}$ are derived from the historical performance of the individual models. They are calculated at the beginning of each epoch $i$ and are given by
\be
w_{ij} = g(\hat{\mathcal{R}}_{i-1,j}) ,
\ee 
where $w_{ij}$ is the weight of the $j$-th model during epoch $i$, $\hat{\mathcal{R}}_{i-1,j}$ is the normalized historical regret of the network relative to the $j$-th model up to the end of epoch $i-1$, and $g(x)$ is the regret-to-weight mapping function, which is given by
\be
\label{eq:g_x}
g(x) = \frac{p}{\mathrm{e}^{-p(x-c)}+1} ,
\ee
where $p$ sets the normalization and slope of the mapping function, and $c$ sets its knee. For the purpose of this paper, we adopt $p=3$ and $c=0.75$ (see \citealt{kruijssen25} for details). This function results in a monotonically increasing weight with the network regret. The regret is normalized by the standard deviation across all models to control the distribution of weights near the knee of the mapping function, i.e.
\be
\label{eq:regret_norm}
\hat{\mathcal{R}}_{i-1,j} = \frac{\mathcal{R}_{i-1,j}}{\sigma_{i-1}+\epsilon} ,
\ee
where $\sigma_{i-1}=\mathrm{std}(\mathcal{R}_{i-1,1},\dots,\mathcal{R}_{i-1,N_{\mathrm m}})$ is the standard deviation of the historical regrets across all $N_{\mathrm m}$ models up to the end of epoch $i-1$, and $\epsilon=0.01$ is a small constant to avoid division by zero. The regret is defined as an EMA of the instantaneous excess loss, i.e.
\be
\label{eq:regret}
\mathcal{R}_{ij} = \alpha(\log\mathcal{L}_i - \log\mathcal{L}_{ij}) + (1-\alpha) \mathcal{R}_{i-1,j} ,
\ee
where $\mathcal{L}_i$ is the instantaneous loss of the synthesized inference $I_i$, $\mathcal{L}_{ij}$ is the instantaneous loss of the $j$-th model's inference $I_{ij}$, and $\alpha\in(0,1]$ is the EMA parameter. We adopt $\alpha = 0.1$ as the fiducial value, which provides a reasonable balance between historical performance and recency \citep{kruijssen25}. This log-loss formulation requires strictly positive losses, which is satisfied under common loss functions when predictions are not exact.

The above system of equations means that the weight of a model is a scaled logistic gate in the standardized log-loss difference between the synthesized inference and the $j$-th model's inference. It generates a power-law relation between weight and loss for $\hat{\mathcal{R}}\ll c$ (where $g(x)\propto \mathrm{e}^{p(x-c)} \Rightarrow w_{ij}\propto \mathcal{L}_{ij}^{-\beta}$ with $\beta=p/(\sigma_{i-1}+\epsilon)$) and a knee at $\hat{\mathcal{R}}=c$ (with $w_{ij}\rightarrow p$ as $\hat{\mathcal{R}}\rightarrow\infty$). Losses $\mathcal{L}_i$ and $\mathcal{L}_{ij}$ are realized at the beginning of the next epoch $i+1$, allowing $\hat{\mathcal{R}}_{i,j}$ to be updated then and weights $w_{i+1,j}$ to use $\hat{\mathcal{R}}_{i,j}$.

The full inference synthesis mechanism described in \citet{kruijssen24} additionally considers context-aware `forecast-implied inferences', which use forecasted regrets to set weights and are then added to the pool of model inferences. Each forecast-implied inference incorporates information about the expected performance of the models under the current conditions. While clearly beneficial in live systems, the current paper does not require this additional layer of complexity. Therefore, we omit performance forecasting and rely exclusively on historical performance to set weights. In short, our simplified variant is a context-aware linear pool \citep{gneiting13} with online, regret-based weights in the spirit of EWAF \citep{cesa06}. In the full system \citep{kruijssen24}, the use of forecasted regrets corresponds to input-dependent stacking \citep{yao18}.

\subsection{Flawed-in-Nature Advantage Captured by Inference Synthesis} \label{sec:advantage}
The Flawed-in-Nature argument in \S\ref{sec:proof} shows that, after an environmental jump, a model swarm exhibiting population diversity is likely to contain at least one model with instantaneous loss close to the Bayes risk. In \S\ref{sec:mechanism}, we then describe how Allora's inference synthesis layer forms a weighted linear pool with weights $w_{ij}=g(\hat{\mathcal R}_{i-1,j})$ (i.e.\ a scaled logistic gate on an EMA of the standardized network regret). We now quantify why this synthesis is sufficient to inherit the advantage introduced by the Flawed-in-Nature mechanism.

\autoref{thm:swarm_gap} states that, for a swarm with $N_{\mathrm m}$ mutating models, the expected post-jump excess loss satisfies $0\le\rho_{N_m}<\rho_\mathrm{base}$ with $\rho_{N_{\mathrm m}}\downarrow0$ as $N_{\mathrm m}\to\infty$. After an environmental jump, it is therefore likely that at least one mutated model lands close to the new Bayes optimum. The inference synthesis mechanism described in \S\ref{sec:mechanism} turns this model population diversity into performance via three steps. First, the `lucky' mutated model's instantaneous loss $\mathcal{L}_{ij}$ drops as a result of the environmental jump. Secondly, the regret in \autoref{eq:regret} increases the model's advantage relative to the network $\mathcal{R}_{ij}$ via their $\log\mathcal{L}$ differences. Thirdly, the logistic gate $g(x)$ in \autoref{eq:g_x} assigns a larger weight $w_{ij}$ to the lucky mutated model, so it contributes more strongly in the linear pool of \autoref{eq:I_i}. The pooled predictor $I_i$ then tends towards this best available model.

We can formalize the above intuition as follows. As defined in \S\ref{sec:inference}, $\mathcal{L}_{ij}=\ell(I_{ij}, y_i)$ is the loss of model $j$ at epoch $i$ and $\mathcal{L}_i=\ell(I_i,y_i)$ is the loss of the synthesized inference. Additionally, we define the normalized weight as
\be
\label{eq:norm_weight}
\hat{w}_{ij}=\frac{w_{ij}}{\sum_k w_{ik}}.
\ee
Consistently with \S\ref{sec:proof}, we define the loss of the best-performing model as $\mathcal{L}_i^\star=\min_j \mathcal{L}_{ij}$. By \citet{jensen06}'s inequality, the convexity of the evaluation loss (true for the mean squared error loss used below) implies the bound
\be
\label{eq:norm_weight_bound}
\mathcal{L}_i\leq\sum_j\hat{w}_{ij}\mathcal{L}_{ij} ,
\ee
from which it follows that the synthesized inference's excess loss is bounded by
\be
\label{eq:norm_weight_bound_2}
\mathcal{L}_i-\mathcal{L}_i^\star\leq\sum_{j\neq \star}\hat{w}_{ij}(\mathcal{L}_{ij}-\mathcal{L}_i^\star) .
\ee
If we then define the worst gap as $G_i^{\max}=\max_j\big(\mathcal{L}_{ij}-\mathcal{L}_i^\star\big)\geq 0$, we can upper-bound the above by
\be
\label{eq:norm_weight_bound_3}
\mathcal{L}_i-\mathcal{L}_i^\star\leq\sum_{j\neq \star} \hat{w}_{ij}G_i^{\max} = (1-\hat{w}_{i\star})G_i^{\max} .
\ee
This bound can equivalently be expressed as a bound on the regret update term in \autoref{eq:regret} (i.e.\ a bound on the log-loss difference), by defining $r_{ij}=\mathcal{L}_{ij}/\mathcal{L}_i^\star$, with $r_{i\star}=1\leq r_{ij}$ for all $j\neq \star$, and $r^{\max}_i=\max_j r_{ij}$. Then we can use $\sum_{j\neq \star} \hat{w}_{ij}r_{ij} \leq (1-\hat{w}_{i\star})r^{\max}_i$ to obtain
\be
\label{eq:norm_weight_bound_4}
\log\mathcal{L}_i-\log\mathcal{L}_i^\star \leq \log{\left\{1+(1-\hat{w}_{i\star})(r^{\max}_i-1)\right\}} \leq (1-\hat{w}_{i\star})(r^{\max}_i-1) .
\ee

Both of these bounds show that the excess loss of the synthesized inference is bounded by the worst gap, scaled by the normalized weights of the other models. In other words, the only way the synthesized inference can lag the best model is through the allocation penalty $(1-\hat{w}_{i\star})$. As $\hat{w}_{i\star}\uparrow1$, the allocation penalty vanishes and the synthesized inference tracks the best model at epoch $i$.

The question then becomes whether (and how quickly) the allocation penalty may become small in practice. To answer this, we need to understand how the logistic gate $g(x)$ behaves in conjunction with the EMA on the regrets. We first define the standardized advantage $d_i$ of the best model relative to its closest competitor as
\be
\label{eq:std_advantage}
d_i = \hat{\mathcal{R}}_{i-1,\star}-\max_{j\ne\star}\hat{\mathcal{R}}_{i-1,j} \equiv \hat{\mathcal{R}}_{i-1,\star}-m_i \geq 0 ,
\ee
where $\hat{\mathcal{R}}_{i-1,\star}$ is the standardized network regret of the best model at epoch $i-1$ as defined in \autoref{eq:regret_norm}, and $m_i=\max_{j\ne\star}\hat{\mathcal{R}}_{i-1,j}$ is its maximum across all other models. Because $g(x)$ is monotonically increasing in the standardized regret, the normalized weight of the best model
\be
\label{eq:norm_weight_monotone}
\hat{w}_{i\star} = \frac{g(\hat{\mathcal{R}}_{i-1,\star})}{\sum_k g(\hat{\mathcal{R}}_{i-1,k})} \geq \frac{g(m_i+d_i)}{g(m_i+d_i)+(N_{\mathrm m}-1)g(m_i)} ,
\ee
is also a monotonically increasing function of $d_i$. If we now define $a_i=\mathrm{e}^{-p(m_i-c)}$, we can rewrite the above to obtain an upper bound on the allocation penalty:
\be
\label{eq:norm_weight_monotone_2}
1-\hat{w}_{i\star} \leq \frac{(N_{\mathrm m}-1)(1+a_i\mathrm{e}^{-p d_i})}{(1+a_i)+(N_{\mathrm m}-1)(1+a_i\mathrm{e}^{-p d_i})} .
\ee

Two regimes now enable the synthesized inference's excess loss from \autoref{eq:norm_weight_bound_2} to become small. In Regime (A), a single (mutated) model clearly outperforms post-jump. In this case, the allocation penalty $(1-\hat{w}_{i\star})$ decays to a small value exponentially on the EMA timescale. In Regime (B), the best model does not exhibit a clear advantage post-jump and the network regrets of the other models cluster near its value. In this case, the allocation penalty does not tend to zero, but the worst gap $G_i^{\max}$ will be small.

We first consider Regime (A), in which a single (mutated) model clearly outperforms post-jump. In that case, its standardized advantage is large ($d_i\gg0$), its standardized network regret is positive ($\hat{\mathcal{R}}_{i-1,\star}>0$), and the standardized network regret of the other models is negative ($m_i\ll0$). This results in $a_i\mathrm{e}^{-p d_i}\to0$ if the best model's standardized regret exceeds the knee in $g(x)$, i.e.\ $\hat{\mathcal{R}}_{i-1,\star}>c$, allowing us to decompose the allocation penalty into a plateau term and a transient decay term as
\be
\label{eq:penalty-plateau-split}
1-\hat{w}_{i\star} \leq \frac{(N_{\mathrm m}-1)\,\big(1+a_i\,\mathrm e^{-p d_i}\big)}{(1+a_i) + (N_{\mathrm m}-1)\,\big(1+a_i\,\mathrm e^{-p d_i}\big)} = \underbrace{\frac{N_{\mathrm m}-1}{N_{\mathrm m}+a_i}}_{\text{plateau}} + \underbrace{\Phi_i}_{\text{decay}} ,
\ee
where the plateau term tends to a small constant since $a_i$ is large for $m_i\ll0$ and fixed $N_{\mathrm m}$ (unless $N_{\mathrm m}$ is very large, in which case dilution effects may dominate and the plateau term increases). The transient decay term $\Phi_i$ exponentially decays under the EMA of \autoref{eq:regret} as $a_i\mathrm{e}^{-p d_i}\to0$:
\be
\label{eq:penalty-transient}
\Phi_i = \frac{(N_{\mathrm m}-1)\,a_i\,(a_i+1)\,\mathrm e^{-p d_i}}{\big(N_{\mathrm m}+a_i\big)\big(N_{\mathrm m}+a_i+(N_{\mathrm m}-1)\,a_i\,\mathrm e^{-p d_i}\big)} \approx \frac{(N_{\mathrm m}-1)\,a_i\,(a_i+1)}{(N_{\mathrm m}+a_i)^2}\,\mathrm e^{-p d_i} .
\ee
The timescale for the exponential decay of $\Phi_i$ is derived by considering the update of the advantage $d_i$ in \autoref{eq:std_advantage} under the EMA of \autoref{eq:regret}. The regrets are calculated through an EMA of the log-loss differences, so initially the advantage grows linearly at a rate
\be
\label{eq:advantage-growth}
\gamma = d_{i+1}-d_i \approx \frac{\alpha}{\sigma_i+\epsilon}\,\overline{D}_i ,
\ee
where $\overline{D}_i = \min_{j\ne\star}\big(\log \mathcal{L}_{ij} - \log \mathcal{L}_{i\star}\big)\geq0$ is the log-loss difference between the best and second-best model at epoch $i$. The approximation in the final term assumes that $d_i\ll\overline{D}_i/(\sigma_i+\epsilon)$, i.e.\ the best model's standardized advantage is small compared to the standardized log-loss difference used in the EMA update. This is typically satisfied shortly after a meaningful jump.

For the purpose of the decay timescale calculation, we now assume that $\sigma_i$ and $\overline{D}_i$ vary slowly immediately after a jump and treat them as locally constant, i.e.\ $\sigma_i\approx\sigma$ and $\overline{D}_i\approx\overline{D}$. Combining this with \autoref{eq:penalty-transient} allows us to obtain the half-life $t_{1/2}$ of the allocation penalty from the exponential decay term $\mathrm{e}^{-p d_i}$ as
\be
\label{eq:decay-rate}
t_{1/2} = \frac{\ln 2}{p\gamma} = \frac{(\sigma+\epsilon)\,\ln 2}{p\,\alpha\,\overline{D}} .
\ee
This expression shows that the timescale for concentrating the weight on the best model shortens for a steeper logistic gate (larger $p$), a faster EMA (larger $\alpha$), or a larger post-jump advantage relative to the standard deviation of the network regrets (larger $\overline{D}/\sigma$). In practice, we expect $\epsilon\ll\sigma$, so that (for fiducial parameters) the half-life is approximately
\be
\label{eq:half-life-approx}
t_{1/2} \approx \frac{\sigma\,\ln 2}{p\,\alpha\,\overline{D}} \approx 2.31\frac{\sigma}{\overline{D}} \quad \text{for } p=3 \text{ and } \alpha=0.1 .
\ee
This means that a rapid adjustment of the synthesized inference to the best model (i.e.\ within a handful of epochs) is expected to occur if the log-loss difference between the best and second-best model is at least of the order of the standard deviation of the network regrets. If immediately after a jump the best-in-loss model does not yet have the largest standardized regret (i.e.\ $d_i<0$), then in the early-time regime $d_i$ still grows at a rate $\approx\gamma$ per epoch and becomes positive within $\approx|d_i|/\gamma$ steps. Once $d_i\ge 0$ (and $\hat{\mathcal R}_{i-1,\star}>c$), the decay analysis above applies. Adding a set of performance forecasting models \citep{pfeffer25} to enable the use of forecasted regrets would reduce the timescale for adjustment.

Finally, we consider the complementary regime (B) wherein the synthesized inference's excess loss may become small. This occurs when the best model does not exhibit a clear advantage post-jump and the network regrets of the other models cluster near its value. In this case, the allocation penalty $(1-\hat{w}_{i\star})$ does not tend to zero, but the worst gap $G_i^{\max}$ will be small. In other words, the small outperformance of the best model means that the synthesized inference cannot be much worse, regardless of the weight distribution across the model pool. In practice, decentralized learning pools may exhibit both regimes after a jump. Initially, the best (mutated) model has a clear advantage, but as the other models adjust to the environmental change, that advantage may shrink and the other models catch up. In the context of the above regimes, this corresponds to a gradual transition from Regime (A) to Regime (B), across which the synthesized inference's excess loss remains small. 

We see that the Flawed-in-Nature mechanism increases the chance of a strong advantage after an environmental change. The logistic gate drives down $(1-\hat w_{i\star})$ exponentially on the EMA timescale and the synthesized inference's loss tends towards the loss of the best model. This means that inference synthesis naturally inherits the swarm's diversity benefit by rapidly tracking the best model after a jump, as long as it offers a notable advantage.

\section{Numerical Experiments Comparing Original and Mutating Model Swarms} \label{sec:experiments}
We now turn to an experimental validation of the Flawed-in-Nature mechanism, its underlying theorems, and its realization during inference synthesis as presented in \S\ref{sec:proof}-\ref{sec:inference}. We consider a simple synthetic regression problem with a changing environment, for which we compare the performance of a single original model, a swarm of original models, and a swarm of mutating models. We show that the swarm of mutating models outperforms any single model, thereby breaking the linear regret bound that constrains individual models.

\subsection{Experiment Design and Description of the Models} \label{sec:design}
We consider a three-variable linear regression problem, where the target variable is generated as a linear combination of three features and a non-zero intercept, as well as added Gaussian noise with standard deviation $\sigma_{\mathrm{noise}}=0.1$, representing the irreducible Bayes risk. Both the features and the coefficients of the linear combination are randomly generated from a $\mathcal N(0,1)$ distribution. During their subsequent evolution, these four coefficients each independently undergo a random walk through a Poisson process with rate $\lambda=1/C_{\mathrm{dt}}$ and magnitude drawn from a normal distribution $\mathcal{N}(0,C_{\mathrm{std}})$. We adopt defaults of $C_{\mathrm{std}}=0.1$ and $C_{\mathrm{dt}}=10$ epochs, chosen to ensure sufficient jump events within the experiment duration. This creates a numerical environment that satisfies the drift model of \S\ref{sec:drift}. When needed, a static environment is created by setting $C_{\mathrm{dt}}\rightarrow\infty$.

The experiments discussed in this section consider the performance of a single original model, a swarm of original models, and a swarm of mutating models. Each model is a ridge regression model with a regularization parameter $\alpha_{\mathrm{ridge}}$ that we draw randomly from a log-uniform distribution over the interval $[10^{0},10^{2.5})$. The training data for the models is constituted by the features used to generate the target variable as described above, as well as the target variable itself. Each epoch, the models are (re-)trained on the entire observation history up to and including the current epoch. Note that the theorem demonstrations below may deviate from this default in select places, in which case we clearly describe and motivate the deviations.

The two parallel swarms of original and mutating models consist of $N_{\mathrm{m}}=32$ models each (matching the Allora network's inference pool size, see \citealt{kruijssen24b}). The `original' (baseline) models are trained as described above, without any mutations. The models in the `mutated' swarm are direct copies of the original models that are subjected to a compounding history of coefficient-level perturbations. This is achieved through a per-coefficient Poisson point process. Analogously to the environmental evolution, we define an overall mutation rate per model (i.e.\ the rate at which a model is expected to undergo a mutation across all coefficients) of $\mu=1/M_{\mathrm{dt}}$, and a per-coefficient mutation magnitude drawn from a normal distribution $\mathcal{N}(0,M_{\mathrm{std}})$. We adopt defaults of $M_{\mathrm{std}}=0.1$ and $M_{\mathrm{dt}}=10$ epochs, chosen to match the environmental evolution, but vary these parameters in \S\ref{sec:optimization}. This mutation rate results in a per-coefficient mutation probability of $q_c=\mu/p$ per epoch as in \autoref{eq:mutations} (where $p$ is the number of coefficients).

For each model, we define a mutation vector $\mathbf{m}_i$ that accumulates over time, where each element $m_{i,c}$ is the cumulative mutation of the $c$-th coefficient (including the intercept). The mutation vector is initialized to zero ($m_{0,c} = 0\ \forall\ c$) and updated at each epoch as
\be
\label{eq:mutation-vector-update}
m_{i,c} = m_{i-1,c} + \varepsilon_{i,c}\varpi_{i,c} ,
\ee
where $\varepsilon_{i,c}\sim\mathcal{N}(0,M_{\mathrm{std}})$ is a random variable drawn from a normal distribution, and $\varpi_{i,c}\in\{0,1\}$ is a binary indicator drawn from a Bernoulli distribution with mutation probability $q_c$, i.e.\ $\varpi_{i,c}\sim\mathrm{Bernoulli}(q_c)$. The mutation vector is then used to obtain the mutated model's coefficient vector $\mathbf{c}_i^\mathrm{m}$ from the original model's coefficient vector $\mathbf{c}_i$ as
\be
\label{eq:mutation-vector-update-2}
\mathbf{c}_i^\mathrm{m} = \mathbf{c}_i + \mathbf{m}_i .
\ee

Inference synthesis is then performed as described in \S\ref{sec:inference}, where the logistic gate parameters are set to $p=3$ and $c=0.75$, and the EMA parameter is set to $\alpha=0.1$ for moderate historical performance tracking. This results in two network inferences, one for the original models ($I_i$) and one for the mutated models ($I_i^\mathrm{m}$). All inferences are compared to the ground truth using a mean squared error (MSE) loss function. A single experiment spans 1000 epochs by default, by which meaningful statistical significance has typically been reached.

Some of the experiments discussed in this section deviate from the above, default setup. In those cases, we describe exactly how the experiments are conducted. This pertains mostly to the theorem demonstrations, which do not always require the full complexity of the default setup.

Our default experimental setup represents a special case for illustrating the much more generally applicable Flawed-in-Nature mechanism. Its simple linear nature allows for a direct comparison between the rate and magnitude of environmental changes ($1/C_{\mathrm{dt}}$ and $C_{\mathrm{std}}$) and model parameter mutations ($1/M_{\mathrm{dt}}$ and $M_{\mathrm{std}}$), which in turn enables a clear way to identify which mutation parameters generate the greatest Flawed-in-Nature advantage. This controlled setting isolates the core mechanism, and extensions to more complex and non-linear settings are discussed in 
§\ref{sec:practical}, where we specifically consider the real-world situation that the rate and magnitude of environmental changes are not known.

\subsection{Simple Experiments Illustrating the Flawed-in-Nature Theorems} \label{sec:results_simple_experiments}
We now turn to a set of simple experiments that illustrate the first three of the Flawed-in-Nature theorems discussed in \S\ref{sec:proof}. While these use some of the setup described in \S\ref{sec:design}, they are greatly simplified to isolate the core mechanisms relevant to each theorem.

\begin{figure}[!t]
  \centering
  \includegraphics[width=0.67\textwidth]{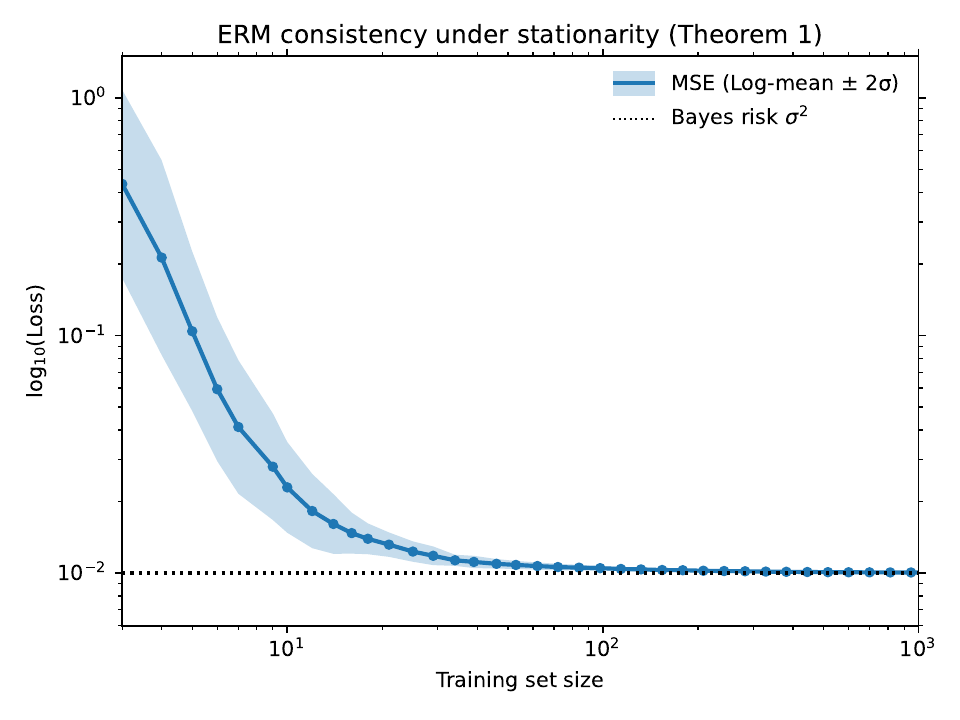}
  \caption{Experimental demonstration of Theorem 1, i.e.\ an empirically-trained model in a static environment reaches the lowest possible loss attainable inside the hypothesis class. Shown is the mean squared error (MSE) loss of the model as a function of the training set size, with the shaded region indicating the $2\sigma$ standard error on the mean across 64 independent runs. The dotted line shows the lowest possible loss attainable inside the hypothesis class, which in this experiment equals the square of the noise in the target variable.}
  \label{fig:theorem1}
\end{figure}
\autoref{thm:erm_consistency} states ERM consistency under stationarity, i.e.\ that as the training set size increases, the model's loss converges to the lowest possible loss attainable inside the hypothesis class. To illustrate this theorem, we modify the setup of \S\ref{sec:design} by first setting the coefficient-change magnitude to zero ($C_{\mathrm{std}}=0$), creating a stationary environment. We then train a single ridge regression model with $\alpha_{\mathrm{ridge}}=0.1$ on increasingly large samples between $3$--$1000$ epochs. For each training set size, we repeat the experiment 64 times and test the model on a held-out test set of 20,000 epochs.

The resulting mean log$_{10}$(MSE) loss curve with $2\sigma$ standard error on the mean is shown in \autoref{fig:theorem1}. As expected, the loss decays towards the irreducible Bayes risk (set by  the noise variance $\sigma_{\mathrm{noise}}^2$) as the training set size increases. This simple experiment illustrates the well-known result that ERM achieves the minimal achievable loss within the hypothesis class under stationarity.

\begin{figure}[!t]
  \centering
  \includegraphics[width=\textwidth]{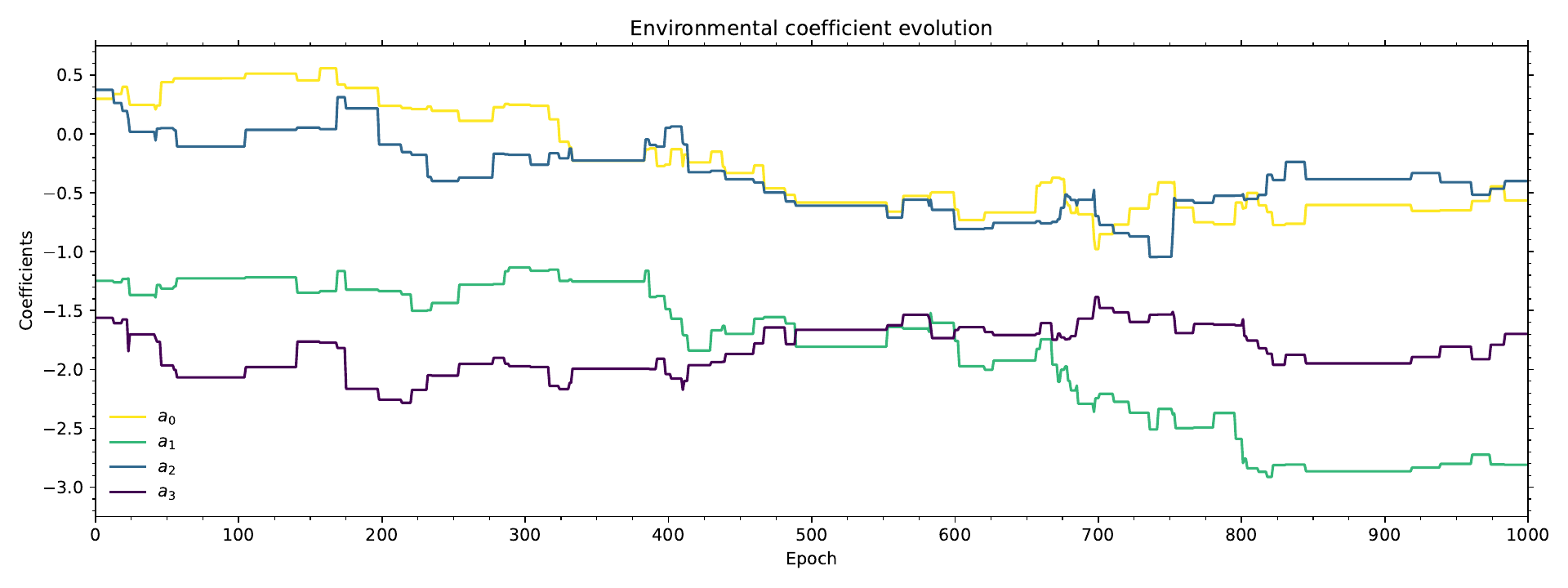}
  \caption{Time evolution of the coefficients defining the environment used in the numerical experiments. The target variable of the experiments is defined by a three-variable linear model with intercept ($a_0$) and slope coefficients ($a_1, a_2, a_3$) that change over time, following a Poisson process with constant rate and magnitude parameters.}
  \label{fig:coefficients}
\end{figure}
In our numerical experiments, a changing environment is created as described in \S\ref{sec:drift}, i.e.\ by perturbing the coefficients of the linear model that defines the environment. For the default parameter setup ($C_{\mathrm{std}}=0.1$ and $C_{\mathrm{dt}}=10$ epochs), the resulting evolution of the coefficients is shown over 1000 epochs in \autoref{fig:coefficients}. The Poisson-driven jumps are clearly visible, with segments of constant coefficients separated by random increments.

Given that the coefficients are drawn from a $\mathcal{N}(0,1)$ distribution, and are thus of order unity, the natural time scale for meaningful environmental changes is of the order of $C_{\mathrm{dt}}/C_{\mathrm{std}} \sim 100$ epochs. This is confirmed visually in \autoref{fig:coefficients}, where the coefficients do not remain stable for more than $\sim200$ epochs.

\begin{figure}[!t]
  \centering
  \includegraphics[width=\textwidth]{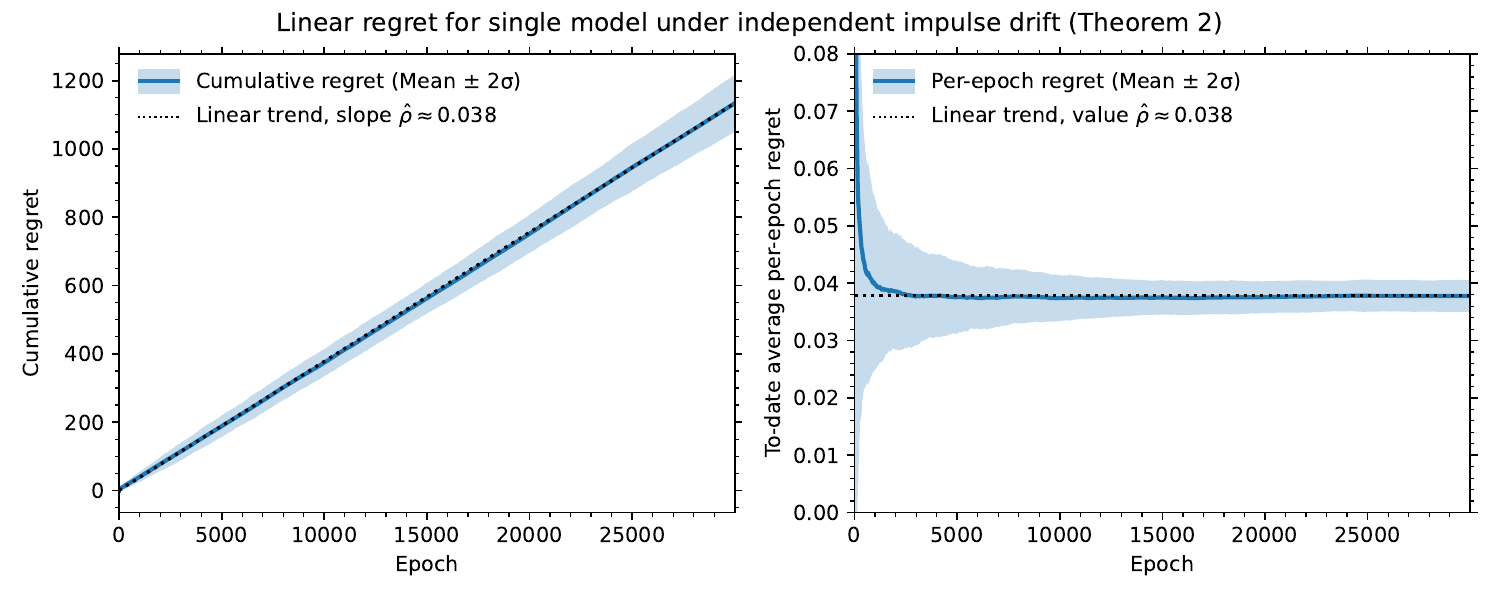}
  \caption{Experimental demonstration of Theorem 2, i.e.\ an empirically-trained model in a changing environment incurs linear regret. Shown are the cumulative regret (left) and the to-date average regret per epoch (right) as a function of the number of epochs. As in \autoref{fig:theorem1}, the shaded region indicates the $2\sigma$ standard error on the mean across (in this case) 64 independent runs. The dotted line shows the linear trend of the cumulative regret.}
  \label{fig:theorem2}
\end{figure}
\autoref{thm:linear_regret} states that an empirically-trained model in a changing environment incurs linear regret. To illustrate this theorem, we use the default setup for a changing environment from \S\ref{sec:design}, but again simplify the experiment by considering only a single ridge regression model (with $\alpha_{\mathrm{ridge}}=0.1$) and test its predictions over a total of 30,000 epochs. Due to the nature of the changing conditions, the training data accumulates information that is irrelevant to the current environment. To maintain a clean experiment and avoid regret biases by stale data, we train the model only on the last 40 epochs of training data at any given time. This number was chosen somewhat arbitrarily to be the same order of magnitude as the mean segment length, and empirically we find it successfully avoids regret biases. The model is retrained this way at each epoch, and each epoch represents an individual test. The Bayes predictor is defined as a linear model using the true coefficients at that epoch. The instantaneous regret is then calculated as the MSE difference between the model's prediction and the Bayes predictor, and the cumulative regret is simply calculated as the sum of the instantaneous regrets up until that time. Finally, we calculate the to-date average regret per epoch as the cumulative regret divided by the epoch number. In order to ascertain statistical significance, the entire experiment is repeated 64 times.

\autoref{fig:theorem2} shows the resulting cumulative regret and average regret per epoch curves, averaged over the 64 realizations, together with the $2\sigma$ standard error on the mean. Dotted lines illustrate the mean linear trend of the cumulative regret. As expected according to \autoref{thm:linear_regret}, the cumulative regret follows the linear trend very closely, and the average regret per epoch stabilizes at a constant value within a few 1000 epochs.

\begin{figure}[!t]
  \centering
  \includegraphics[width=0.67\textwidth]{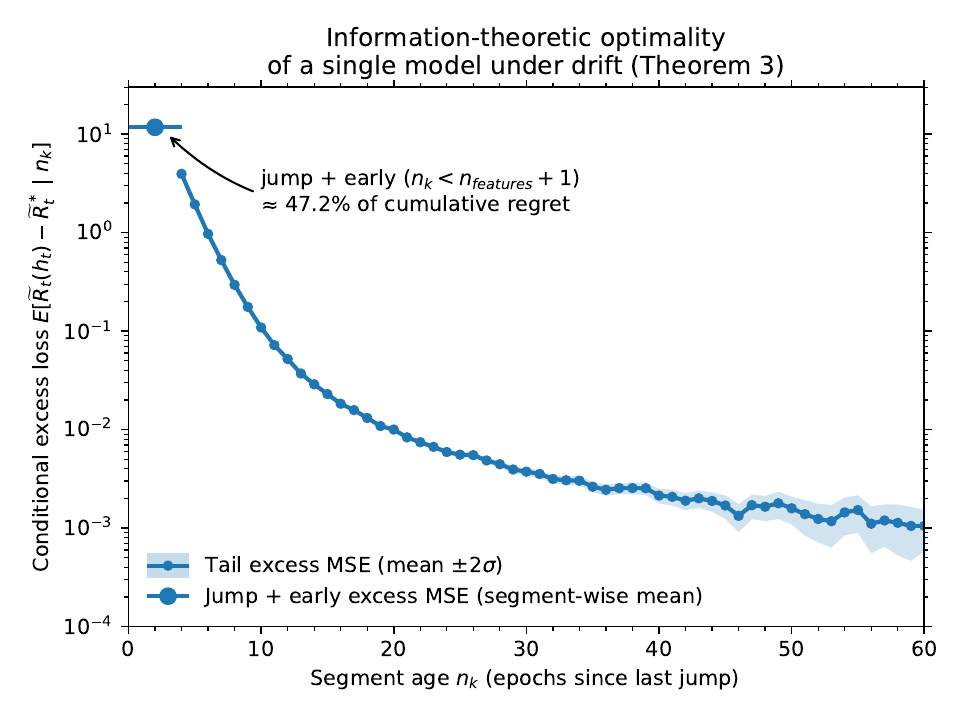}
  \caption{Experimental demonstration of Theorem 3, i.e.\ a single model in a changing environment is still information-theoretically optimal given its access to past observations only, and its excess loss is therefore irreducible. Shown is the conditional excess MSE loss of the model as a function of the number of epochs after a jump (i.e.\ the segment age $n_k$), with the shaded region indicating the $2\sigma$ interval across 64 independent runs. The first four epochs are collected into a single data point, because at $n_k<n_{\mathrm{features}}+1$ there is insufficient data to constrain the model's coefficients. The rapid decay of the conditional excess loss confirms that the single model becomes conditionally Bayes-optimal within each segment.}
  \label{fig:theorem3}
\end{figure}
Furthermore, \autoref{thm:info_optimality} states that even though a model incurs linear regret, it is still information-theoretically optimal given its access to past observations only, and its excess loss is therefore irreducible. To illustrate this theorem, we use the same setup as for \autoref{fig:theorem2}, and specifically evaluate the model performance immediately after an environmental jump (i.e.\ $n_k\geq0$). We again train a ridge regression model with $\alpha_{\mathrm{ridge}}=0.1$ on the data from the latest available segment, i.e. on all data from the previous segment when $n_k=0$, or on the first $n_k$ post-jump observations when $n_k\geq1$. We again compare to the Bayes predictor, which uses the true segment coefficients after the jump. We then calculate the conditional excess loss, i.e.\ the MSE loss difference between the model and the Bayes predictor, which measures how much additional error the model incurs. This gap represents the cost of learning from finite data within each segment, and decays as the model accumulates observations. Within the first $n_{\mathrm{features}}+1$ epochs (in this case 4), the model has insufficient data to constrain its coefficients, and we therefore group the excess losses at these epochs into a single mean value.

The resulting mean conditional excess loss curve with $2\sigma$ standard error on the mean is shown in \autoref{fig:theorem3}. The mean and standard error are taken over the 64 independent runs and all jumps over their 30,000-epoch histories. The curve exhibits rapid decay as the model accumulates observations within the segment, which confirms that within each segment the model converges to the segment-specific Bayes optimum given enough data. Nearly half of the cumulative regret is generated during the first four epochs after a jump. This shows that the irreducible regret from \autoref{thm:linear_regret} and \autoref{fig:theorem2} is concentrated at jump times rather than during stationary periods, and the model itself is still information-theoretically optimal given its access to past observations only.

\subsection{Model Swarm Population Statistics} \label{sec:results_swarm_statistics}
We now turn to an analysis of the full model swarm population of the experiments described in \S\ref{sec:design}. In summary, we consider two swarms of 32 models each. The `original' swarm is trained on the complete historical training data up until the current epoch, and the `mutated' swarm contains copies of the original swarm, but with each model accumulating mutations according to the process described in \S\ref{sec:design}.

\begin{figure}[!t]
  \centering
  \includegraphics[width=\textwidth]{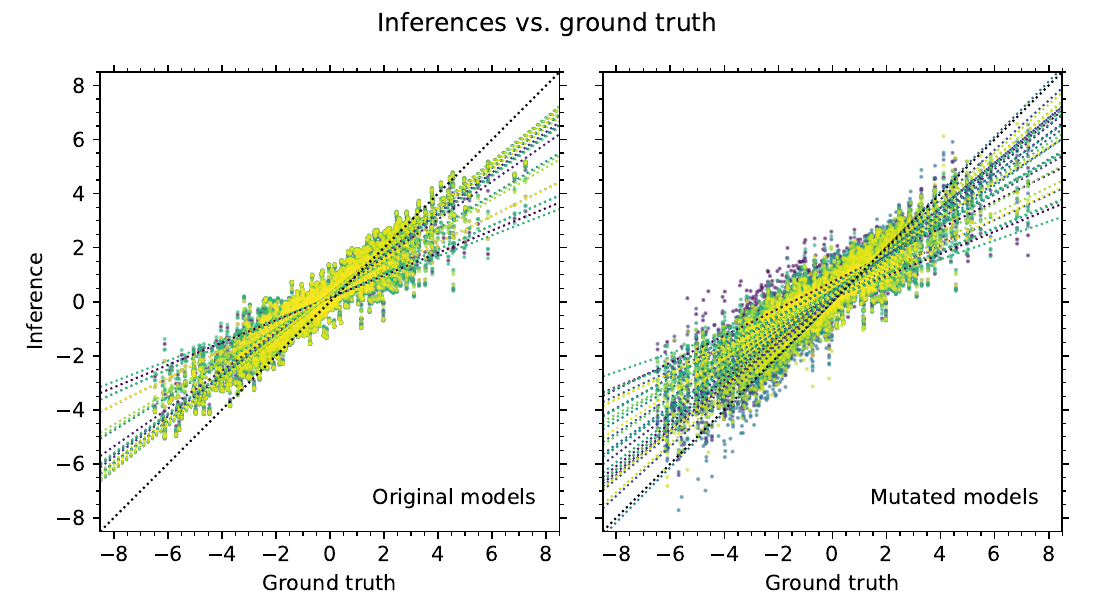}
  \caption{Individual model inferences as a function of the ground truth for the original models (left) and the mutated models (right). Each swarm consists of 32 models, where inferences from the same model are shown in the same color. The black dotted line shows the 1:1 relation, whereas the colored lines show the best-fitting linear regression to the data. The mutated models exhibit more scatter around the 1:1 relation, indicating a more diverse set of inferences. As a result, the best mutated model resides more closely to the 1:1 relation than the best original model.}
  \label{fig:inferences_vs_ground_truth}
\end{figure}
\autoref{fig:inferences_vs_ground_truth} shows the individual inferences of the original (left) and mutated models (right) as a function of the ground truth. Naturally, by perturbing the model coefficients, the mutations introduce a form of noise on the model predictions. This generates a population with greater model diversity, and thus greater scatter around the 1:1 relation. We also see that the best mutated model resides more closely to the 1:1 relation than the best original model. This can be understood as a result of the post-jump gap experienced by the original models. These models are trained on partially obsolete data, and therefore incur linear regret. By contrast, the mutated models have a random chance of being perturbed into a direction that is more favorable to the current environment, and therefore have a better chance of breaking the linear regret bound and aligning with the 1:1 relation.

\begin{figure}[!t]
  \centering
  \includegraphics[width=\textwidth]{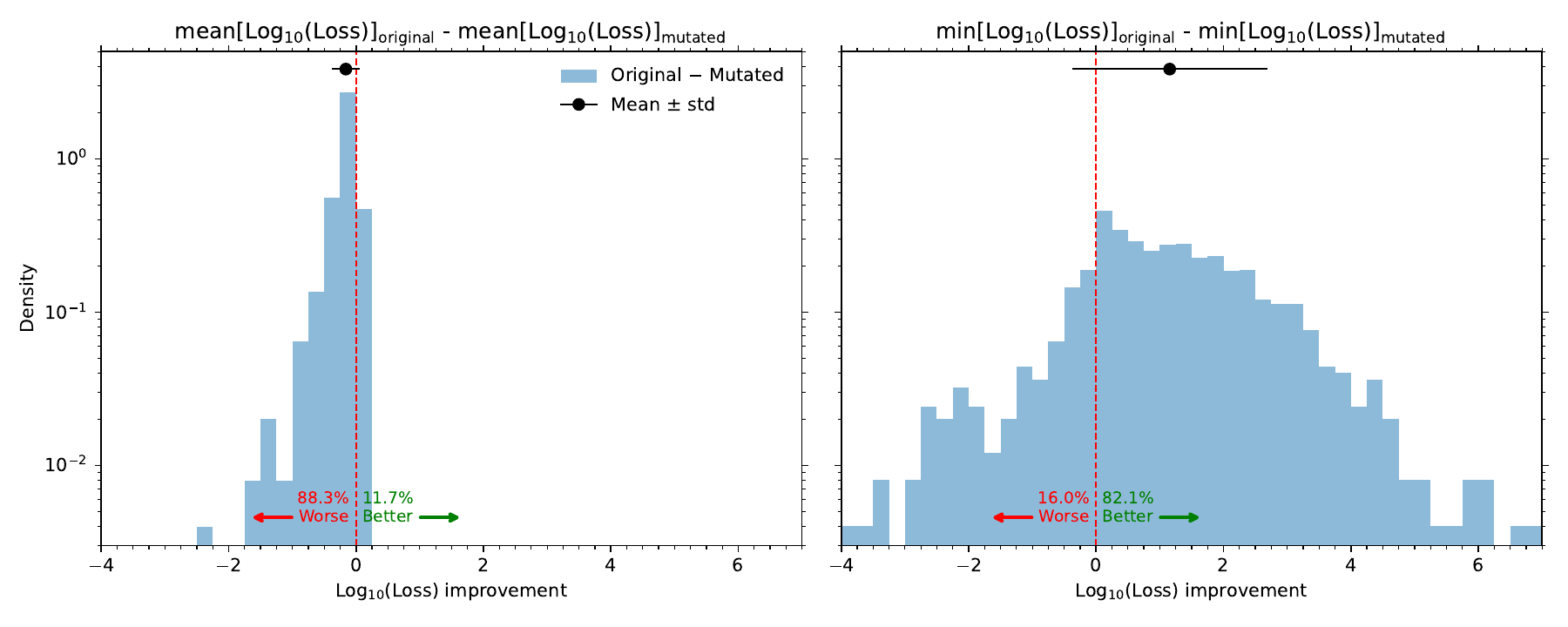}
  \caption{MSE log-loss improvement distributions of the original and mutated models. Shown are histograms of the differences in mean (left) and minimum (right) log-loss of the original and mutated model swarms. The data points with error bars indicate the mean and standard deviation of the improvement distribution. The left-hand panel shows that, on average, the mutated models exhibit worse performance than the original models, and that model mutations are thus not in the best interest of any individual model. However, the right-hand panel shows that the best mutated model typically outperforms the best original model, implying that a suitable inference synthesis mechanism that appropriately allocates weight to the best model will be able to benefit from the Flawed-in-Nature advantage offered by the mutated swarm.}
  \label{fig:loss_improvement}
\end{figure}
We quantify the resulting difference between the mean and minimum log-losses of the original and mutated models in \autoref{fig:loss_improvement}. The figure illustrates how the Flawed-in-Nature mechanism yields individual harm, but collective benefit. In the left-hand panel, we see that the log-loss improvement across all 1000 epochs is expected to be negative on average. This is because the ground truth represents a single point in a high-dimensional space, and there exist more dimensions along which perturbations may drive the model away from the ground truth than those along which they may improve the model's accuracy. Note that the mean is statistically significantly below zero -- the error bars represent standard deviations, hence their large extent. A convincing 88\% of the epochs exhibits a negative mean log-loss improvement. By contrast, the right-hand panel shows that the best mutated model typically outperforms the best original model. In 82\% of the epochs, the lowest log-loss achieved across the original swarm exceeds the lowest log-loss achieved across the mutated swarm. As we will see below, when a sufficient number of models are present, the best model in the mutated swarm will scatter closer to the ground truth than the best model in the original swarm. This means that a performant inference synthesis mechanism should be able to capture this advantage and yield a lower synthesized network loss.

Having considered the broad statistical properties of the original and mutated model swarms, we can now turn to the experimental demonstration of \autoref{thm:swarm_gap}. This theorem states (I) that model diversity lowers the expected post-jump gap, i.e.\ after a jump the expected excess loss (or risk) of the best model in the swarm is lower than the expected excess loss of the best original model, and (II) that the risk of the swarm tends to zero as the number of models approaches infinity. In order to evaluate this theorem, we use the default setup described in \S\ref{sec:design} and detect all epochs at which a jump occurs and the environment changes. For each jump, we then extract the losses of the best original model and the best mutated model at that time.

\begin{figure}[!t]
  \centering
  \includegraphics[width=\textwidth]{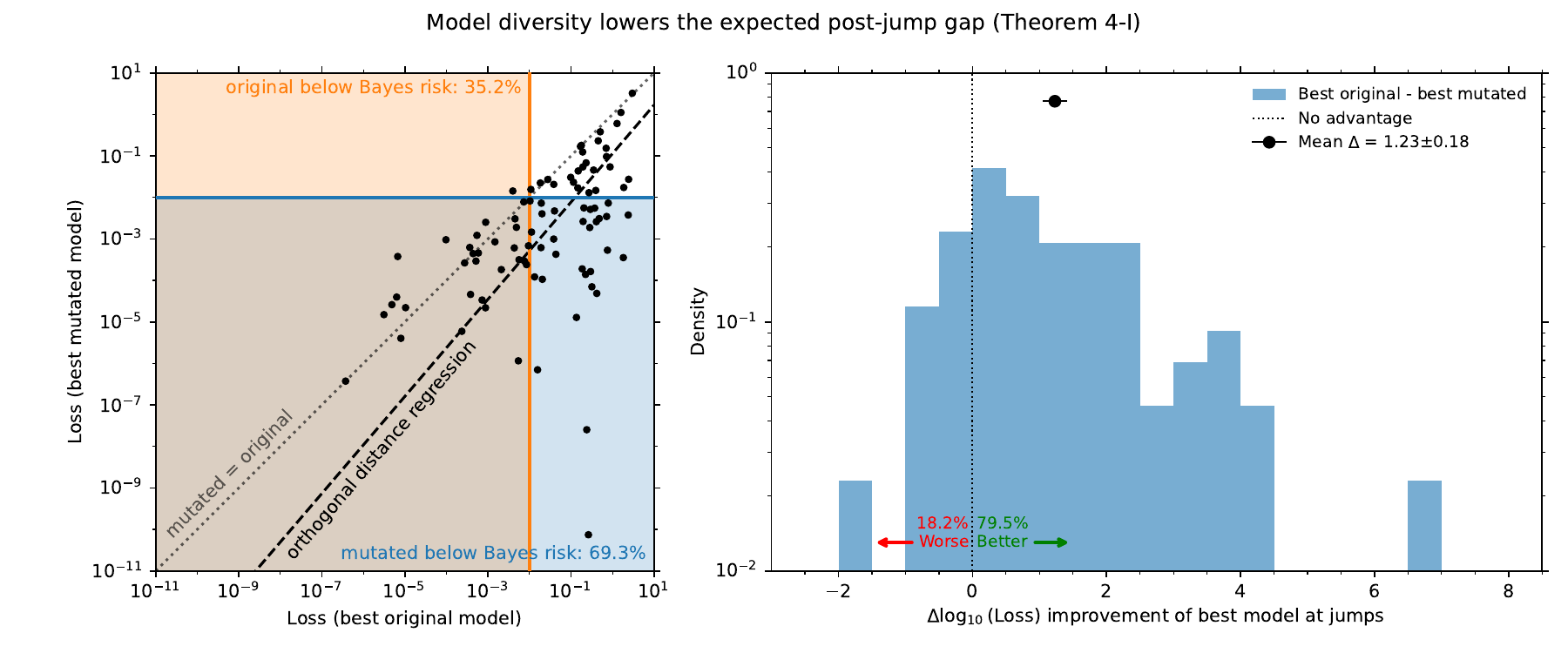}
  \caption{Experimental demonstration of Theorem 4 (I), i.e.\ model diversity breaks the linear regret bound shown in \autoref{fig:theorem2} by reducing the expected post-jump excess loss (`gap'). Left: MSE loss scatter plot of the best mutated model as a function of the best original model. The dashed line shows an orthogonal distance regression to the data. The shaded regions indicate models below the Bayes risk, with annotations indicating the percentage of points for each model swarm. Across all three of these visual indicators, the mutated models typically outperform the original models. Right: Histogram of the MSE log-loss improvement at jump times of the best mutated model relative to the best original model. The data point with error bar indicates the mean and standard error of the mean improvement. The mutated swarm delivers the best model for about 80\% of the jumps.}
  \label{fig:theorem4a}
\end{figure}
The result is shown in \autoref{fig:theorem4a}. In the left-hand panel, we show the losses of the best mutated model as a function of those of the best original model at jump times. We see that most data points lie below the 1:1 line, and indeed an orthogonal distance regression has a slope close to unity, but a considerable offset to lower losses for the mutated models, i.e.\ the best mutated models outperform the best original models on average. Both model swarms have some points below the Bayes risk (this is to be expected as a natural consequence of noise), but the fraction is considerably greater for the mutated models (69.3\% below the Bayes risk) than for the original models (35.2\% below the Bayes risk). The perturbations generated by the mutation process enable the models to incidentally move closer to the ground truth.

Why does this happen? Intuitively, at each jump the new Bayes predictor is an independent random point in parameter space. The original models all form a single cluster around the previous Bayes predictor, whereas the Flawed-in-Nature mechanism generates a cloud of such clusters perturbed away from the previous optimum. Because we pick the best of these mutated models, the probability that at least one mutated model lies very close to the new Bayes predictor (and thus achieves loss near the noise floor) is strictly higher than for the original model pool. This is quantified further in the right-hand panel of \autoref{fig:theorem4a}, which shows a log-loss improvement histogram of the best model as in \autoref{fig:loss_improvement}. However, this time the histogram only considers jump epochs, when the models have not had a chance to adjust to the environmental change. In other words, the improvement shown here exclusively captures the effect of the mutation process. The mean improvement (this time with error bars showing the standard error on the mean) is $6.8\sigma$ above zero, and the mutated swarm hosts the best model in 79.5\% of the jumps (70 out of 88). Note that this outperformance rate is a direct empirical measurement at the tested dimensionality and mutation parameters, and does not rely on the full-support condition of \autoref{thm:swarm_gap} being practically achieved across the full parameter space. These statistics unambiguously illustrate that model diversity does indeed lower the expected post-jump gap.

\begin{figure}[!t]
  \centering
  \includegraphics[width=0.67\textwidth]{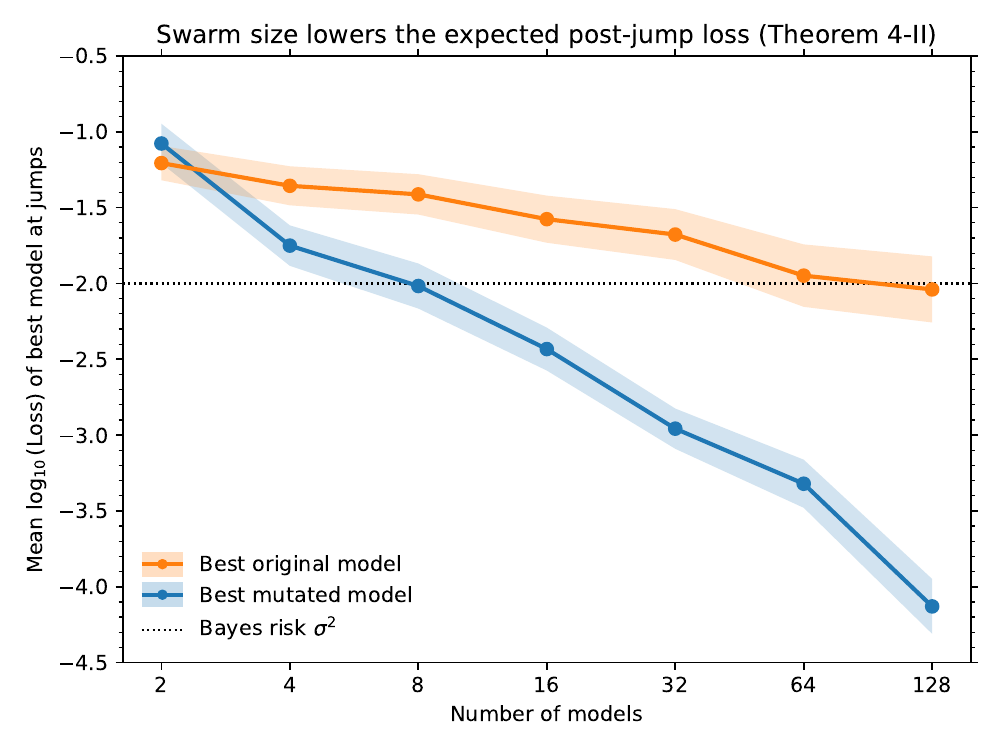}
  \caption{Experimental demonstration of Theorem 4 (II), i.e.\ the risk of a swarm of mutating models in a changing environment approaches zero as the number of models increases. Shown is the average MSE log-loss of the best model in the swarm at jump times as a function of the swarm's size, both for the mutated and the original swarm. The shaded region indicates the $1\sigma$ standard error on the mean across all jumps in each 1000-epoch run. The dotted line shows the Bayes risk of the environment, which is defined as the square of the noise in the target variable (see \autoref{fig:theorem1}). The swarm of mutating models quickly outperforms the Bayes risk and approaches zero as the number of models increases. This decrease is steeper than for the original swarm, which struggles to break the Bayes risk of the environment. This is in line with the assertion that the expected post-jump gap decreases with model diversity (see \autoref{fig:theorem4a}).}
  \label{fig:theorem4b}
\end{figure}
Finally, we consider the second part of \autoref{thm:swarm_gap}, i.e.\ that the risk of the swarm tends to zero as the number of models approaches infinity. To evaluate this theorem, we repeat the default setup and select the jump epochs. However, where previously we used a swarm of $N_{\mathrm{m}}=32$ models, we now vary the swarm size by integer powers of 2, from $N_{\mathrm{m}}=2$ to $N_{\mathrm{m}}=128$. For each swarm size, we calculate the mean log-loss of the best model in the swarm at jump times, as well as the standard error on the mean. The result is shown in \autoref{fig:theorem4b}. We see that both the best original model and the best mutated model achieve lower mean losses at jumps for larger swarms. For the original swarm, this is the result of sampling statistics, i.e.\ the minimum is expected to be lower if we draw from the parent distribution more often. For the mutated swarm, the loss decreases towards larger numbers of models for the additional reason that the mutations scatter some predictors closer to the Bayes predictor post-jump. Therefore, the decline of the best model's loss with $N_{\mathrm{m}}$ is steeper for the mutating swarm than for the original swarm.

The best model of either swarm can have an empirical mean loss below the nominal noise floor (Bayes risk). Each realized loss is the sum of the expected loss (its risk, which cannot be smaller than the Bayes risk for any individual model) and a noise term (which can push individual samples above or below the Bayes risk). By taking the minimum loss over all models at each jump and then averaging that minimum across epochs, we favor geometric proximity to the new Bayes predictor (which is more easily achieved by the mutated swarm, as explained above) and favorable noise realizations. As a result, the mean of the minimum loss can systematically lie below the Bayes risk, especially for large swarms. We indeed see this in \autoref{fig:theorem4b}. The mutated swarm breaks the Bayes risk of the environment for $N_{\mathrm{m}}\geq8$, whereas the best model in the original swarm requires 128 models to reach the Bayes risk. As the number of models increases, the loss of the best mutated model approaches zero. This confirms the limit $\rho_{N_{\mathrm{m}}}\downarrow 0$ as $N_{\mathrm{m}}\to\infty$ from \autoref{thm:swarm_gap}.

We observe that the standard error on the mean of the original swarm grows towards larger $N_{\mathrm{m}}$, which reflects sampling statistics. As $N_{\mathrm{m}}$ increases, improvements of the best original model require increasingly rare and extreme noise realizations, and therefore the jump-to-jump variability of the minimum loss increases. By contrast, the mutating swarm does not exhibit such an increase of the standard error with $N_{\mathrm{m}}$. This happens because the mutating swarm reduces the post-jump gap predominantly through systematic geometric effects (i.e.\ scattering some models closer to the new Bayes predictor in parameter space) rather than through rare, high-variance noise fluctuations.

\subsection{Inference Synthesis} \label{sec:results_inference_synthesis}
Given that the best model in the mutated swarm typically outperforms the best model in the original swarm, there exists an advantage to be captured by an inference synthesis mechanism. We now use the inference synthesis mechanism described in \S\ref{sec:inference} to synthesize the network inferences of the original and mutated model swarms, and quantify to what extent this mechanism is able to capture the Flawed-in-Nature advantage. To reiterate, the inference synthesis mechanism is a simple weighted average of the model inferences, with the weights calculated by passing the standardized EMA of the regret through a logistic gate. In the following, we apply this calculation to the model swarms considered in \S\ref{sec:results_swarm_statistics}.

\begin{figure}[!t]
  \centering
  \includegraphics[width=\textwidth]{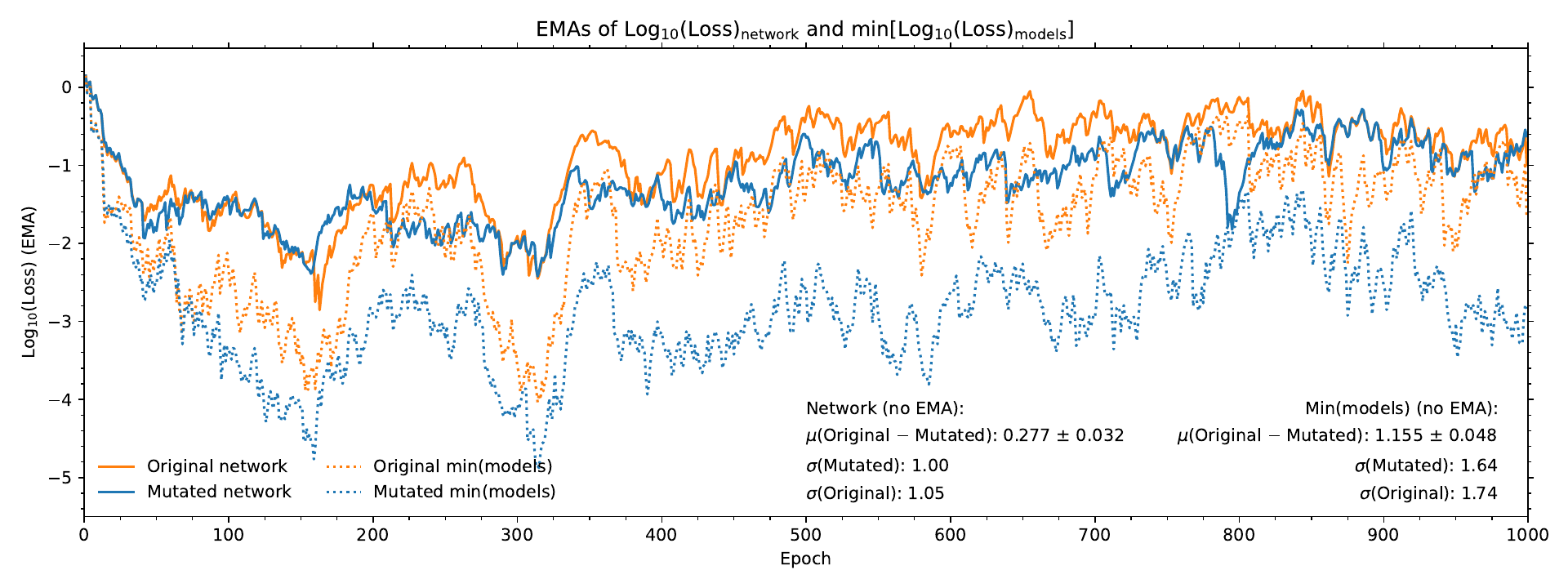}
  \caption{Exponential moving average (EMA) of the inference loss history for the original (orange) and mutated (blue) model swarms. The solid lines show the MSE log-loss for the synthesized network inference (calculated as in \S\ref{sec:inference}), and the dotted lines represent the MSE log-loss of the best model in each of the model swarms. As quantified in the statistics in the bottom right of the figure, the mutated swarm exhibits a smaller loss than the original swarm both for the network inference and the best model, illustrating that the adopted inference synthesis mechanism \citep{kruijssen24} is able to capture the Flawed-in-Nature advantage offered by the mutated swarm.}
  \label{fig:loss_history}
\end{figure}
\autoref{fig:loss_history} shows the EMA-smoothed log-loss trajectories of both network inferences and their best constituent models. After a burn-in period of about 200 epochs, the loss trajectories of the network inferences split and the mutated swarm systematically achieves a lower average loss than the original swarm. The mean log-loss difference is $0.277\pm0.032$, which is statistically significant at the $8.6\sigma$ level. The best models of the two swarms exhibit an even greater difference, which is unsurprising given that the mean log-loss improvement across the mutated swarm is negative (see \autoref{fig:loss_improvement}). As expected, the log-loss volatility of the network inferences is about 40\% lower than that of the best models, and the mutated swarm achieves marginally lower volatility (by about 5\%) than the original swarm. This experiment empirically illustrates that the inference synthesis mechanism of \citet{kruijssen24} captures the Flawed-in-Nature advantage, and even achieves a lower log-loss volatility despite the fact that the mutated swarm exhibits an elevated variance in model performance relative to the original swarm (see \S\ref{sec:results_swarm_statistics}).

\begin{figure}[!t]
  \centering
  \includegraphics[width=\textwidth]{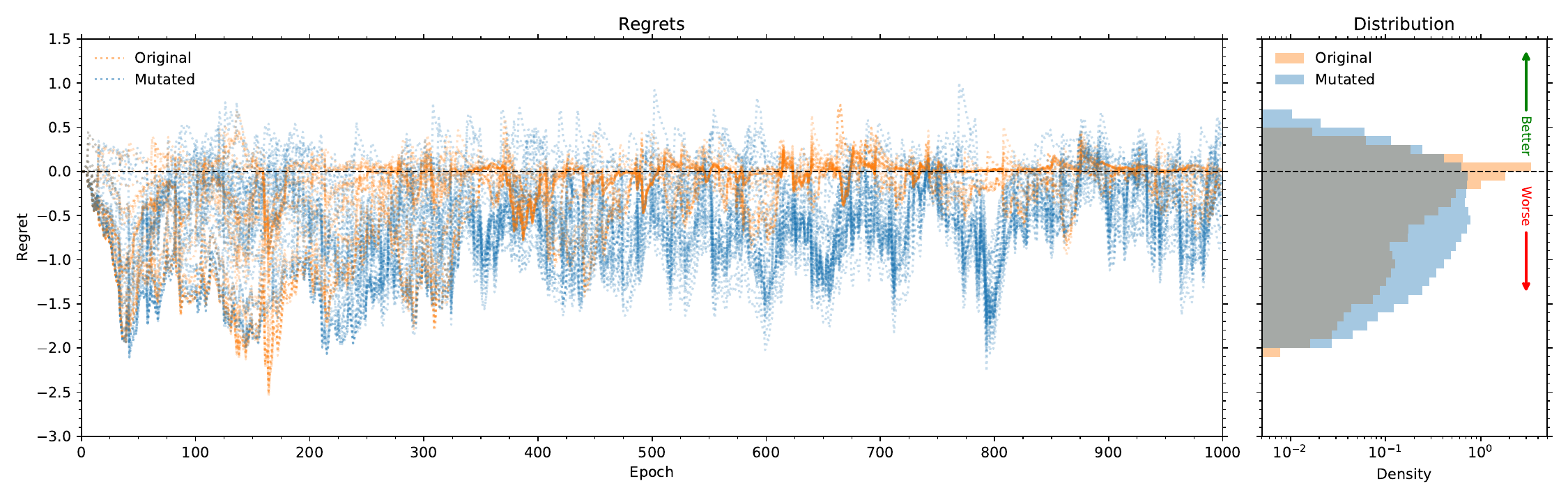}
  \caption{Regret evolution (left) and distribution (right) for the original and mutated model swarms. As in \autoref{fig:loss_history}, the orange lines show the original models, and the blue lines show the mutated models. Since the regret is defined from the perspective of the network inference (see \autoref{eq:regret}), models that outperform the network inference have positive regret. We see that the mutated swarm exhibits greater variance in regrets than the original swarm. While on average its regret is lower than that of the original swarm, it also hosts models with more extreme positive regrets, which are allocated greater weight by the inference synthesis mechanism through the logistic gate of \autoref{eq:g_x}.}
  \label{fig:regret_history}
\end{figure}
Having established the empirical outperformance of the network inference of the mutated swarm, it is worth investigating how this advantage originates. \autoref{fig:regret_history} shows the regret trajectories of the original and mutated model swarms, showing each individual model as a dotted line. Since the regret is defined from the perspective of the network inference (see \autoref{eq:regret}), a positive regret indicates that the model is outperforming the network inference. The histogram on the right-hand panel shows the time-integrated regret distribution across all models in both swarms, and demonstrates that the greater variance in log-loss improvement we identified in \autoref{fig:loss_improvement} also translates into a greater variance in regret. As a result, we find a lower average regret for the mutated swarm, but also more extremely positive outliers. These high-regret models receive disproportionate weight from the inference synthesis mechanism through the logistic gate of \autoref{eq:g_x}, thereby driving the network inference towards them. These outperforming models are visible as the many blue positive spikes in the left-hand panel.

In \S\ref{sec:advantage}, we demonstrated under which conditions the inference synthesis mechanism is able to capture the Flawed-in-Nature advantage, and specifically showed that the only way in which the synthesized inference can lag the best model is through the allocation penalty $(1-\hat{w}_{i\star})$, i.e.\ the weight not assigned to the best model. The post-jump behavior of the allocation penalty depends on the size of the gap between the best and second-best model. If the gap is large, the best model has a clear advantage and we can formulate an upper bound on the allocation penalty (\autoref{eq:norm_weight_monotone_2}). In this regime, the allocation penalty will exponentially decay towards a small value. This decay is not instantaneous, because the weights are derived from the standardized EMA of the regret. As shown in \S\ref{sec:advantage}, the decay should happen faster for steeper logistic gates (higher $p$), faster regret EMAs (higher $\alpha$), or a larger standardized post-jump advantage of the best model. If the gap is small, then the loss of the synthesized inference necessarily lies close to that of the best model, because the other models have a similar loss. In that case, the allocation penalty does not become small, but the synthesized inference would not benefit strongly from concentrating the weight on the best model.

\begin{figure}[!t]
  \centering
  \includegraphics[width=\textwidth]{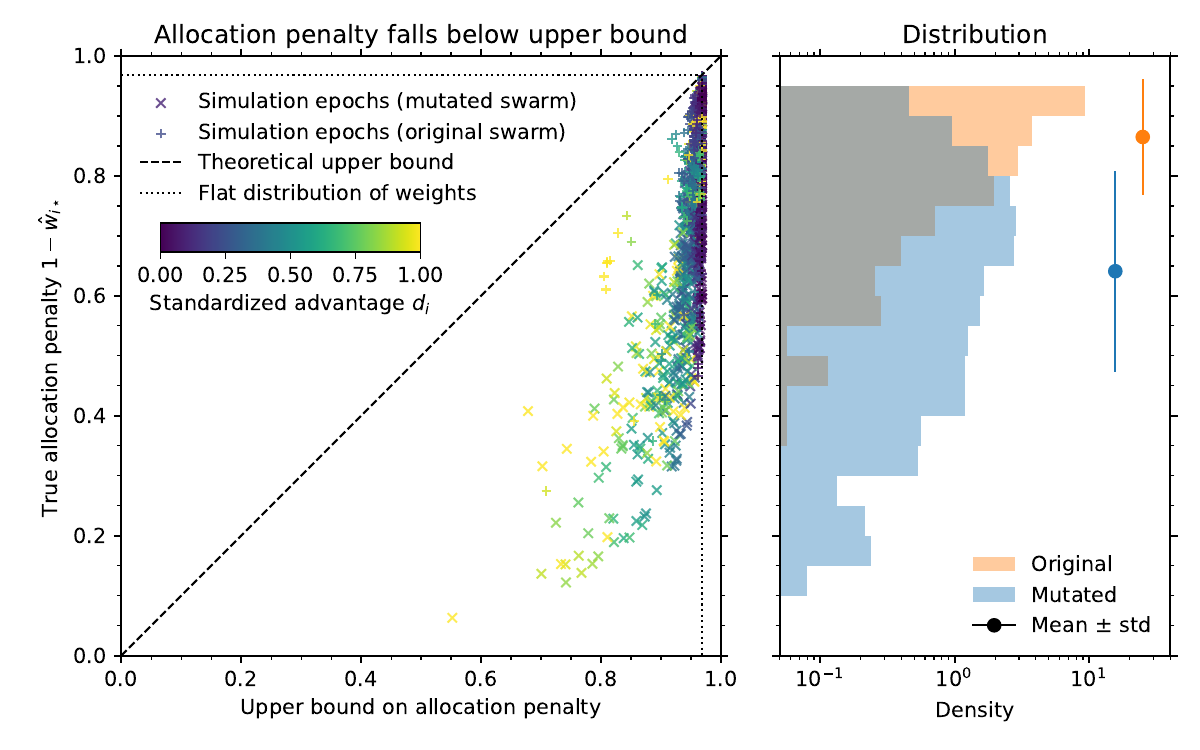}
  \caption{Experimental demonstration that the allocation penalty, i.e.\ the weight not assigned to the best model during inference synthesis, falls below the upper bound derived in \S\ref{sec:advantage}. Left: Allocation penalty at jump times as a function of its predicted upper bound, with colors denoting the standardized advantage of the best model relative to its closest competitor. The dashed line shows the upper bound and the dotted line shows the allocation penalty for a flat distribution of weights. We see that models with a greater advantage exhibit a smaller allocation penalty. Right: Histogram of the allocation penalties for the mutated and original swarms, with the data points indicating the mean and standard errors of both distributions. The allocation penalty falls below the upper bound for all jumps, indicating that the allocation penalty is appropriately bounded. The mutated swarm exhibits a smaller allocation penalty than the original swarm, which matches the result that the expected post-jump gap to the Bayes predictor decreases with model diversity (see \autoref{fig:theorem4a}).}
  \label{fig:allocation_penalty}
\end{figure}
We now evaluate the allocation penalty in our numerical experiment. We only consider the last 500 epochs to avoid any initialization effects during the first half of the experiment. \autoref{fig:allocation_penalty} demonstrates that, across the experiment, the actual allocation penalty is safely below the analytic upper bound from \autoref{eq:norm_weight_monotone_2}. The data points are colored by the standardized advantage, i.e.\ the gap between the best and second-best model, which illustrates that models with a greater advantage achieve smaller allocation penalties. In other words, the inference synthesis mechanism successfully identifies and concentrates its weight on these outperforming models. The distribution of allocation penalties in the right-hand panel shows that the mutated swarm achieves smaller allocation penalties on average than the original swarm. This is a direct consequence of the lower losses and higher regrets achieved by the best models in the mutated swarm, which results in a greater post-jump advantage. The left-hand panel confirms this further by showing a greater proportion of mutated models with high standardized advantage than for the original swarm.

In this experiment, mutation creates larger and more frequent gaps between models. \autoref{fig:allocation_penalty} shows that the inference synthesis gate exploits the larger model differentiation in the mutated swarm to concentrate more weight on its best model, whereas the original swarm behaves almost like a flat mixture. In that sense, mutation makes the synthesis mechanism operate more efficiently than it does on the original swarm.

We now test whether the half-life of the allocation penalty after a jump follows the theoretical estimate (\autoref{eq:decay-rate} and \autoref{eq:half-life-approx}). For each jump, we analyze the allocation penalty time series for the model favored by the logistic gate, i.e.\ the model with the largest standardized regret right after the jump. We then focus on a post-jump window that ends either when the next jump occurs or when a fixed horizon of 120 epochs is reached, so that the decay is measured within a single stationary segment rather than being contaminated by later jumps. Within each segment, we choose a reference epoch $t_0$ as the time at which the gate-best model's allocation penalty is maximal, i.e.\ it is clear which model is best, but the weight has not yet been concentrated. At that epoch, we calculate the observed separation $D$ as the log-loss gap between the gate-best model and its closest competitor at the same epoch, and we compute $\sigma$ as the standard deviation of the network regrets across all models from the preceding epoch. These quantities enter directly into the theoretical half-life prediction, together with the inference synthesis parameters $p$ (the gate's steepness) and $\alpha$ (the regret EMA parameter).

Empirically, we measure the half-life of the allocation penalty $(1-\hat{w}_{i\star})$ by following its evolution after each jump. The asymptotic floor is estimated from the median penalty in the final third of the observation window, and the target level is set halfway between the initial penalty and this floor. To ensure the threshold crossing is not distorted by upward fluctuations immediately preceding the crossing, we replace the raw penalty with its running minimum to obtain a monotonically non-increasing series. The empirical half-life is then taken to be the first time this monotonized trace falls below the target, with the exact crossing time obtained by linear interpolation between adjacent epochs. Finally, if the penalty never reaches the target before the next environmental jump, the half-life is set to the full window length rather than extrapolating beyond the available data.

The exponential decay analysis of \S\ref{sec:advantage} applies only to the regime where the post-jump advantage of the best model is large. To select the jumps in this regime, we require that (1) the gate-identified best model (i.e.\ by regret EMA) coincides with the model with the best instantaneous loss, (2) it exhibits a standardized log-loss advantage $D/\sigma >0.05$ relative to its competitors, (3) the initial allocation penalty exceeds its empirically-estimated floor (measured as the median of the final third of the segment) by at least 2\% (for a precision scale $\sim1/N_{\mathrm m}$) to ensure measurable decay, and (4) the segment contains at least 8 post-jump epochs. These criteria isolate clean episodes where the half-life estimation is meaningful.

Because only a small fraction of detected coefficient changes satisfy the criteria that isolate the clearly-best model and within-segment decay needed for a meaningful half-life estimate, we extend the simulation to 5000 epochs to accumulate enough qualifying events for both swarms to support the comparison. The original jump sample across 1000 epochs is reduced from 88 jumps to just 7 jumps, but for 5000 epochs it is reduced from 507 jumps to a meaningful 33 jumps for the mutated swarm, and 14 jumps for the original swarm. Applying the above criteria $(1,2,3,4)$ sequentially eliminates $(339,6,89,40)$ jumps for the mutated swarm, and $(249,170,57,17)$ for the original swarm. The most striking aspect of these cuts is that the original swarm rarely satisfies requirement (2), i.e.\ that the best model exhibits a sufficient standardized log-loss advantage. This is consistent with our persistent interpretation that mutations generate a greater swarm diversity.

\begin{figure}[!t]
  \centering
  \includegraphics[width=\textwidth]{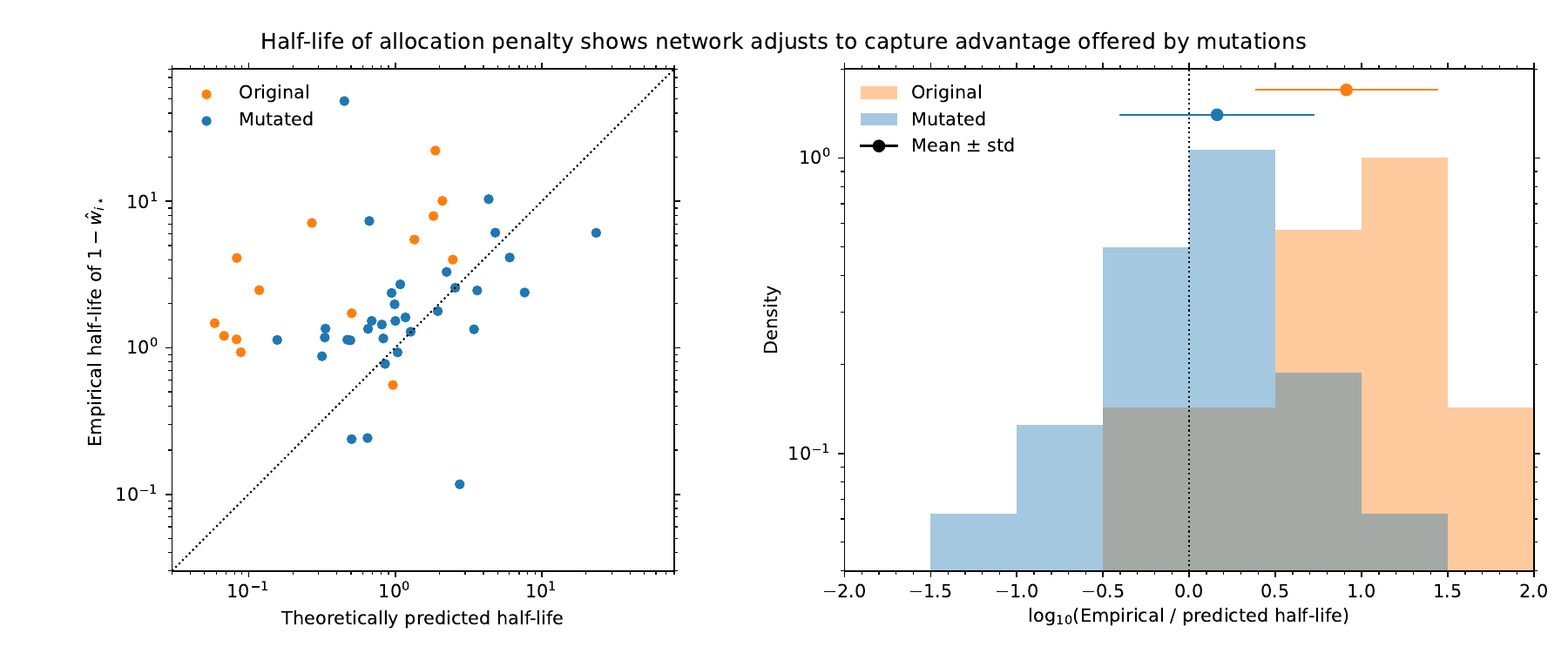}
  \caption{Experimental demonstration that the half-life of the allocation penalty qualitatively matches the theoretical expectation. Left: Half-life of the allocation penalty as a function of the theoretical expectation for the mutated (blue) and original (orange) swarms. The dotted line shows the 1:1 relation. Right: Histogram of the ratio between the measured and predicted half-lives for the mutated and original swarms, with the data points indicating the mean and standard deviations of both distributions. The mutated swarm has an allocation penalty that decays on a timescale roughly consistent with the theoretical expectation, whereas the original swarm shows no such agreement.}
  \label{fig:half_life}
\end{figure}
The remaining jumps that do not satisfy the filtering criteria are predominantly cases where no single model achieves a clear advantage after the jump, corresponding to Regime (B) in \S\ref{sec:advantage}. In this regime, models cluster in performance, so the worst gap $G_i^{\max}$ reflects the small spread among models rather than their absolute loss level. The excess loss bound in \autoref{eq:norm_weight_bound_3} then guarantees that the synthesized inference closely tracks the best available model, regardless of the weight distribution.\footnote{We note that this is a bound on the relative excess loss of the synthesis, not on its absolute performance.} The half-life analysis is therefore neither required nor appropriate for these jumps, as the synthesis mechanism's performance is guaranteed by the small gap rather than by rapid weight concentration.

\autoref{fig:half_life} shows the relation between the empirically-determined half-lives of the allocation penalty and the theoretically predicted values, as well as a histogram of the log-ratio between them, both for the original and the mutated model swarm. The figure empirically validates the theoretical half-life prediction from \autoref{eq:half-life-approx}, as the mutated swarm's measured half-lives cluster around the predicted values, whereas the half-lives of the original swarm show no such agreement and systematically exceed the predictions. Even among jumps with clear initial winners, the narrower diversity of the original swarm leads to less stable advantages that erode during the decay window, slowing weight convergence. This further affirms our conclusion from \autoref{fig:allocation_penalty} that model mutation increases the efficiency of the synthesis mechanism.

\begin{figure}[!t]
  \centering
  \includegraphics[width=\textwidth]{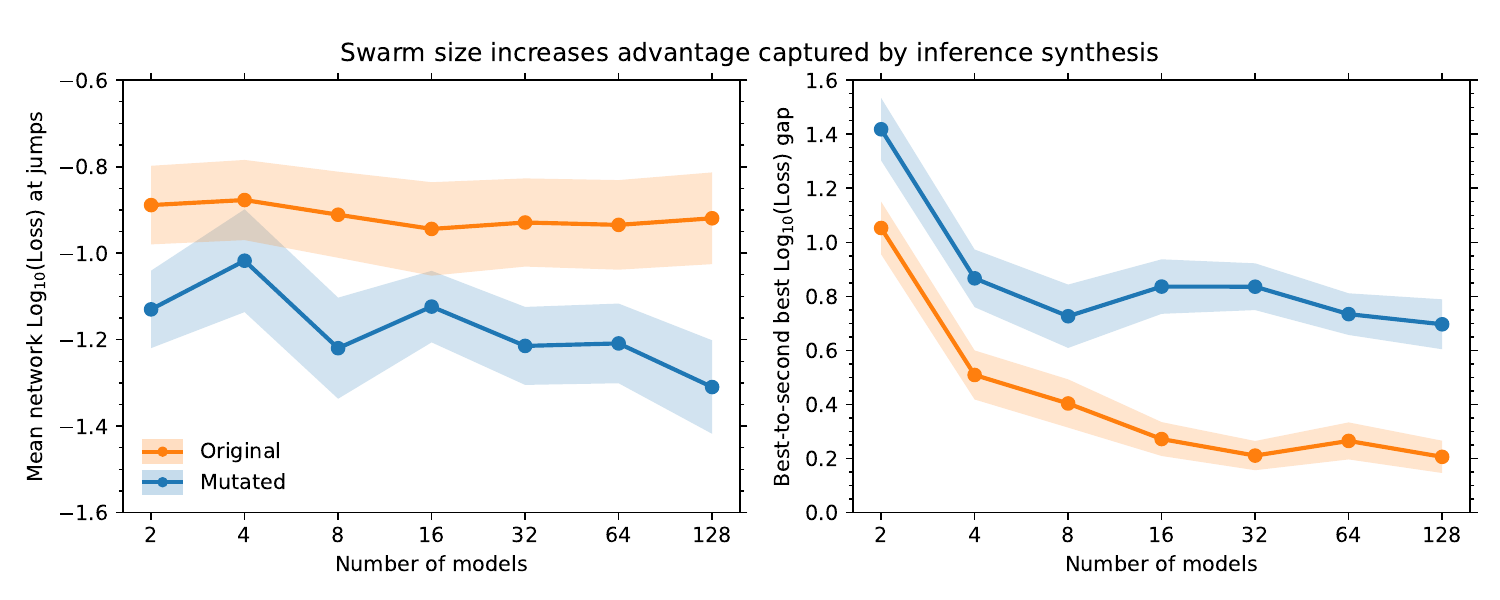}
  \caption{Experimental demonstration that the advantage captured by inference synthesis increases with increasing swarm size for the mutated swarm. Left: Repeat of \autoref{fig:theorem4b}, this time comparing the mutated and original network inference rather than the best models. Shown is the average MSE log-loss of the network inference at jump times as a function of the swarm's size. The mutated swarm outperforms the original swarm, and its loss decreases with increasing swarm size. Right: MSE log-loss gap between the best and second-best model as a function of the swarm's size. The mutated swarm exhibits a larger gap than the original, indicating a less crowded top of the loss distribution. In both panels, the shaded region indicates the $1\sigma$ standard error on the mean across all jumps in the 1000-epoch run. The figure shows that the mutated swarm samples a broader neighborhood in parameter space, and thus more frequently contains a dominant outlier after a jump, which provides a robust signal for inference synthesis to allocate most of the weight.}
  \label{fig:swarm_size}
\end{figure}
Finally, we revisit the scaling of the network's performance with the size of the swarm, which we first explored in \autoref{fig:theorem4b}. As before, we repeat the full setup over 1000 epochs and select the jump epochs, while varying the swarm size by integer powers of 2, from $N_{\mathrm m}=2$ to $N_{\mathrm m}=128$. For each of the two swarms, we calculate the mean network log-loss at jumps as well as its standard error. Additionally, we measure the log-loss gap between the best and second-best model at each jump.

\autoref{fig:swarm_size} shows both of these as a function of the number of models in the swarm. In the left-hand panel, we measure the accuracy of the network inference as opposed to the accuracy of the best model as in \autoref{fig:theorem4b}. As the number of models grows, the network's loss at jump times improves for the mutated swarm, whereas for the original swarm it remains roughly constant. This acts as a further demonstration that inference synthesis inherits the Flawed-in-Nature advantage, because it probes directly whether the synthesized inference itself benefits from increasing mutational diversity. Note that because the weights are computed from lagged regrets, the network loss at jump times reflects pre-shift weight allocations. Therefore, the observed advantage measures the extent to which inference synthesis was already concentrating weight on the mutated models that remain competitive immediately after a shift, rather than a post-shift reallocation response, which follows over the next few epochs and should amplify the advantage further, as shown in \autoref{fig:half_life}.

The right-hand panel confirms that the log-loss advantage of the mutated swarm from the left-hand panel is likely to amplify over subsequent epochs, because the winner margin $\overline{D}_i$ at jump epochs is greater for the mutated swarm than for the original swarm, regardless of the swarm size. When the advantage of the best model is greater, it is more cleanly separated from its nearest competitor, so that it is better identifiable and stable under noise. This way, the regret-based gate has a stronger and more stable signal to gravitate towards after the jump. The fact that $\overline{D}_i$ is greater for the mutated swarm indicates a less crowded bottom of the log-loss distribution (where we use $\overline{D}_i$ as a two-point proxy for that crowding). For both swarms, the best-to-second-best model advantage shrinks as the number of models increases, consistent with the increasing crowding that naturally follows from an increasing population density. However, the decline stagnates for the mutated swarm at $N_{\mathrm m}>4$, whereas for the original swarm it continues. The stagnation likely arises from the broader neighbourhood sampled in parameter space by the mutated swarm. Taken together, the trends shown in \autoref{fig:swarm_size} demonstrate that also the synthesized Flawed-in-Nature advantage increases with swarm size.

\subsection{Mutation Parameter Optimization}  \label{sec:optimization}
All experiments discussed so far are performed using the default environmental change and mutation parameters, i.e. $C_\mathrm{std} = M_\mathrm{std} = 0.1$ and $C_\mathrm{dt} = M_\mathrm{dt} = 10$. We now explore the effect of varying these parameters on the Flawed-in-Nature advantage, with the goal of identifying the optimal parameter set. To do so, we consider a grid of relative mutation timescales and magnitudes, each normalized to the environmental change timescale and magnitude, respectively. For each normalized parameter combination, we randomly generate the environmental change parameters from a log-uniform distribution in the ranges $\log_{10}{(C_\mathrm{std})} \in [-0.5, 0.5]$ and $\log_{10}{(C_\mathrm{dt})} \in [0.5, 1.5]$. The full grid is then generated to span 16 equidistant points across three orders of magnitude in each dimension, with $\log_{10}{(M_\mathrm{std}/C_\mathrm{std})} \in [-1.5, 1.5]$ and $\log_{10}{(M_\mathrm{dt}/C_\mathrm{dt})} \in [-1.5, 1.5]$. For each environmental change parameter combination, we run the default simulation of \S\ref{sec:results_swarm_statistics}, tracking two parallel swarms of 32 models each for 1000 epochs. In each case, we calculate the time-averaged log-loss improvement of the best model and the network inference.

\begin{figure}[!t]
  \centering
  \includegraphics[trim={1cm 0.5cm 2cm 1cm}, clip, width=0.497\textwidth]{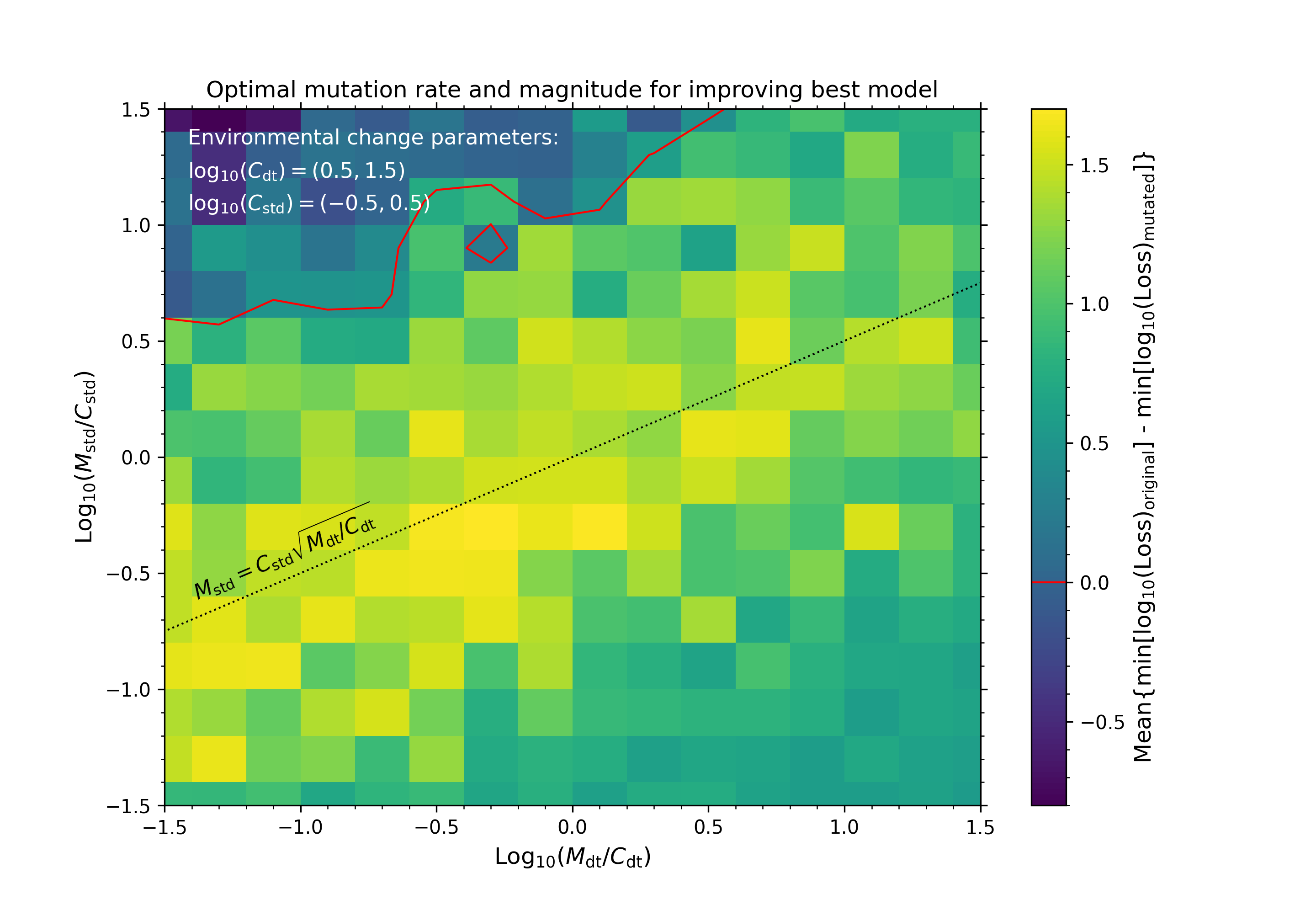}
  \includegraphics[trim={1cm 0.5cm 2cm 1cm}, clip, width=0.497\textwidth]{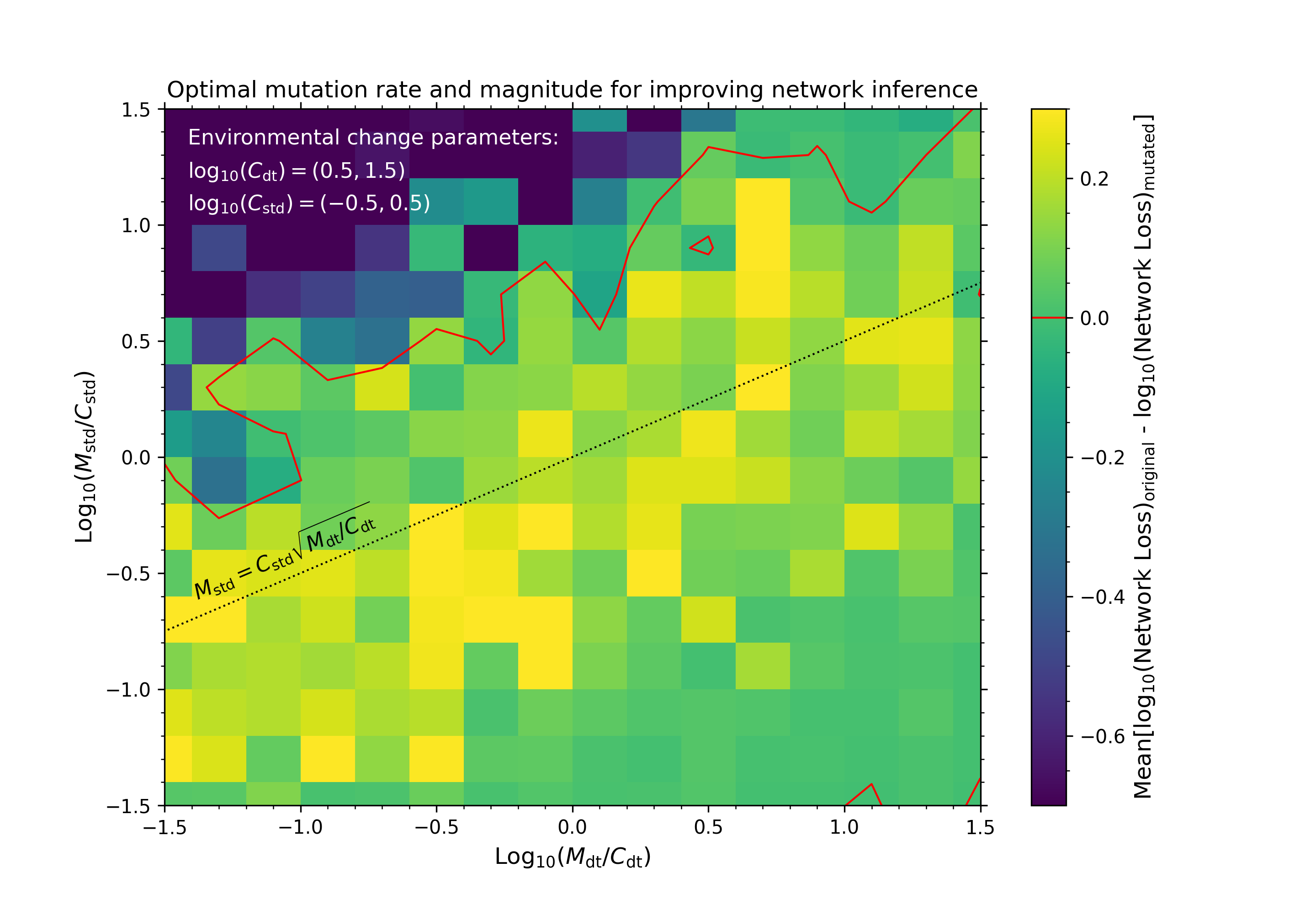}
  \caption{Optimal mutation rate and magnitude for the Flawed-in-Nature advantage. Shown are color maps of the mean log-loss improvement of the best model (left) and network inference (right) as a function of the relative mutation timescale ($x$-axis) and relative mutation magnitude ($y$-axis). The mutation parameters are relative, because they are scaled to the environmental coefficient change timescale and magnitude, respectively. The red contours indicate where the mutated and original swarms exhibit the same performance, i.e.\ where the Flawed-in-Nature advantage is zero. The black dotted line shows a naive theoretical expectation for a random walk process, where the mutation drift rate (proportional to $M_\mathrm{std}^2/M_\mathrm{dt}$) matches the environmental coefficient change rate (proportional to $C_\mathrm{std}^2/C_\mathrm{dt}$). This figure shows that the optimal mutation parameters where the Flawed-in-Nature advantage is maximized are indeed those where the mutated swarm mutates with a drift rate similar to the environmental evolution.}
  \label{fig:optimal_mutation_parameters}
\end{figure}
\autoref{fig:optimal_mutation_parameters} shows the mean log-loss improvement of the best model (left) and the network inference (right) as a function of the relative mutation timescale ($x$-axis) and relative mutation magnitude ($y$-axis), over three orders of magnitude in each dimension. We see that across most of the parameter space, the Flawed-in-Nature advantage is positive. Only when mutations are too large and too frequent does the mutation mechanism harm performance. This reflects a form of overshooting, where the mutated swarm mutates too quickly and too much, and therefore overshoots the new optimum.

The figure also shows a clear optimum along a rough linear relation. To understand the nature of this optimum, it is worthwhile to define the mutation drift rate $v_M$ and the environmental change rate $v_C$, i.e.
\begin{equation}
  v_M = \frac{M_\mathrm{std}^2}{M_\mathrm{dt}}, \quad v_C = \frac{C_\mathrm{std}^2}{C_\mathrm{dt}} .
\end{equation}
Each quantity reflects the usual scaling for the variance of displacement under a random walk after one time unit. The optimum in \autoref{fig:optimal_mutation_parameters} approximately follows a linear relation where $v_M = v_C$ (equivalent to $M_\mathrm{std} = C_\mathrm{std}\sqrt{M_\mathrm{dt}/C_\mathrm{dt}}$), i.e.\ the mutated swarm mutates with a rate similar to the environmental evolution. By matching the rate of drift, the best model can be expected to reside near the new optimum. Smaller mutation drift rates may move the best model in the right direction, but do so too slowly to reach the new optimum. Larger mutation drift rates may overshoot the new optimum, and therefore harm performance. The figure shows that the empirical optimum closely tracks this theoretical expectation.

Comparing both panels in \autoref{fig:optimal_mutation_parameters}, we see that the network inference benefits from similar optimal mutation parameters. This is consistent with the results from \S\ref{sec:results_inference_synthesis} showing that the network inference inherits the Flawed-in-Nature advantage from the best model. Due to the lagged response of the network inference, the advantage is not as pronounced as for the best model, but it is still present. For the network inference, the overshooting effect is more pronounced because the smaller advantage offers less room for overshooting. As a result, the optimal mutation parameters for the network inference favor a slightly slower mutation drift rate than the best model. This allows the network inference to avoid overshooting the new optimum.

\begin{figure}[!t]
  \centering
  \includegraphics[width=0.67\textwidth]{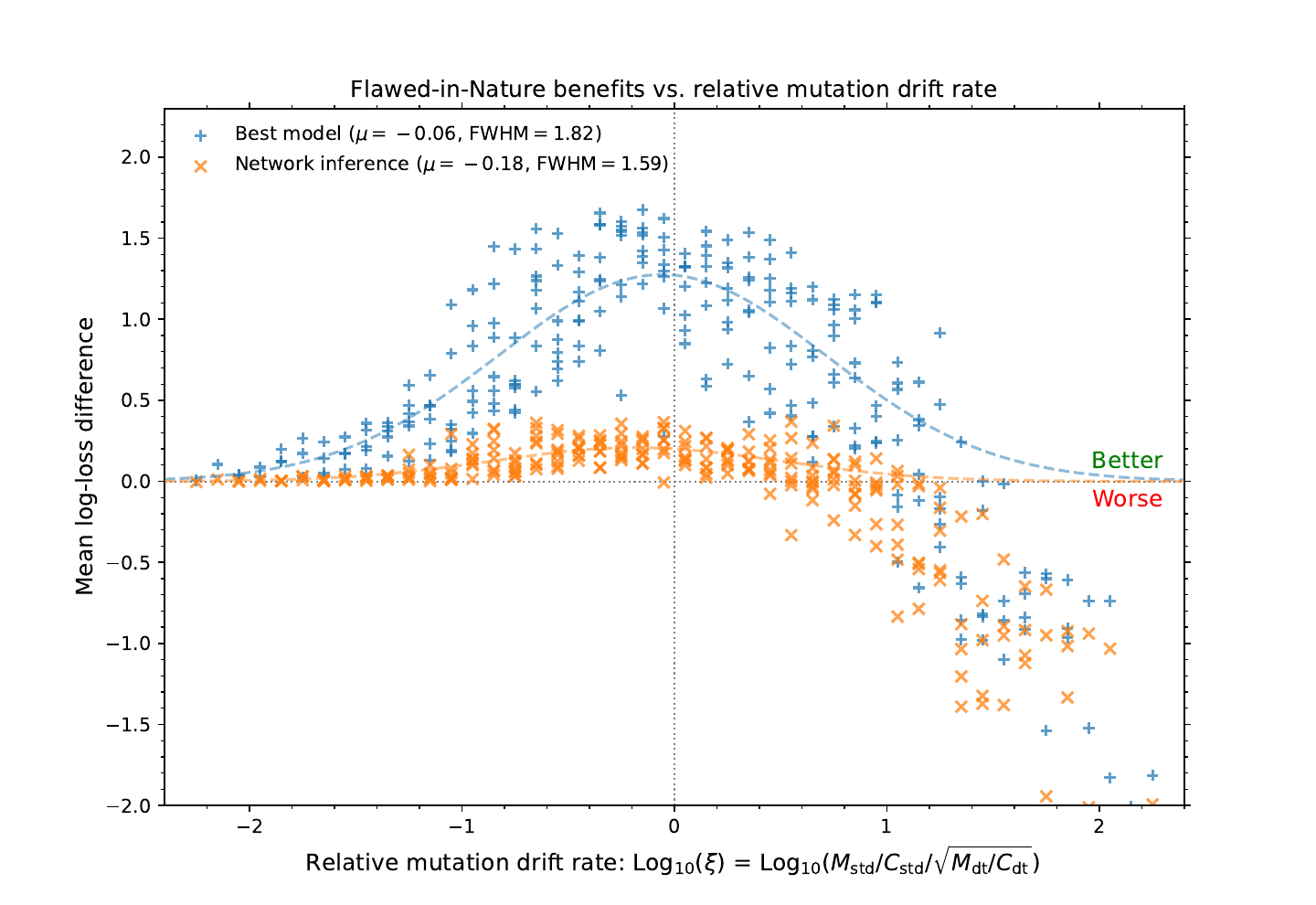}
  \caption{Mean log-loss improvement of the best model (blue plus symbols) and network inference (orange crosses) as a function of the logarithm of the relative mutation drift rate. Gaussian fits to the positive data points (dashed lines) are used to quantify the peak and width of the optimal regime, with best-fitting mean and full width at half maximum shown in the legend. The relative mutation drift rate $\xi$ is the square root of the ratio between the mutation drift rate and the environmental coefficient change rate, and expresses the distance orthogonal to the theoretical expectation for a random walk process. This one-dimensional collapse of the two-dimensional color maps shown in \autoref{fig:optimal_mutation_parameters} shows that the Flawed-in-Nature advantage is indeed maximized when the mutated swarm mutates with a drift rate similar to the environmental evolution, and only harms performance by mutating too quickly, but not by mutating too slowly.}
  \label{fig:flawed_in_nature_benefits}
\end{figure}
\autoref{fig:flawed_in_nature_benefits} projects the two-dimensional landscape of \autoref{fig:optimal_mutation_parameters} onto the one-dimensional relative mutation drift rate. The relative mutation drift rate is simply the square root of the ratio between the mutation drift rate and the environmental coefficient change rate:
\begin{equation}
  \xi = \sqrt{\frac{v_M}{v_C}} = \frac{M_\mathrm{std}}{C_\mathrm{std}}\sqrt{\frac{C_\mathrm{dt}}{M_\mathrm{dt}}} ,
\end{equation}
and expresses the distance orthogonal to the theoretical expectation for a random walk process in \autoref{fig:optimal_mutation_parameters}. The figure shows that the Flawed-in-Nature advantage is maximized near $\log_{10}(\xi) = 0$ (i.e.\ $v_M = v_C$), both for the best model and the network inference. Importantly, the benefit profile is asymmetric. The advantage plateaus for $\log_{10}(\xi) \ll 0$, showing that mutations that are too slow do not harm performance under the adopted, additive post-training mutation model, while it declines sharply and turns negative for $\log_{10}(\xi) \gtrsim 1$ through overshooting. This asymmetric benefit profile implies a conservative tuning strategy, wherein the model mutation drift should preferentially undershoot the environmental evolution. The optimal regime is broad, with a full width at half maximum of approximately $1.6$--$1.8$ orders of magnitude, suggesting robustness to parameter misspecification.

\section{Discussion} \label{sec:discussion}
We have presented the Flawed-in-Nature concept, wherein deliberate deviations from optimality across a population of models can yield collective outperformance. After providing a formal theoretical foundation in the form of mathematical proofs of four underlying theorems, we presented a simple synthesis mechanism that leverages the Flawed-in-Nature advantage to improve the performance of a network inference. We then validated the theoretical predictions and the synthesis mechanism through numerical experiments on a suite of synthetic linear regression problems. In this section, we discuss the limitations of these experiments, the practical applicability of the Flawed-in-Nature mechanism to realistic environments with unknown drift, and the impact of mutating AI/ML models on the future of swarm intelligence.

\subsection{Limitations of the Experiments} \label{sec:limitations}
The experiments presented in \S\ref{sec:experiments} are designed to isolate the core mechanisms that generate the Flawed-in-Nature advantage. This naturally implies some deliberate limitations on the scope of the experiments. In particular, the linear setting was chosen specifically because it enables a direct, interpretable comparison between the mutation drift rate and the environmental change rate, which is the central design parameter identified by the theory.
\begin{enumerate}
\item
The experiments are restricted to linear models, as all experiments are performed using ridge regression. While the theorems presented in \S\ref{sec:proof} and their proofs are stated in terms agnostic to the model class, the numerical validation is confined to a simple hypothesis class, and the degree to which coefficient-level mutations translate to productive exploration of the loss landscape in non-linear models (e.g.\ neural networks) remains an open empirical question. The restriction to linear models was made to enable the direct comparison between environmental changes and model coefficient mutations, and the extension to non-linear models is intentionally deferred to future work. To guide such extensions, we note that the linear setting possesses several properties that the experimental results depend on: (1) a linear mapping between coefficient perturbations and prediction changes, which gives the mutation drift rate a direct, interpretable relationship to the rate of change in the model's output; (2) a convex loss landscape without local minima, which ensures both that the optimal mutation parameters in \S\ref{sec:optimization} yield a clean optimum and that the synthesis bound in \S\ref{sec:advantage} holds; and (3) low dimensionality, which is discussed separately below. Of these, property (1) is the most consequential for non-linear extensions, and the sensitivity-scaling mechanism proposed in \S\ref{sec:practical} is specifically designed to address this point.
\item 
The adopted environment is defined by just four parameters. In higher-dimensional models, the volume of parameter space grows exponentially with the number of parameters, which means that the probability of finding a good model in a random search decreases exponentially. This would imply that exponentially more models would be required to cover the neighborhood of the new optimum after a jump. However, the loss landscapes of high-dimensional models are known in practice to exhibit low effective dimensionality, with gradients concentrating along a low-rank subspace \citep[e.g.][]{li18,gur-ari18}. Mutations that are scaled by the local gradient, as discussed in \S\ref{sec:practical}, would naturally concentrate perturbations along these sensitive directions, reducing the effective search space. Without such structure, the effective coverage of the mutation-induced spread (which is already limited at $p=4$ in our experiments, see the discussion following \autoref{thm:swarm_gap}) would degrade further with dimensionality.
\item
We have deliberately adopted a synthetic environment with known drift structure, where the environmental changes are driven by a Poisson process with a known rate and magnitude. Real-world environments generally deviate from this assumption and may exhibit continuous drift, correlated increments, or heterogeneous drift across dimensions, none of which are tested here. Again, the purpose of this simplification is to enable the direct comparison between environmental changes and model coefficient mutations.
\item
All models in the mutated swarm share the same mutation rate and magnitude. We control their values to compare the implied mutation drift rate to the environmental evolution rate. That is what allows us to calculate the relative mutation drift rate of each experiment. In practice, these parameters are unknown, and we experimented with setups where each model has its own mutation rate and magnitude. These experiments did not meaningfully improve the results, but greatly increased the variance among the mutated swarm. As a result, more models would be needed to achieve the same level of performance of the best mutated model. The uniform random sampling of mutation parameters across models is sample-inefficient, and the implied computational burden favors other approaches to handle the unknown optimal drift rate. In \S\ref{sec:practical}, we propose that this challenge can be addressed by a controller that dynamically optimizes the mutation parameters.
\item
The typical model swarm employed in the experiments consists of 32 models, and extends up to 128 models in the scaling experiments. This dynamic range is sufficient to highlight the scaling trends and satisfies reasonable computational constraints. However, we have not tested whether the Flawed-in-Nature benefit may potentially saturate in larger swarm sizes. \autoref{thm:swarm_gap} states that the limit on the expected post-jump gap is zero when the swarm size is infinite. While the theorem predicts continued improvement in the limit, the marginal benefit per additional model may diminish well before $N_{\mathrm m} \to \infty$, which represents a form of practical saturation we have not probed. Therefore, we acknowledge that saturation effects could be revealed in experiments extending to swarm sizes $N_{\mathrm m} \gg 128$.
\item
The experiments do not include dynamic model selection or pruning, only weight allocation through the inference synthesis mechanism. In practice, models that persistently underperform are likely to be removed from the swarm. In the presented experiments, we intentionally excluded this aspect, as it adds another layer of complexity and would likely increase performance, by removing models that are not contributing to the Flawed-in-Nature advantage. Nonetheless, there may exist a trade-off between weight dilution and model diversity, both of which would be reduced by pruning.
\end{enumerate}

\subsection{Practical Applicability to Environments with Unknown Drift} \label{sec:practical}
Realistic models and environments are likely to be more complex than the setup considered in our numerical experiments. Most importantly, knowledge of the dimensionality, rate, and magnitude of environmental changes is typically not available, yet \S\ref{sec:experiments} shows that the Flawed-in-Nature advantage requires matching mutation drift rate to environmental evolution. This is a general challenge for any machine intelligence approach that requires adaptation to unknown environments. Fortunately, the mutation dynamics discussed in this work enable a way to empirically match the mutation drift rate to the environmental evolution. We propose a simple controller that tunes mutation strength without estimating drift rate or magnitude explicitly. We sketch such a controller below, and defer its detailed implementation and validation to a companion paper.

In brief, the controller subdivides a single swarm into a small number of equal-sized drift rate groups, measures the Flawed-in-Nature utility, and uses the relation between the utility and the drift rate to generate empirically-motivated updates to the mutation drift rates of the groups. The advantage of this approach is that the updates of the mutation drift rate are guided by the local gradient of the utility function, are naturally subjected to adaptive temporal smoothing, and include protection against over-exploration. The iterative nature of this mechanism enables the controller to continuously align the mutation drift rate to the environmental evolution.

Practically, this controller may employ three equal-sized groups that operate at logarithmically-spaced drift rate levels ($v_{-}, v_{0}, v_{+}$), where $\log v_{\pm}=\log v_0 \pm \Delta \log v$. For each group, we can define the instantaneous utility by subtracting a (possibly softened) group-level minimum of the loss from the base loss of the original model. The utility is expected to be noisy, hence an EMA of the utility is used to obtain a stable estimate for each group. This results in a three-point relation between the utility and the drift rate, to which a quadratic function of the log-drift rate can be fitted. If there exists statistically significant curvature in the fit, a Newton-like step moves the mean drift rate of all groups towards the estimated utility optimum. If the curvature is insignificant, a gradient-ascent step follows the slope. The offset $\Delta \log v$ can be updated to adapt to the local curvature. If the groups enclose a utility optimum, the drift rate offset between the groups is decreased to zoom in on the optimum. Conversely, if the groups do not enclose a utility optimum, the drift rate offset between the groups is increased in order to cover a broader range of drift rates and discover the optimum more quickly. This simple controller specification conceptually addresses the unknown environmental evolution problem by ensuring that mutation diversity remains calibrated to exploit post-jump proximity to the new optimum.

The second key challenge in practical application of the Flawed-in-Nature mechanism is that the mapping of mutations to changes in the target variable (or loss landscape) is not as trivial as in the simple linear models used in the numerical experiments of this paper. If the model coefficients to be mutated are randomly selected (e.g.\ through a Bernoulli process), this can be addressed by calculating the empirical prediction sensitivity of the model output with respect to the mutated coefficient. The mutation magnitude can then be scaled by the square root of the parent group's drift rate (because $v$ parameterizes the target-space variance of the mutation-induced prediction change) and the inverse of the coefficient's sensitivity. This ensures that the mutation-induced variance of the target variable has a similar magnitude to the drift rate parameter.

This practical specification requires access to per-model prediction sensitivities and a baseline predictor. These are achievable when the model swarm is under centralized control. A swarm of decentralized models is unsuitable for controlled mutation without careful coordination, which would likely require a form of cryptographic guarantees to maintain model privacy. The complexity of that problem is far beyond the scope of this paper. Our aim here is only to establish a clear theoretical path to implementation. Further empirical validation of the controller, as well as the design of any possible decentralized coordination mechanisms, are deferred to future work.

\subsection{Impact of Mutating AI/ML Models on the Future of Swarm Intelligence} \label{sec:future}
The mechanisms described in this work depart from traditional ML optimization paradigms in a fundamental way. Traditional ML optimization focuses on individual models and needs to maximize the performance of each individual model under consideration. This naturally makes the optimized model vulnerable to environmental changes. The present paper demonstrates that the traditional optimization objective breaks down when considering the realistic scenario of a population of models under environmental changes.

The Flawed-in-Nature mechanism inverts the traditional optimization objective by deliberately introducing parameter diversity at the individual level to benefit the collective. The concrete result of this inversion is that each individual model is expected to perform worse, but the performance of the best model in the swarm (and therefore the collective) is expected to improve due to statistical hedging. The idea that deliberate deviations from optimality can yield collective outperformance mirrors biological evolution, where variation is the raw material for adaptation \citep{darwin59}. Indeed, the proven success of biological evolution demonstrates the enduring utility (i.e.\ evolutionary fitness) of statistical hedging.

This paper suggests a new design principle for multi-model systems. Rather than training a number of copies of an optimized model (e.g.\ using different random seeds), we demonstrate that it is beneficial to perturb the models themselves away from optimality and make them deliberately diverse. By synthesizing the outputs of the resulting model population, the Flawed-in-Nature benefit is retained. This way, model diversity becomes a primary design objective.

The idea that model diversity improves collective performance has roots in population-based training \citep{jaderberg17}, deep ensembles \citep{lakshminarayanan17}, Monte Carlo dropout \citep{gal16}, and model soups \citep{wortsman22}. However, these methods generate model diversity through hyperparameter variation, random initialization, transient inference-time noise, or independent fine-tuning runs, respectively. None involve deliberate, cumulative parameter perturbations. The Flawed-in-Nature mechanism is distinct in that it specifically calibrates mutations to track environmental drift, provides theoretical guarantees on regret reduction under non-stationarity, and preserves the full model population through soft, regret-weighted inference synthesis rather than hard selection or weight averaging. As discussed in \S\ref{sec:limitations}, model selection or pruning may amplify the Flawed-in-Nature advantage further.

Our findings have implications across a variety of domains. First, as AI inference moves towards decentralized architectures (e.g.\ federated learning, blockchain-based inference markets, or other distributed systems), the Flawed-in-Nature mechanism could offer a way to maintain collective adaptability. To accomplish this, the major challenges include adapting the controller mechanism from \S\ref{sec:practical} to trustless settings and designing incentives to reward contributions to the Flawed-in-Nature advantage rather than individual model performance. Specifically, decentralized inference networks such as Allora \citep{kruijssen24} already implement regret-weighted inference synthesis. The present work shows that layering a mutation mechanism on top of such networks can improve collective performance. Realizing this potential is an important future research direction.

Secondly, the Flawed-in-Nature mechanism changes the perspective on ensemble learning, both in traditional ML and in other AI domains such as reinforcement learning and large language models (LLMs). In conventional ensemble methods, diversity may arise randomly from variation in training data, random seeds, or hyperparameters, but the individual models are each independently optimized. The Flawed-in-Nature mechanism adds deliberate post-training parameter perturbation as an independent source of model diversity that is particularly effective under environmental changes. In reinforcement learning, the Flawed-in-Nature mechanism extends population-based methods by mutating model parameters directly rather than hyperparameters. It provides a theoretical basis for why such population diversity is beneficial in non-stationary environments, and identifies the matching of mutation drift rate to environmental drift as the key design parameter.

The insight may also extend to domains currently dominated by monolithic models, such as LLMs. In LLMs, the environment is inherently non-stationary (e.g.\ due to evolving language, emerging knowledge, or shifting user preferences), but retraining is prohibitively expensive. Parameter-efficient fine-tuning methods (e.g.\ LoRA adapters, see \citealt{hu22}) could act as a low-dimensional mutation layer on top of a shared base model, making it computationally feasible to maintain a swarm of deliberately diversified model variants. Combined with inference synthesis, such an architecture could enable dynamic adaptation to environmental changes without retraining the base model. More broadly, the common practice of concentrating resources into a single, maximally-optimized model represents exactly the type of vulnerability that is highlighted by \autoref{thm:linear_regret}. It is an open empirical question whether LoRA's low-rank constraint provides sufficient coverage of the relevant parameter directions for the Flawed-in-Nature mechanism to be effective, or whether it concentrates mutations too narrowly. The answer will depend on the alignment between the adapter subspace and the directions of the environmental drift.

The Flawed-in-Nature mechanism is also likely complementary to continual learning. Continual learning methods \citep[e.g.][]{kirkpatrick17,zenke17} protect important parameters from being overwritten during retraining, effectively partitioning the parameter space into consolidated (protected) and plastic (modifiable) directions. Mutations applied to consolidated parameters would be corrected at the next training step, while mutations applied to plastic parameters would persist. Rather than being a limitation, this interaction could focus mutations on the directions where exploratory diversity is the least costly to the model's existing knowledge. However, if environmental drift affects the consolidated parameters, this channeling would naturally be counterproductive, as the mutations would explore where the model is flexible rather than where adaptation is needed. The balance between these effects likely depends on the alignment between the consolidation structure and the environmental drift, and characterizing this interaction quantitatively is an important direction for future work.

Accomplishing the above goals is contingent on addressing several immediate open questions. Even when applying the controller setup from \S\ref{sec:practical}, how should mutation policies depend on the model architecture? What should reasonable model elimination criteria look like? What incentive structures might be created to encourage mutations despite the resulting individual performance cost across a swarm of models? The broad scope of these questions implies a rich future research agenda.

\section{Conclusion: Flawed in Nature, Perfect through Evolution} \label{sec:conclusion}
In this paper, we demonstrate a new design principle for multi-model systems, wherein deliberate deviations from optimality are shown to yield collective outperformance when the environment experiences unpredictable drift. To highlight the counterintuitive nature of this principle, we refer to it as \textit{Flawed in Nature, Perfect through Evolution}. It is rooted in biological evolution, where intelligence has arisen through the combination of heritable variation (expressed as model mutation in \S\ref{sec:mutation_def}) and natural selection \citep{darwin59}. We follow an analogous approach. While forms of natural selection have long existed in multi-model learning frameworks, to our knowledge this is the first time a modeling technique introduces deliberate, cumulative parameter mutations with theoretical guarantees on regret reduction under non-stationarity. In nature, mutations are thought to have played a critical role in enabling evolutionary adaptation and diversification. Analogously, we believe the Flawed-in-Nature mechanism could pave the way for organically achieving more intelligent systems.

The main conclusions of this work are as follows.
\begin{enumerate}
  \item We present four theorems that establish the Flawed-in-Nature mechanism. The argument starts from the observation that a single model in a static environment achieves the minimum risk within   
  its hypothesis class, i.e.\ mutations do not offer any advantage (\autoref{thm:erm_consistency}). However, when the environment experiences unpredictable drift, the single model incurs linear regret and thereby inevitably reaches obsolescence (\autoref{thm:linear_regret}). This regret is information-theoretically irreducible, since a single model is still conditionally optimal given its access to past observations only (\autoref{thm:info_optimality}). Only a swarm of mutated models, which are deliberately perturbed away from information-theoretical optimality, is shown to break this bound by reducing the expected post-jump excess loss through statistical hedging (\autoref{thm:swarm_gap}). (\S\ref{sec:proof})
  \item The Flawed-in-Nature advantage can be captured through a simple synthesis mechanism that aggregates the outputs of the model swarm into a single inference. There exist many aggregation methods, and we adopt a simple regret-weighted linear pool, which is shown to rapidly concentrate weight on the best-performing mutated model after an environmental jump, enabling the synthesized inference to inherit the swarm's diversity benefit. (\S\ref{sec:inference})
  \item We validate the theoretical predictions through numerical experiments on synthetic linear regression problems with Poisson-driven coefficient drift. The experiments empirically confirm all four theorems. The best model in the mutated model swarm outperforms the best original model in $\sim80\%$ of the environmental jumps. The adopted inference synthesis mechanism successfully translates the Flawed-in-Nature advantage into a collective benefit, as the mutated swarm outperforms the original swarm both in terms of the best model and the synthesized inference. This advantage increases with the number of models in the swarm. Finally, we establish that the optimal mutation parameters are those that match the environmental drift rate under a random walk process. A lower mutation drift rate does not harm performance, but too high a mutation drift rate does through overshooting. (\S\ref{sec:experiments})
  \item We outline a simple, adaptive controller that learns the optimal mutation strength online without needing to estimate the environmental drift rate or magnitude directly. The controller subdivides a single swarm into a small number of equal-sized drift rate groups, measures the Flawed-in-Nature utility of each group, and uses the relation between the utility and the drift rate to generate empirically-motivated updates to the mutation drift rates. This simple controller specification conceptually addresses the problem of unknown environmental evolution, by ensuring that the generation rate of mutation diversity can be optimally calibrated to achieve post-jump proximity to the new optimum. This controller concept establishes a clear theoretical path to the real-world deployment of the Flawed-in-Nature mechanism. (\S\ref{sec:practical})
\end{enumerate}

Introducing random variation into model parameters worsens the expected accuracy of any individual model, but benefits the swarm's collective performance. The idea that individual imperfection can yield collective perfection is the essence of the Flawed-in-Nature principle, and it offers a new perspective on swarm intelligence and decentralized AI systems. The use of random mutation as a form of experimentation in the face of an uncertain future closely resembles the evolutionary path through which biological intelligence likely arose. After all, when an organism achieves high fitness under natural selection, the mutation that granted it that fitness took place long before the selection event. Nature too performs statistical hedging.

Current discussions in AI research often revolve around the question which ingredients are still lacking in frontier AI systems. We suggest that it is not a specific ingredient that has been missing, but a process. Based on its close analogy to biological evolution, it is not unthinkable that the Flawed-in-Nature mechanism could enable the organic discovery of forms of machine intelligence that more closely mimic biological intelligence. We speculate that the future of AI will be dominated by swarms of countless, independent models rather than monoliths. Many of these models will be specialized, others will be general. They will mutate and evolve, giving rise to emergent lineages of models. The evolutionary mechanisms that led to biological intelligence may well extend to technology. And if they do, the short adaptation cycles of AI models will accelerate the process far beyond biological standards.

\section*{Acknowledgments}
JMDK thanks Bryn Bellomy, M\'elanie Chevance, Nick Emmons, Steve Longmore, Kenny Peluso, Joel Pfeffer, and Florian Stecker for helpful discussions and feedback.

\begin{sloppypar}
\bibliographystyle{ADI}
{\small
\bibliography{ourbib}
}
\end{sloppypar}

\end{document}